\newcommand{\lan}{{\langle}}
\newcommand{\ran}{{\rangle}}
\newcommand{\N}{\mathcal{N^+}}
\newcommand{\T}{\mathcal{T}}
\newcommand{\p}{\mathbf{Parent_{\Delta}^{min}}}
\newcommand{\R}{\mathbf{R_{\delta_{mut}}}}
\newcommand{\RR}{\mathbf{R^+_{\delta_{mut}}}}
\newcommand{\cO}{\mathcal{O}}
\newcommand{\oP}{\mathcal{P}}
\newcommand{\ancof}{\mathbf{AncestorOf}}
\newcommand{\eu}{\biguplus_{s \in \Sigma_{\T}} E_{s} \times E_{s+1}}
\newtheorem{ob}{Observer Abstraction}
\newtheorem{ax}{Axiom}
\newtheorem{df}{Definition}
\newtheorem{theorem}{Theorem}[section]
\newtheorem{lemma}[theorem]{Lemma}
\newtheorem{cor}[theorem]{Corollary}
\begin{document}

%\begin{frontmatter}

%\title{Observing Evolution in Artificial-Life Forms: A Computational Complexity Perspective}
%\author{TBD}
\title{Computational Complexity of Observing Evolution in Artificial-Life Forms}

\author{Janardan Misra \\ 
{\sf janardanmisra@acm.org}}
\date{}
%\ead{janmishra@gmail.com}
%\address{School of Computing, National University of Singapore}
\maketitle
%\newpage
%\tableofcontents

\begin{abstract}
%This paper presents a multiset based algebraic framework to characterize an implicit underlying component of studies on Artificial-Evolutionary systems (AES), namely the observation process, by which entities are identified and their possible evolutionary behavior is observed in a particular simulation. An observation process is defined as a algorithmic transformation from the underlying dynamic structure of the model universe to a tuple consisting of abstract components needed to establish the evolutionary behavior in the model. Starting with defining entities and their causal relationships observed during simulations of the model, framework prescribes a series of axioms (conditions) to establish the degree of life-like evolutionary behavior observed in the model. Framework is further analyzed in terms of computational complexity of the algorithmic implementation of the observation process and consequent automated discovery of the evolutionary behavior. Among other results of the analysis is the conclusion that the most basic element of any automated observation process, that is, entity recognition, itself is a NP hard problem. A sharp 
Observations are an essential component of the simulation based studies on artificial-evolutionary systems (AES) by which entities are identified and their behavior is observed to uncover higher-level ``emergent" phenomena. Because of the heterogeneity of AES models and  implicit nature of observations, precise characterization of the observation process, independent of the underlying micro-level reaction semantics of the model, is a difficult problem. Building upon the multiset based algebraic framework to characterize state-space trajectory of AES model simulations, we estimate bounds on computational resource requirements of the process of automatically discovering life-like evolutionary behavior in AES models during simulations. For illustration, we consider the case of Langton's Cellular Automata model and characterize the worst case computational complexity bounds for identifying entity and population level reproduction.     
%  
%- The difficulties in characterization
%- What makes this possible using our method
%- New results.
\end{abstract}
%\begin{keyword}
%% keywords here, in the form: keyword \sep keyword
{\textbf{Keywords}: Artificial Evolutionary Systems, Artificial-life Simulations, Observation Process, Algorithmic Discovery, Formal Characterization, Computational Complexity}
%\end{keyword}
%\end{frontmatter}
%\include{Paper_Contents}
\newpage
%%%%%%%%%%%%%%%%%%%%%%%%%%%%%%%%%%%%%%%%
%\include{background}
%%%%%%%%%%%%%%%%%%%%%%%%%%%%%%%%%%%%%%%%
\section{Background}

% [BEGIN] Commeted out as suggested in Alife review. [16-07-09] 
%The phenomenon of ``life" on earth is one of the most intriguing one with vast variety and complexity of forms in which it is found on multiple levels ranging from micro biological scale to higher taxa exhibiting a wide array of characteristics. Although we can explain several aspects of life around us in the light of existing theories of real-life evolution, we do not yet have a comprehensive understanding of the principles underlying the emergence of life and the conditions that led to the diversity and complexity of life on earth \cite{Fut98}. Experimental methods to understand biological processes are usually difficult and error prone because living systems are by nature complex in design and usually hard to manipulate. Evolution is even more difficult to study experimentally since experiments may span over several generations and are difficult to control.
% [END] 

Studies on \emph{Artificial Evolutionary Systems} (AES) are recent attempts to complement real-life theories to study the principles underlying the complex phenomena of life without directly working with the real-life organisms. For example, AES studies can complement theoretical biology by uncovering detailed dynamics of evolution where real life
experiments are not possible~\cite{Lenski07,Lenski03}, and by developing generalized formal models for life to determine criterion so that life in any arbitrary model can be observed.

Observations play a fundamental role both in real-life studies as well as in AES research. In case of real-life studies, observations are an integral part of the experimental analysis carried out to uncover the specific dynamics underlying the observed life forms and their properties. In astrobiological studies, though, it is still a challenging problem to detect life in any arbitrary molecular form~\cite{McKay04}. The lack of agreement in interpreting the possible presence of life on Mars with Viking Lander labeled release experiments is a case in example~\cite{levin81,Klein99,miller02}. On the other hand, in case of those AES studies, which focus on the problem of the ``emergence"' of  life-like behavior, in general there is no known method to even decide beforehand the kind of entities, which might emerge demonstrating such non-trivial life-like behavior, without closely observing the simulations. For example, Nehaniv and Dautenhahn~\cite{ND98} have argued that identification of time varying entities is a difficult problem in the context of formal definitions for self-reproduction and add that in absence of observers it is problematic to decide whether an instance of artificial self-replication be treated at all a life-like one. The exceptions occur only when a study starts with built-in design of the entities e.g., well separated programs, which can self replicate, mutate, and evolve according to the design~\cite{ac:Avida,ac:Tierra}. 

Often only informal discussions are presented in AES studies on the mechanisms employed by the researchers to discover the emergent entities and their life-like behavior. These discussions usually remain useful only to the specific models and do not always have the generic perspective. %We question whether these model-specific arguments are sufficient to support the presence of an extremely complex phenomenon such as evolution in general. 
Therefore an important aspect where AES studies demand increasing focus is to study observational processes and mechanisms used in AES studies in their own right resulting into a framework for \textit{automated discovery} of life-forms and their dynamics in the simulated environments. Whereas in case of real-life studies such  automatization might not be feasible in general, with AES studies involving mostly digitized universes and their simulations, it is actually desirable to explore by algorithmic means potentially varied possibilities which these simulations hold yet usually require such detailed observations that it may not always be feasible to carry out for human observers alone. Such an automated discovery  of life-forms and the evolving  dynamics may bring much promise in AES studies as compared to what could possibly be achieved only with manually controlled observations. 

An example of such an automated discovery of life forms is discussed by Samaya in~\cite{Samaya98b}. In order to observe the living loops in his CA model, another {\sf ``Observer CA"} system is designed  and embedded within the simulator software. The observer CA is capable of performing the complex image processing operations on the CA configuration given to it as an input by the simulator CA to automatically identify the living loops of different types. 

%Thus there appears a need to study observational processes and mechanisms used in AES studies in their own right. 
However, because of its implicit nature and the multitude of AES models, a precise characterization of the observation process is generally a difficult problem. Importantly any sufficiently generic framework for studying the observational processes needs to be defined independent of the low-level micro dynamics or the ``physical laws" of any specific AES model and should permit the study of higher-level observationally ``emergent" phenomena. Initial attempt in this direction appeared in~\cite{Misra06a,Misra07}. The central idea proposed in these studies is a semi-formal characterization of the observation process, which leads to abstractions on the model universe, which are consequently used for establishing the necessary elements and the level of evolutionary behavior in that model. In this paper we extend these ideas further and provide some key results including computational complexity theoretic analysis of the problem of algorithmic realization of the observational processes. 

A multiset based algebraization is proposed in \cite{Misra09a} to achieve an algorithmic characterization of the observational processes for the purpose of entity recognition in artificial-life models. We extend the framework in this work to account for the process of discovery of the life-like evolutionary behavior in AES models. Starting with defining entities and their causal relationships observed during simulations of a model, the extended framework prescribes a series of axioms (conditions) to establish the degree of life-like evolutionary behavior observed in the model. This formal characterization enables to reason about the automated discovery of the life-like entities and their evolutionary behavior, which otherwise is often discovered only manually. 

The framework is analyzed in terms of computational resource requirements of the algorithmic implementation of the observation process and consequent automated discovery of the evolutionary behavior in arbitrary AES models. Building upon the earlier results on the computational complexity bounds for the problem of entity recognition, we further characterize the bounds for an automated discovery of various evolutionary components including entity and population level reproduction with epigenetic developments in the child entities involving mutations, heredity, and natural selection. These computational complexity bounds are established distinguishing further between those AES models which allow entities with overlapping structures to coexist in a state and others which do not.   

The paper is organized as follows: In Section~\ref{chap:framework},  we will formally elaborate the framework followed by computational complexity theoretic analysis of it in Section~\ref{e-complexity}.  Section~\ref{chap:related} presents a discussion of related work, and is followed by concluding remarks in Section~\ref{sec:concluding}, including a discussion on the limitations of the framework and pointers for future research.

%%%%%%%%%%%%%%%%%%%%%%%%%%%%%%%%%%%%%%%%%%%%%%%%%%%%%%%%%%%%%%%%%%%%%%%%%%%%%%
%\include{framework}
\section{The Framework} \label{chap:framework}

\subsection{Preliminaries}

% [BEGIN] Commeted out as suggested in Alife review. [16-07-09] 
%First order logic (FOL)~\cite{fol} will be used as a primary tool to express the formal statements. The common logical operators from FOL used in the paper are: $\vee$ (or), $\wedge$ (and), $\neg$ (not), $\Rightarrow$ (implication), $\Leftrightarrow$ (if and only if), $\exists$ (existential quantifier), and $\forall$ (universal quantifier). Also in places commonly known programing pseudo code notations e.g., $if\ldots then \ldots$ would be used.  

Next we will briefly review the basics of multiset theory and the notions from the theory of computational complexity, which would be used in the formal exposition the framework. 

\subsubsection{Multiset Theory} 

A multiset is a collection of objects, which may contain multiple copies of its elements, e.g., $\{a, a, a, b, b\}$. Formally, a \emph{multiset} $M$ on a set $X$ can be considered as a mapping associating nonnegative integers ($\geq 1$) (representing multiplicity) with each element of $X$, $M: X \rightarrow \N$, where $\mathcal{N^+} = \{1, 2, \ldots\}$. $x \in^r M$ denotes that $x$ appears in $M$ with multiplicity $r$. Some of the operations on the multiset are defined as follows: For multisets $M_1: X \rightarrow \N$ and $\ M_2: Y \rightarrow \N$\\\\
\textit{Multiset Comparison}: Subset: $M_1 \subseteq M_2 \Leftrightarrow [X \subseteq Y] \wedge [\forall x \in X. M_1(x) \leq M_2(x)]$ and Equality: $M_1 = M_2 \Leftrightarrow [X = Y] \wedge [\forall x \in X. M_1(x) = M_2(x)]$\\\\
\textit{Join}: For multisets $M_1, M_2$ on $X$, $M_1 \uplus M_2 = \{(x, n) \mid x \in X \ \textit{and}\  n = M_1(x) + M_2(x)\}$ \\\\
\textit{Intersection}: $M_1 \cap M_2 = \{(z, n) \mid z \in X \cap Y\  \textit{and}\ n = min(M_1(z), M_2(z))\}$\\\\
\textit{Power Set}: For a multiset $M$ on $X$, $\mathcal{P}(M) = \{M'\ on\ X \mid M' \subseteq M\}$\\\\
\textit{Cartesian Product}: %$M_1: X \rightarrow \N, M_2: Y \rightarrow \N$, 
$M_1 \times M_2: X \times Y \rightarrow \N = \{((x, y), n)\mid x \in X,\ y \in Y, \textit{ and } n = M_1(x) * M_2(x)\}$. So, Cartesian product of a multiset  $M$ on $X$ with classical set $W$ is defined as $M_1 \times W: X \times W \rightarrow \N = \{((x, w), n)\mid x \in X,\ w \in W, \textit{ and } n = M(x)\}$\\\\
\textit{Size}: For a multiset $M$ on $X$, its size $|M| = \Sigma_{x \in X}M(x)$\\\\
\textit{Functions}: The mapping based definition of a multiset given above does not directly support the notion of a function on a multiset. Recall that for classical sets, a function is a special subset of the Cartesian product of the domain and the range sets such that each element of the domain set is associated with only one of the elements of the range set. This conditions when applied over the multisets defined as above cannot be easily generalized since for that we need to distinguish all the multiple copies of the individual elements. Therefore, we restrict our attention to defining a special class of functions, which will be used in the paper. The functions we consider are those which restrict the range to a classical set. Let us consider such a function $f: M \rightarrow W$, where $M$ is a multiset on $X$ and $W$ is the range set. $f$ satisfies the following constraint:\\
$$\forall x \in X.\exists W_x \subseteq W\ s.t.\ [|W_x| = M(x)]\ \wedge \ [\forall w \in W_x\ (x, w) \in f]$$ Informally, this constraint demands all the multiple copies of each $x \in X$ should be associated with different elements of $W$ by the function $f$. Notice that such a function itself is a classical set - this is an important property which will be used later. Consider for example a multiset $M = \{(a, 3), (b,2)\}$ as the domain and the set $W = \{\alpha, \beta, \delta, \lambda, \nu\}$ as the range of the function $f: M \rightarrow W$ defined as $\{(a, \alpha), (a, \beta),(a, \delta)$, $(b,\alpha), (b,\nu)\}$, which satisfies the above constraint with $W_a = \{\alpha, \beta, \delta\}$ and $W_b = \{\alpha, \nu\}$. A special subclass of these functions called \textit{\sf `one-to-one' or `injective'} functions further restrict that no two ordered pairs can have identical second elements in the pairs: If $f$ is `one-to-one' then for all $(x_1, w_1), (x_2, w_2) \in f$ we have $[w_1 = w_2] \Rightarrow [x_1 = x_2]$. For example, a one-to-one function $g: M \rightarrow W$ may be defined as $\{(a, \alpha), (a, \beta),(a, \lambda), (b,\nu), (b,\delta)\}$. \\\\
For further details on multiset theory, reader is referred to~\cite{Syropoulos01mathematicsof,ac:Wayne89,multiset_overview}.

\subsubsection{Computational Complexity}\label{pre-complexity}

The field of computational complexity studies issues related to the computational resources (e.g., CPU time, memory) required for the execution of algorithms and the inherent difficulty in designing algorithms for specific problems~\cite{complexity,algorithms}. For example, {\it time complexity} of a problem is the (minimum) number of time steps that would be required by any algorithm to solve an instance of the problem as a function of the size of the input. %(assuming that comparison between two elements in the list is a unit (cost) operation). %Similarly {\it space complexity} of a specific sorting algorithm, Quicksort, would measure the amount of memory required by the algorithm in terms of the size of the unsorted input list. 
When computational complexity for a specific problem is considered, necessary resource requirements for any algorithm solving that problem are measured indicating the inherent computational difficulty of that problem. On the other hand, when a specific algorithm for a problem is considered, computational complexity measures are only limited to that algorithm. 

Computational complexity measures are in general specified without explicitly considering the details of the actual underlying machine architecture and rather by assuming some abstract representation of it, e.g., Turing Machine model and RAM (Random Access Machine) model. These abstract models are known as the models of computation and computational complexity measures are specified only in reference to a model of computation. We will consider the RAM model as the underlying model of computation for the computational complexity theoretic discussions to be presented in this paper. RAM model is an example of the traditional von Neumann architecture of a sequential machine which can execute sequential computer programs and is computationally equivalent to the Turing machine model. 

Computational complexity theory often classifies problems into various complexity classes based upon their inherent resource requirements. A {\it computational complexity class} w.r.t. a specific computational resource is the set of all those problems which minimally require at least the amount of resource specified for that class. For example, the time complexity class {\it NP} is the set of all those problems for which a solution can be verified in polynomial number of time steps on a deterministic Turing machine model. Many important problems fall in this class including the Boolean formula satisfiability problem~\cite{complexity}, the protein folding problem~\cite{proteinfolding}, clustering problem, timetable design problem, staff scheduling problem etc. (see \cite{book:grey_npcpmplete} or the wiki page \cite{wiki:list_np_complete} for a detailed list of NP Complete problems.) An important technique used for proving that a problem $\mathcal{P}$ is hard w.r.t. a complexity class $\mathbf{H}$ is known as the {\it reduction method}, in which a relatively simpler translation (as compared to the computational requirements of the class $\mathbf{H}$) is defined, which reduces an input instance of some problem $\mathcal{P'}$, which is already known to be belonging to the class $\mathbf{H}$, into an instance of the problem $\mathcal{P}$.  

Asymptotic order notation known as $\cO(.)$ (big Oh) is often used to measure the bounds on computational complexity for algorithms and problems~\cite{algorithms}. If $f(n) = \cO(g(n))$, then $f$ is said to be upper bounded by $g$ for for all the positive values of the input of size $n$ after certain point. Formally, $$f(n) = \cO(g(n)) \Leftrightarrow \exists n_0 > 0 . \exists k > 0\ \textit{s.t.}\ \forall n > n_0 . |f(n)| < k|g(n|$$ Informally, it means, $f$ grows no faster than $g$. Also we have the following useful asymptotic properties~\cite[page 80]{levitin}: If $f_1(n) = \cO(g_1(n))$ and $f_2(n) = \cO(g_2(n))$,   
\begin{eqnarray}
f_1(n) + f_2(n) &=& \cO(g_1(n))+\cO(g_2(n)) = \cO(\max\{g_1(n), g_2(n)\}),\\ 
f_1(n) * f_2(n) &=& \cO(g_1(n))*\cO(g_2(n)) = \cO(g_1(n)*g_2(n))
\end{eqnarray}
A specific type of computational complexity analysis often considered is the  {\it worst case analysis}, which analyses the resource requirements for solving the worst possible input instance of the problem. Such analysis sets an upper bound on the resource requirements for any input instance of the problem. For example, for the problem of searching for a given item in an input list of size $n$, the worst case time complexity would be $\cO(n)$ for the case where the item to be searched is not present in the list. Since with arbitrary amount of resources any input instance for a computable problem can be solved, worst case analysis defines the lowest possible bounds for solving the worst case input instance. Sometimes, however, it is not possible to give exact (tightest) bounds for a problem under consideration, in that case, some relatively relaxed upper bound is given instead. %with the interpretation that solving that problem would not require more resources than what is specified by the upper bound. Over time such bounds are improved. 
The worst case analysis of a specific algorithm solving the problem provides one such {\it upper bound} on the worst case computational complexity of that problem itself since it demonstrates that those computational resources would be sufficient to solve any instance of the problem, though another better algorithm may exist requiring less resources.  

\subsection{The Formal Structure of the Framework}
\label{chap:formalstruct}

To illustrate the framework, we will use the examples of Cellular automata (CA) based Langton Loops~\cite{ac:Lan84} and $\lambda$ calculus based Algorithmic chemistry\cite{ac:Fon92}. Brief background on these models is given next.

John von Neumann defined CA~\cite{Neumann66} to explain the generic logic of self reproduction in mechanical terms. His synchronous CA model was a two dimensional grid divided into cells, where each cell would change in parallel its state based upon the states of its neighborhood cells, its own state and its transition rule. For this CA model, von Neumann defined a virtual configuration space where he demonstrated analytically that there exists some universal replicator configuration which could replicate other configurations as well as  itself. Though universal replicators are not found in nature and such self replicator was extremely large in its size, the underlying logic of treating states of cells both as `data' as well as `instruction' was a fundamental contribution of this model since it was also discovered later in case of real-life where DNA sequences specify both transcription as well as translation for their own replication in a cell. Another strength of von Neumann's formulation was its ability to give rise to unlimited variety of self replicators \cite{McMullin00a,McMullin00b}. Over the years this model was simplified and reduced in size considerably \cite[81-105]{Codd68}.

Finally Langton introduced loop like self replicating structures in~\cite{ac:Lan84}, which retained the `transcription - translation' property of von Neumann's model excluding the capability of universal replication and symbolic computation. Langton's original self-replicating structure is a 86-cell loop constructed in two-dimensional, 8-state, 5-neighborhood cellular space consisting of a string of core cells in state $1$, surrounded by sheath cells in state $2$. These loops have since then, been extended into several interesting directions including evolving Evoloops in \cite{Samaya98b}. 

Algorithmic Chemistry (AlChemy) was introduced in \cite{ac:Fon92} and further discussed in \cite{ac:FB94arrival,ac:FWB94,FB94twice,ac:FB96}. The main focus of AlChemy is to study the principles behind the emergence of molecular organizations through an approximate abstraction of real chemistry as $\lambda$ calculus with finite reductions. Starting with a random population of $\lambda$ terms (molecules), using different filtering conditions on reactions, authors describe the emergence of different kinds of organizations: \emph{Level $0$} organization consisting of a set of self copying $\lambda$ term(s) and hypercycles with mutually copying $\lambda$ terms, \emph{Level $1$} \emph{self maintaining} organizations consisting of $\lambda$ terms such that every term is effectively produced as a result of reaction between some other terms in the same organization and lastly \emph{Level $2$} meta-organization consisting of two or more Level $1$ sub organizations such that molecules migrate between these self maintaining sub-organizations. Authors also discuss an algebraic characterization of Level $1$ and Level $2$ organizations without referring to the underlying syntactical structure of the $\lambda$ terms (molecules) or the micro dynamics (reduction semantics and filtering conditions) governing the output of reactions. Further details on these models can be found in the above references.

In the ensuing discussion, we will use ``AES model" and ``model", ``Observation process" and ``Observer" interchangeably to add convenience in presentation. Similarly ``real-life" is used in the paper to refer to organic life on earth in contrast with the ``artificial-life". Also, \emph{Observer Abstractions} will refer to specific observations and corresponding abstractions made upon the AES model during its simulations. \emph{Axioms} are used to specify conditions which need to be satisfied in order to draw valid inferences e.g., recognition of entities and their causal relationships. The aim is to define these Axioms such that only valid claims for the presence of life-like phenomena in a model can be entertained. Auxiliary formal structures are also used in the intermediate stages of analysis. E.g., distance measure for determining dissimilarity between entities with respect to their specific characteristics.

\subsubsection{Observation Process and the Model Universe}\label{sec:obproc}

We first briefly summarize the framework presented in~\cite{Misra09a} for the purpose of entity recognition in artificial-life models. Next, we will extend it for the purpose of observing evolutionary components.\\\\
%An observation process is defined as a algorithmic transformation from the underlying universe of the AES model to a set of observed abstractions as follows:\\\\
\textbf{Observation Process.} {\em $\Gamma \mapsto_{Obj} \Pi$: An observation process $Obj$ is defined as a computable
transformation from the underlying model structure $\Gamma = (\Sigma_0, \Sigma, \delta)$ to observer abstractions $\Pi = (Abs_{ind},\ Abs_{dep})$,  where $Abs_{ind} = (E,\ \Upsilon,\ D,\ \delta_{mut},\ C)$ is the set of process independent abstractions and $Abs_{dep}$ is the set of process dependent abstractions. }\\\\
%The condition of computability is to ensure that the framework is feasible~\cite{Bedau99}, that is, the observation
%process only involves feasible computable steps, which can also be algorithmically programmed by the designer of the model and that infeasible observations defined in terms of non verifiable claims (e.g., `meta - information' based claims) can be avoided. We will define $\Gamma$ and $Abs_{ind}$ in a generic form while $Abs_{dep}$ will be illustrated  by considering evolutionary behavior by a population of entities as the defining characteristic of life.\\\\
\textbf{States}. {\em $\Sigma$: set of observed states\footnote{We will use $s, s', s_1, s_2, \ldots$ to denote individual states.} of the model across simulations.}\\
%\subsubsection{The model}
%\label{sec:chem}

%State\footnote{We will use $s, s', s_1, s_2, \ldots$ to denote individual states.} is  the most basic element needed for characterizing any observation process. Definition of an observed state ultimately determines what can be 
%inferred using an observation process. 
In general a state could be considered as a collection of all observable atomic structural elements and their (observationally relevant) characteristics in the model at any given time point during the simulation. A multiset xan be used to represent state of a model at any instance during its simulation. \\

\textbf{Observed Run}. {\em $\mathcal{T}$: set of observed sequence of states ordered with respect to the temporal progression of the model during its simulation. Each such sequence represents one \emph{observed run} of the model. A sequence of states is formally represented as a partial mapping $\T : \Sigma \rightarrow N$, where $N$ is the
set of non negative integers acting as indexes for the states in the sequence.}\\

%A temporally ordered state sequence\footnote{{\it Notation:} For a state $s$, $s-1$, and $s+1$ would denote the states just before and after $s$ in a state sequence. In discussion, attention would be limited to only those states $s$, for which next state $s+1$ is present. A state subsequence $\tau$ of $\T$ represented as $\tau \preceq \T$ could contain subset of consecutive states from $\T$.} is one of the basic building blocks in the framework upon which all other observed abstractions are made. Such a definition of a run implicitly assumes that the framework is fundamentally based upon the dynamic simulations of the model and not upon static analytical inferences. This is in accord with the notion of ``weak emergence"~\cite{ac:BMPRAGIKR00}, which is a generic characteristic of most of the AES studies. 

Let $\Sigma_{\T}$ denote the set of states appearing in a specific run $\T$. An important consideration while defining a run $\T$ is to decide the temporal granularity. The granularity would determine which states an observation process is observing (or is able to observe) and that would in turn determine to what extent life-like behavior could be potentially observed by the observation process.

%For example, in case of Langton's CA model, a sequence of configurations observed in temporal order by the observer starting from some specific configuration will be considered as $\T$ and $\Sigma_{\T}$ would denote the set of all different configurations in $\T$. On the other hand, in case of AlChemy, since every non elastic reaction results into introduction of an output $\lambda$ term into the soup and possible removal of some other randomly chosen terms, it is natural to consider such succession of states after every reaction step as a state sequence $\T$.

$\Sigma$ and a set of observed runs $\T{_1}, \T{_2}, \ldots$ thus define the observed dynamic structure of the model as a state machine $\Gamma = (\Sigma_0, \Sigma, \delta)$ with respect to a given observation process. Where 
\begin{itemize}
\item $\Sigma_0 \subseteq \Sigma$ 
is the set of starting states, i.e., $\forall s \in \Sigma_0 . \exists \T_i \ 
\textit{such that}\ \T_i(s) = 0$. 
\item $\delta \subseteq \Sigma \times \Sigma$ is the 
transition relation between states, i.e., $(s, s') \in \delta \Leftrightarrow 
\exists \T_i \ \textit{such that}\ \T_i(s') = \T_i(s) + 1$. 
\end{itemize}

\subsubsection{Entities and Their Characteristics}
\label{sec:observe}

\begin{ob}[\textbf{Entity Set}]
$E_{s}$: Multiset of entities observed and uniquely identified by the observer in a state $s$ of the model for a given run $\T$. $$E_{\T}  = \biguplus_{s \in \Sigma_{\T}} E_s$$ is the multiset of entities observed and uniquely identified by the observer across the states in the given run $\T$.
\end{ob}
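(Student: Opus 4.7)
The plan is to verify that $E_{\T}$ is a well-defined multiset whose elements faithfully tally the entity instances observed across the run $\T$, with multiplicities reflecting cumulative counts over all visited states. The verification decomposes naturally into three stages matching the construction itself.

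First, I would check that each $E_s$ is a well-defined multiset in the sense of the preliminaries. Because $\Gamma \mapsto_{Obj} \Pi$ is a computable transformation and the observer ``uniquely identifies'' each entity within a given state, there is a well-defined finite index set $X_s$ of entity types in $s$ together with a mapping $X_s \to \N$ assigning each type its in-state multiplicity. Two instances in $s$ that the observer cannot distinguish are pooled into the same index and together raise the multiplicity at that index; this is exactly the behaviour of a multiset on $X_s$ as defined in the preliminaries.

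Second, I would assemble $E_{\T}$ via the indexed join $\biguplus_{s \in \Sigma_{\T}} E_s$. Using the binary join recalled earlier, extended via the pointwise identity
\[
\left(\biguplus_{s \in \Sigma_{\T}} E_s\right)(x) \;=\; \sum_{s \in \Sigma_{\T}} E_s(x),
\]
the result is a mapping from $\bigcup_{s \in \Sigma_{\T}} X_s$ into $\N$. Associativity and commutativity of addition in $\N$ guarantee that the iterated join is independent of the order in which states are aggregated, so $E_{\T}$ does not depend on any specific temporal traversal of $\Sigma_{\T}$. Finiteness of $\Sigma_{\T}$ (or, for an ongoing run, pointwise finiteness of the sum) ensures that the definition is meaningful.

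The main obstacle I anticipate is the implicit identification issue lurking in the very notion of ``observed and uniquely identified.'' The join $\biguplus$ aggregates state-by-state: an entity that persists across $k$ consecutive states contributes $k$ to the tally in $E_{\T}$, rather than a single count. Justifying that $E_{\T}$ is the intended abstraction therefore requires committing to a strictly state-local reading of ``unique identification'', so that trans-temporal identity is deferred to the later machinery (e.g.\ the causal/ancestor relations foreshadowed by $\ancof$ in the macros). I would close the proof by explicitly adopting this state-local interpretation, remarking that it is what makes $E_{\T}$ a multiset over appearances rather than a set over trajectories, and noting that any quotienting by persistence or parentage is the job of subsequent Observer Abstractions rather than of this one.
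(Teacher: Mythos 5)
What you have been asked to ``prove'' is not a theorem but a definition: in the paper this is Observer Abstraction~1, and the paper supplies no proof for it, so there is nothing to compare your argument against. That said, your reading of the definition is sound and consistent with how the paper actually uses $E_{\T}$: each $E_s$ is a multiset in the sense of the preliminaries, the indexed join is the pointwise sum of multiplicities (well defined, order-independent), and---crucially---identification is strictly state-local, so an entity persisting through $k$ states contributes $k$ appearances to $E_{\T}$. One small correction to your closing remark: in the paper the machinery that recovers trans-temporal identity of a persisting (possibly mutating) entity is the tagging function together with the recognition relation $\mathbf{R_{\delta_{mut}}}$ (Axioms 4--6), not primarily $\ancof$, which is built later from $C$, $\mathbf{R_{\delta_{mut}}}$ and $\Delta$ to capture descent rather than persistence. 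Also note that since a run is formalized as a partial map $\T:\Sigma\rightarrow N$, the join ranges over the set $\Sigma_{\T}$ of distinct states of the run, so the only multiplicities being aggregated are the within-state ones; your finiteness caveat is the right one to make for ongoing runs. In short: your verification is correct as a well-definedness check, but the paper treats this as a primitive observer-supplied abstraction constrained only by the subsequent axioms (unique identification, soundness, non-ignorance), not as a claim requiring proof.
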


The criterion to select the set of uniquely identifiable entities in a given state of the AES model is entirely dependent on the observation process as specified by the AES researcher. %Thus for the same set of simulations of a model, there may exist very different observed states as well as entity sets. Defining a sound criterion to identify entities often requires a careful attention since arbitrariness in defining entities might well lead to the problem of false positives as discussed later (see Section~\ref{sec:limit}.)
%Nonetheless, same observation process must not yield different sets of entities in two identical states.

As an example, consider the case of two dimensional CA lattice based model. A cell in the lattice is represented as $\lan(x, y), i\ran$, where $(x, y)$ is the coordinate and $i \in [0..7]$ is the state of the cell. When a cell is in state $0$, it is also known as a \emph{quiescent} cell. Let $Cell$ denote the set of all cells. For a given cell $c := \lan(x, y), i\ran$, let $co_x(c) = x$ and $co_y(c) = y$, which can be extended to the set of cells: $\forall Z \subseteq Cell$, $co_x^+(Z) = \bigcup_{c \in Z} co_x(c)$, $co_y^+(Z) = \bigcup_{c \in Z} co_y(c)$. Also let $Neigh: Cell \rightarrow \oP(Cell)$ output the coordinate wise non quiescent cells in the surrounding neighborhood of a cell. 
%Formally, $\forall (c = \lan(x, y), s\ran) \in Cell$ we have, 
%%\begin{equation*}
%$Neigh(c) = \{ \lan(x \pm 1, y), i'\ran, \lan(x, y \pm 1), i'\ran\
%|\ i' \neq 0\}$
%%\end{equation*}
In terms of these, each entity in a state can be characterized by two values - $[z, pivot(z)]$ - the connected set of non-quiescent cells $z \subseteq Cell$ and an associated \emph{pivot}. The (function) $pivot(z)$ gives the coordinates for a cell uniquely associated with an entity in CA lattice in a particular state. Pivot works as an state invariant for each such identified entity (loop).  

In general an observation process may recognize an entity in a state as a subset of the observable atomic structures, that is, (structure of) an entity can be defined as a subset of the state itself. Formally, $$E_{s} \subseteq \oP(s)$$ In case of two entities being identical i.e., consisting of identical multisets of atomic structures, additional tagging may also be required. Notice that such a subset based structural identification of entities also allows entities with overlapping sets of atomic elements. In such scenarios also application of additional tagging is essential. Tagging, in general, can be defined as a total function: $$Tagging: E_{\T} \mapsto \mathit{I_{Tag}}$$ 
Where $\mathit{I_{Tag}}$ is the set of unique tags to be associated with the entities. In the following discussion, we would implicitly assume that whenever necessary, identified entities in each state are also tagged such that with tagging, each identified entity would naturally be distinguishable from others. If two entities are considered different, that would mean either the these entities have distinct structures or they are given different tags by the observation process. 

Also we have the following two axioms imposing consistency requirements on the entity identification. First axiom states that every entity in a state is uniquely identified. Second axiom states that the set of entities identified in identical states should be the same. 

\begin{ax}[\textbf{Axiom of Unique Identification of Entities}] An entity must be uniquely identified in a given observed run $\T$. Formally, {\sf Tagging is a one-to-one function on states}, that is, $\forall s \in \Sigma_{\T}.\ \forall e, e' \in E_{s}.\ Tagging(e) = Tagging(e') \Rightarrow e = e'$.
\end{ax}
%In other words, if an observer gives entities $e, e'$ same tags $i_e, i_{e'}; i_e = i_{e'}$, the entities should be (structurally) different and also should not be observed in the same state. Notice that converse is not true, that is, if two entities in a state have different tags then they may not be different as well. Hence this axiom on $Tagging$ allows reuse of tags across states when entities are (structurally) different, and also all the multiple copies of the an entity receive different tags. 

\begin{ax}[\textbf{Axiom of Unique Identification in States}] Entity set must be uniquely identified for each state in a given observed run $\T$. Formally, $$\forall s, s' \in \Sigma_{\T}:\ s = s' \Rightarrow E_s = E_{s'}$$\end{ax}
%In other words, if two states are identical, i.e., consist of the identical multisets of atomic observable structures, then an observer must identify the same multisets of entities in these states irrespective of their temporal ordering in the observed run. Note that  converse is not true, $E_s = E_{s'} \not\Rightarrow s = s'$, i.e., an observer could also identify identical multisets of entities in different states. 
This axiom may also be considered as the \textit{soundness axiom} for entity recognition.  Finally we have the following important axiom of non-ignorance, which %states that an observer should not ``ignore" recognizable entities in some states and then choose later on to observe them in some future states so that to use them for establishing (false) inferences, which would not have been possible had these entities were not ignored earlier. It 
can be considered as a complementary \textit{completeness axiom} for entity recognition. %Notice that this 

\begin{ax}[\textbf{Axiom of non-Ignorance}] Let $E_{s}$ and $E_{s'}$ denote 
the multisets of entities observed in states $s, s' \in \Sigma_{\T}$ respectively. Then $\forall s, s' \in \Sigma_{\T}:$
%$$\forall s, s' \in \Sigma_{\T}:\ s \subset s' \Rightarrow E_s\Y = E_{in}$$
\begin{eqnarray*}
  \mathbf{s \subset s'} & \mathbf{\Rightarrow} & \mathbf{E_s\setminus Y = E_{in}}; \ \textit{where}\\
  E_{in} &=& \{e' \in E_{s'} \mid e' \subset s \};\\
  E_{out} &=& \{e' \in E_{s'} \mid e' \cap s = \emptyset \};\\
  E_{overlap} &=& E_{s'}\setminus (E_{in} \uplus E_{out}) ;\\
  Y &=& \{e \in E_{s} \mid (\exists e' \in E_{overlap}). [e \cap e' \neq \emptyset]\};
\end{eqnarray*}
\end{ax}
In other words, this axiom forbids that an observer omit identification of an entity in a state $s$ but  in a different state $s'$ identifies it as consisting of the same atomic elements which were also available in  $s$. 

%Figure~\ref{fig:CompletenessAxiom} illustrates this further.\\\\
%\begin{figure}
%\centering
%\includegraphics[scale=0.7]{CompletenessAxiom}
%\caption[Graphical view of the Completeness Axiom of Entity Recognition]{Graphical view of the Completeness Axiom of Entity Recognition}
%\label{fig:CompletenessAxiom}
%\end{figure}
Having defined the sequence of states with temporal ordering and the entities identified by their tags, we will now proceed to discuss how an observer might define the detailed observable characteristics for such entities. Using these characteristics it can infer relevant relationships among entities e.g., descendant relationship, heredity, and variation. To this aim, we will define `character space' as a set of values for the observed characteristics. These values might be purely symbolic without any relative ordering or can be ordered using suitable ordering relation.

\begin{ob}[\textbf{Character Space}]
The observer should define the set of all possible mutually independent (or orthogonal) and {\sf measurable} characteristics for possible entities in the model as a multi dimensional character space $\Upsilon = \mathit{Char}_1 \times \mathit{Char}_2 \times \ldots \times \mathit{Char}_n$, where each of $\mathit{Char}_i$ is the set of values for $i^{th}$ characteristic. Each of $\mathit{Char}_i$ make one orthogonal dimension in the space $\Upsilon$. 
\end{ob}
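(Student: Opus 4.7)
The statement is framed as an Observer Abstraction rather than a theorem, so what I would set out to establish is not a deductive consequence but the well-definedness and internal consistency of the proposed Cartesian-product construction. The plan is to verify three things: (i) each coordinate $\mathit{Char}_i$ is well-formed as the codomain of a measurement procedure on $E_{\T}$, (ii) the coordinates are mutually independent in the stated sense, and (iii) the combined space $\Upsilon$ is consistent with the earlier Axioms of Unique Identification and non-Ignorance when used to distinguish entities.

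First I would formalize what it means for a characteristic $\mathit{Char}_i$ to be ``measurable'' by requiring a computable function $\mu_i : E_{\T} \rightarrow \mathit{Char}_i$, respecting the same restriction already imposed on functions whose domain is a multiset (range a classical set). Because $E_{\T}$ is a multiset, the convention introduced just after the multiset preliminaries forces $\mu_i$ to associate distinct range values to the distinct tagged copies of any entity prototype, so composition with the earlier $Tagging$ map makes the measurement invariant under $s = s'$; this is precisely what the Axiom of Unique Identification in States demands once lifted to the character level.

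Next I would formalize mutual independence as: for every tuple $(v_1, \ldots, v_n) \in \mathit{Char}_1 \times \cdots \times \mathit{Char}_n$ there is no a priori constraint ruling out an entity realizing it, i.e.\ each projection is surjective on its coordinate and the joint measurement $\mu := (\mu_1, \ldots, \mu_n)$ factors as a product of the individual $\mu_i$ with no enforced cross-coordinate relation. Given this, the Cartesian product is the minimal structure carrying all $n$ measurements jointly without introducing spurious correlations, which is the exact content of the abstraction. I would then check compatibility with non-Ignorance by showing that any character tuple witnessed for an entity in a state $s'$ remains accessible to the observer in any $s \subset s'$ whose atomic elements already contain that entity, so the character assignment does not silently disappear between states.

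The hard part, and the reason this is stated as a specification of the observer rather than as a theorem, is that mutual independence is in general not verifiable from the observed runs alone: the joint occupancy of $\Upsilon$ may be severely constrained by the unseen micro-dynamics $\delta$ even when nothing in the trajectories refutes independence. I would therefore close by arguing that the abstraction must be read as a soundness contract on the observer, namely that if the $\mu_i$ are computable and pairwise non-functionally-dependent on the support of $E_{\T}$ then the product construction is justified, and otherwise any collapses between coordinates must be declared before $\Upsilon$ is consumed by the subsequent evolutionary axioms.
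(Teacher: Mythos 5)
There is nothing to compare against: the Character Space statement is an \emph{Observer Abstraction}, i.e.\ a definitional component of the framework, and the paper supplies no proof for it. What follows it in the paper is only explanatory text — each entity $e \in E_{\T}$ is associated with a point $(v_1,\ldots,v_n) \in \Upsilon$, some coordinates may carry a partial order $\leq_i$ with a zero element $0_{char_i}$ for absent characteristics, characteristics may be structural or behavioral — plus the Langton CA instantiation in which $\mathit{Char}_1$ is the set of non-quiescent connected cell sets and $\mathit{Char}_2$ the set of pivots. Your decision to read the statement as a specification and argue well-definedness rather than derive it is therefore the right instinct, but you are proving something the paper neither attempts nor needs.

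Two of your specific moves would also misfire if carried out against the paper's own text. First, you invoke the multiset-function convention from the preliminaries to force the measurement $\mu_i$ to assign \emph{distinct} character values to the multiple copies of an identical entity; that convention exists to make $Tagging$ and $\mathbf{R_{\delta_{mut}}}$ work, whereas for characteristics the intended behavior is the opposite — identical entities should receive identical points of $\Upsilon$ (distinguishability is handled by tags, not by characters), and the paper never casts the character assignment as such a function. Second, your formalization of ``mutual independence'' as full joint realizability of every tuple in $\mathit{Char}_1 \times \cdots \times \mathit{Char}_n$ is strictly stronger than the paper's informal orthogonality and is already violated by the paper's running example: a pivot is constrained to be a cell of the associated connected set, so most pairs in $\mathit{Char}_1 \times \mathit{Char}_2$ are never realized by any entity, yet the paper treats these as the two orthogonal dimensions for Langton's model. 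Your closing observation — that independence cannot be verified from observed runs and must be read as an observer-side contract — is sound and consistent with the framework's spirit, but it is commentary on a definition, not a proof the paper contains.
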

Corresponding to each entity $e \in E_{\T}$ there is a point in $\Upsilon$, say $(v_1, v_2, \ldots v_n)$, where $v_i \in \mathit{Char}_i$. For a vector $x = (a_1, a_2, \ldots, a_r), i^{th}$ element ($a_i$) will be denoted as $x[i]$. For some of the characteristics observer might define a `partial ordering' ($\leq_i$ for $char_i \in \Upsilon$), which can be used to compare values for those characteristics. The absence of a characteristics in an entity is represented by a special zero element $0_{char_i}$ such that if $\mathit{char}_i$ is (partially) ordered then $\forall v \in
\mathit{char}_i$. $0_{char_i} \leq_i v$.

%%%%%%%%%%%%% Added on 24 June, 2007
Notice that, observable characteristics need not to be limited to syntactic level  or \emph{structural properties} and can also include semantic properties - \emph{observable patterns of behaviors} - though semantic properties are much more difficult to observe and measure since they require abstracting the patterns of reactions over a range of states and also semantic equivalence between entities might be computationally undecidable, in general, e.g., program equivalence~\cite[Ch. 8]{hopcroft79}. 
%%%%%%%%%%% End

In case of Langton's CA model, an obvious characterization for $\Upsilon$ is a two dimensional space $\mathit{Char}_1 \times \mathit{Char}_2$ with $\mathit{Char}_1$ being the set of all non-quiescent connected set of cells and $\mathit{Char}_2$ being the set of corresponding pivots.
%
%In terms of such character space $\Upsilon$, an entity set $E_s$ at any state $s$ can be associated with a set defined by annotating the points in $\Upsilon$ with integer constants denoting the multiplicity of the entities present in $E_s$ with characteristics defined by the point.
We will next extend the framework to infer various components of the life-like evolutionary behavior. 

\subsubsection{Distance Measure}
\label{sec:distance}

A ``dissimilarity measure" ($D$) defines the ``observable differences" ($\mathbf{Diff}$) between the characteristics of the entities in a population. The distance measure defined below can be used by the observer to distribute entities into separate clusters such that entities in the same cluster are sufficiently similar while entities from different clusters are distinguishably different in their characteristics. Exact definition of distance function is also model dependent.

\begin{ob}[\textbf{Distance Measure}] An observer defines a decidable clustering distance measure $D: E_{\T} \times E_{\T} \rightarrow \mathbf{Diff}$, where $\mathbf{Diff}$ is the set of values to characterize the observable ``differences" between entities in $E$.
\end{ob}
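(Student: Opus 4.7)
The statement is framed as an Observer Abstraction (essentially a design requirement on the observer) rather than as a claim with a conventional proof obligation, but I can still lay out how to establish that such a $D$ is always realizable and decidable whenever the observer has fixed a character space $\Upsilon$. My plan is to reduce the construction of $D$ to componentwise comparison inside $\Upsilon$, and then to verify decidability by appealing to the measurability requirement already imposed on $\Upsilon$.

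First, I would define $D$ coordinate-wise on the character space. Since each entity $e \in E_{\T}$ is mapped to a tuple $(v_1, \ldots, v_n) \in \mathit{Char}_1 \times \cdots \times \mathit{Char}_n$, it suffices to exhibit, for each $i$, a per-dimension dissimilarity $d_i$ on $\mathit{Char}_i$ and then set $D(e, e') = (d_1(v_1, v_1'), \ldots, d_n(v_n, v_n'))$, taking $\mathbf{Diff}$ to be the Cartesian product of the codomains of the $d_i$. For characteristics equipped with a partial order $\leq_i$ one can take $d_i$ to be the order-theoretic gap, with the zero element $0_{char_i}$ serving as the baseline; for purely symbolic characteristics, the discrete metric $d_i(v, v') = 0$ if $v = v'$ and $1$ otherwise already suffices. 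This yields the required signature $D: E_{\T} \times E_{\T} \rightarrow \mathbf{Diff}$.

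Next I would argue decidability. The Character Space abstraction already demands that each $\mathit{Char}_i$ be measurable, which I read as: given $e$, its $i$-th coordinate is computable from the atomic elements constituting $e$. Combined with decidable equality/comparison on $\mathit{Char}_i$, this makes each $d_i$ computable, so $D$ is computable by composition. I would then instantiate the construction twice to make the abstraction concrete. For Langton's loops, $d_1$ reduces to a multiset/shape comparison on connected non-quiescent cell collections (via $co_x^+$ and $co_y^+$ projections plus a symmetric-difference count) and $d_2$ to the Euclidean distance between pivots, both computable in time polynomial in the entity size. For AlChemy, each $d_i$ reduces to syntactic operations on $\lambda$-terms, such as length difference, subterm match, or presence/absence of a normal form.

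The main obstacle I expect is reconciling $D$ with the Axiom of Unique Identification of Entities and the Axiom of non-Ignorance. Because tagging can force two structurally identical entities to count as different, a purely character-space $D$ may return zero on such pairs, which would undermine the clustering intent downstream. I would resolve this by stratifying $\mathbf{Diff}$: a dedicated tag component records distinctness when needed but is projected out by the clustering step, while the remaining components are driven entirely by $\Upsilon$. Verifying that this stratification is well-defined on $E_{\T} \times E_{\T}$ uniformly across all observed runs $\T$, and that it does not spuriously distinguish entities that the non-ignorance axiom forces the observer to regard as the same atomic composition, is the step that I expect to require most care.
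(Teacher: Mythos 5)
There is nothing to prove here: in the paper this statement is an \emph{Observer Abstraction}, i.e., a definitional requirement placed on the observation process, and the paper supplies no proof --- it only remarks that the exact definition of $D$ is model dependent and gives illustrations (Hamming distance on genomic strings, phenotype similarity, and for Langton's CA model the map $D(e,e') = [d_g, d_p]$ into $\{0,1\}\times\{0,1\}$). Your realizability argument is therefore a genuinely different kind of contribution rather than a reconstruction of the paper's reasoning: you show that once a character space $\Upsilon$ is fixed, a decidable $D$ can always be obtained coordinate-wise (discrete metric on symbolic dimensions, order-theoretic gap on ordered ones), which in effect generalizes the paper's Langton example and makes explicit an existence claim the paper leaves implicit. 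That is sound as far as it goes, with the caveat that you are importing a reading of ``measurable'' as ``computable from the atomic elements'' which the paper does not state; decidability of $D$ is simply stipulated by the abstraction, not derived.

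Your final worry about reconciling $D$ with tagging and the non-ignorance axiom, however, is misplaced. The framework does not ask $D$ to separate structurally identical but differently tagged entities: distinctness of coexisting identical entities is handled entirely by the $Tagging$ function, while $D$ is used only to bound mutational and reproductive-mutational differences ($\delta_{mut}$, $\delta_{rep\_mut}$) and to cluster by characteristics. Indeed, $D(e,e')$ being the zero vector on structurally identical entities is exactly what the framework exploits --- e.g., $\delta_{rep\_mut} = [0,1]$ in the Langton case requires identical geometry for a parent--child pair --- so adding a tag component to $\mathbf{Diff}$ and then projecting it out solves a problem the framework does not have, and would sit uneasily with the requirement that $\mathbf{Diff}$ characterize observable differences in characteristics drawn from $\Upsilon$.
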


Examples include the Hamming distance to define distance between genomic strings in the Eigen's model of molecular evolution \cite{ps01}, set of points where two computable functions differ in their function graphs, or the set of instructions where two programs may differ. One of the known criterion to define the concept of species is ``phenotype similarity" \cite{Ridley96}, which can also be seen as another example for distance measure.

In case of Langton's CA model, $D: E_{\T} \times E_{\T} \rightarrow \{0, 1\} \times \{0, 1\}$ is defined such that $\forall e, e' \in E_{\T}\ .\ D(e, e') = [d_g, d_p]$ where $d_g$ is $0$ if both entities have the same number of cells arranged identically or else it is $1$ and $d_p$ is $0$ when the pivots for both the entities are the same and
$1$ otherwise.
%%%%%%%% 

\subsubsection{Observable Limits on Mutational Changes}
\label{sec:mut-limits}

For most of the non trivial AES models entities may also change (mutate) over the course of their interaction with the environment (or other entities.) These changes in the structure as well as the characteristics of the entities might in turn make it difficult for an observer to recognize the  entities across states. Therefore it is essential to specify 
the limits under which an observer can recognize entities across states even in the presence of such mutational changes. This is an inherent limiting property on the part of the observer and could vary among observers. Based upon the limit referred here as $\delta_{mut}$, an observer can establish whether two entities in different successive 
states are indeed the same with differences owning to mutations or not.  The smaller the limit, the harder it will be for an observer to keep recognizing entities across states and a mutated entity may get classified as a new one. As entities are observed in more and more refined levels of details, their apparent similarities melt away and differences become sharply noticeable. Another alternative method which could be used for establishing the persistence of entities across states is the identification of temporally invariant properties of the entities of interest. In the current framework such temporally invariant  properties of entities could also be defined as a separate characteristic in $\Upsilon$. 

\begin{ob}[\textbf{Mutation Bound}] Based upon the choice of clustering distance measure $D$, the observer selects some suitable $\delta_{mut} \in \mathbf{Diff}$, which will be used to bound mutational changes for proper recognition. $\delta_{mut}$ is a vector such that each element specifies an observer-defined threshold on the recognizable mutational changes for corresponding characteristic.
\end{ob}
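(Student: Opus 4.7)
The final statement is an observer abstraction, so the natural ``proof'' is a well-definedness and consistency argument: I would show that a vector $\delta_{mut}$ with the stated properties exists in $\mathbf{Diff}$, has the declared componentwise structure, and meshes coherently with the preceding axioms on entity identification. The plan is to proceed in three movements — existence, structural form, and compatibility with the tagging axioms.

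First, I would argue existence constructively. Because $\mathbf{Diff}$ is the explicit codomain of the decidable distance $D$, every concrete pair of observed entities yields a concrete element of $\mathbf{Diff}$, and the set $\{D(e,e') : e,e' \in E_{\T}\}$ is a subset of $\mathbf{Diff}$. A candidate $\delta_{mut}$ can then be taken as a componentwise supremum over those differences that the observer chooses to attribute to mutation, guaranteeing $\delta_{mut} \in \mathbf{Diff}$. For Langton's CA, where $\mathbf{Diff}=\{0,1\}\times\{0,1\}$, this is immediate: any of the four points is a valid candidate, and the observer simply picks the maximum tolerated per-characteristic deviation. For AlChemy, the analogous construction on the syntactic distance between $\lambda$-terms works the same way.

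Second, I would justify the vector form. Since $\Upsilon = \mathit{Char}_1 \times \cdots \times \mathit{Char}_n$ is product-structured, when $D$ is itself defined componentwise (as in the examples), $\mathbf{Diff}$ inherits a product structure. Declaring $\delta_{mut}$ to be a vector whose $i$-th coordinate is the tolerance along $\mathit{Char}_i$ is therefore consistent with the construction of $D$, and yields the per-characteristic threshold alluded to in the statement. For characteristics equipped only with the partial order $\leq_i$ and the zero element $0_{char_i}$, I would use those orderings to lift componentwise suprema unambiguously.

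Third, I would verify compatibility with the Axiom of Unique Identification of Entities and the Axiom of non-Ignorance. The hardest part lies here: if $\delta_{mut}$ is too loose, two structurally distinct entities coexisting in a state collapse under the tagging/recognition criterion, violating uniqueness; if too tight, a mutated descendant in $s'$ is forced to register as a fresh entity, which can clash with the non-Ignorance axiom. I would therefore accompany the existence argument with the feasibility constraint that $D(e,e') \not\leq \delta_{mut}$ (componentwise) for every pair $e,e'$ jointly present in any observed state of $\T$. Showing that such a $\delta_{mut}$ always exists reduces, in the worst case, to a feasibility problem over the finite multiset of observed pairwise distances; this is decidable given the decidability of $D$, but the cost of witnessing such a $\delta_{mut}$ is precisely what feeds into the complexity bounds promised in Section~\ref{e-complexity}, so I would keep the construction explicit enough to expose that cost.
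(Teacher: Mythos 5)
There is nothing to prove here: the statement you were given is an \emph{Observer Abstraction} (a definitional component of the framework), not a theorem, and the paper supplies no proof of it. It simply prescribes that the observer, having fixed the distance measure $D$ with codomain $\mathbf{Diff}$, \emph{selects} a vector $\delta_{mut} \in \mathbf{Diff}$ of per-characteristic thresholds. The paper's only accompanying material is informal: a discussion that the choice of $\delta_{mut}$ critically affects later inferences (too large blurs variability, too small breaks persistence of recognition), and the concrete instantiation $\delta_{mut}=[1,0]$ for Langton's CA. Questions of consistency are not attached to $\delta_{mut}$ at all; they are handled downstream by the axioms imposed on the recognition relation $\mathbf{R_{\delta_{mut}}}$ (recognition only across successive states, injectivity, and differences bounded componentwise by $\delta_{mut}$), and it is for that relation — not for the bound itself — that the paper later verifies the axioms in the CA example (injectivity via distinct pivots). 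So your framing of the task as an existence/well-definedness theorem is a category error relative to the paper, even though parts of your discussion (existence of candidates in $\mathbf{Diff}$, the componentwise reading of the vector) are harmless restatements of the definitions.

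Beyond the framing, your third movement introduces a requirement the paper does not have and does not need: the ``feasibility constraint'' that $D(e,e') \not\leq \delta_{mut}$ for every pair of entities coexisting in a state. Nothing in the framework forces this. The Axiom of Unique Identification concerns tagging within a single state and does not involve $\delta_{mut}$; the Axiom of non-Ignorance concerns which entities must be identified in sub-states, again independently of $\delta_{mut}$; and the worry that two coexisting entities ``collapse'' under recognition is already excluded by the injectivity axiom on $\mathbf{R_{\delta_{mut}}}$, since the bound $D(e,e')\preceq\delta_{mut}$ is only a necessary condition for recognition, never a sufficient one. Imposing your constraint would wrongly restrict the observer's admissible choices (and would make the ``suitability'' of $\delta_{mut}$ a formal obligation, whereas the paper deliberately leaves it as an observer-dependent, informally discussed trade-off). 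If you want to add value here, the useful exercise is the one the paper actually performs: instantiate $\delta_{mut}$ for a concrete model and then check that a recognition relation satisfying Axioms 4--6 exists for that choice, as done for Langton loops via pivots.
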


It is important to note that the choice of $\delta_{mut}$ critically affects further inferences. For example, a choice of very large values would result in the lack of identification of variability in characteristics among entities. This in turn might make an observer to recognize several (otherwise non identical) entities as identical and thus the dynamics of model which could have been possible to uncover with fine grained identification of the differences would not be possible. On the other hand if an observer decides to select very small values for $\delta_{mut}$ then it cannot recognize persistence of an entity across states under changes. Therefore it is important to define the bounds optimally. Next, we define \emph{Recognition relation} to establish the persistence of entities even in the presence of mutational changes:

In case of Langton's CA model, let us select $\delta_{mut} = [1, 0]$, which means the observer can recognize an entity in future states even with mutations (changes in the states, number, or the arrangement of cells comprising the entity) provided that the pivot remains the same. 

\begin{df}[\textbf{Recognition Relation}]
The observer establishes recognition of entities across states of the model with (or without) mutations by defining the function $\mathbf{R_{\delta_{mut}}}$: $E_{\T} \rightsquigarrow E_{\T}$, which is a partial function and satisfies the following axioms: 
\end{df}

\begin{ax}
$\forall e, e' \in E_{\T}\ .\  \mathbf{R_{\delta_{mut}}}(e) = e'
\Rightarrow\ \textit{{\sf if}}\  \exists s \in \Sigma_{\T}\ s.t.\ e \in E_s \ \textit{{\sf then}}\  e' \in E_{s+1}$.
%\exists s \in \Sigma_{\T}\ s.t.\ [e \in E_s \wedge e' \in E_{s+1}]$.
\end{ax}

Informally, the axiom states that entities to be recognized as the same have be observed in successive states. Note that $\mathbf{R_{\delta_{mut}}}$ is anti symmetric (and therefore partial) to ensure that entities are recognized based upon the temporal progression of the model and not in any other arbitrary order.

\begin{ax} $\mathbf{R_{\delta_{mut}}}$ is an {\sf injective} function, that is, $\forall s \in E_{\T}.\ \forall e, e' \in E_{s}.$ $\mathbf{R_{\delta_{mut}}}$$(e) =$ $\mathbf{R_{\delta_{mut}}}$$(e')$ $\Rightarrow e = e'$
\end{ax}

Informally, the axiom states that no two different entities in one state can be recognized as the same in the next state. 

\begin{ax}
$\forall e, e' \in E_{\T}$. $\forall \mathit{char}_i \in \Upsilon$. $\mathbf{R_{\delta_{mut}}}$$(e) = e' \Rightarrow
0_{\mathit{diff}_i} \preceq_i D(e, e')[i] \preceq_i \delta_{mut}[i]$
\end{ax}

Informally $\mathbf{R_{\delta_{mut}}}$$(e)$ is that $e' \in E_{\T}$, which is recognized in the next state by the observer as $e$ with possible mutations bounded by $\delta_{mut}$. In other words if entity $e$ mutates and changes in the next state and identified as $e'$, then observer might be able to recognize $e$ and $e'$ as the same if these changes (between $e$ and $e'$) are bounded by $\delta_{mut}$.

In case of our example of Langton's CA model, a recognition relation satisfying above axioms would  imply that two entities in consecutive states are recognized same only if they have the same pivots. This also means the observer can recognize an entity even with change in the number, state, and geometrical arrangement in the cells across states provided that entity does not shift in CA lattice altogether (which would result in the change of the pivot.) Axiom $6$, which states that $\mathbf{R_{\delta_{mut}}}$ is an injective function would also hold because no two entities in the same state share the same pivot. This is because pivot as defined before is connected to all other cells of the entity and all the non-quiescent cells which are connected in any state are taken together as one entity. Thus two different entities in the same state always consist of cells such that cells in one entity are not connected with the cells of second entity, and hence always have different pivots.

\subsubsection{Observed Causality}

In order to infer meaningful relationship between entities, which could later be used as a basis  for inferring interesting macro level properties in the model, an observer needs to identify ``causal'' relationships among entities independent of the underlying `physical laws' of the model.
  
\begin{ob}[\textbf{Causality}]
$\displaystyle C \subseteq \eu$. $C$ establishes the observed causality among the entities appearing in the successive states of a run $\T$.% as expressed in the following axiom:
\end{ob}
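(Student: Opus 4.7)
The plan is to treat this Observer Abstraction as an existence-plus-coherence claim: for any observed run $\T$ satisfying the earlier axioms, I need to exhibit a relation $C$ with the declared signature $C \subseteq \biguplus_{s \in \Sigma_{\T}} E_{s} \times E_{s+1}$ and argue that a canonical such $C$ is already forced by the structures built so far. First I would verify that the signature typechecks: since $\T$ is a partial mapping into the non-negative integers and $\Sigma_{\T}$ indexes the observed states, for each $s$ with $\T(s) = k$ the state at index $k+1$ (when it exists) supplies an entity multiset $E_{s+1}$, so the multiset union $\biguplus_{s \in \Sigma_{\T}} E_{s} \times E_{s+1}$ is itself well-defined. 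This lifts the type obligation to a statement about each consecutive pair individually.

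Next I would construct a canonical witness $C_0$ by taking all pairs $(e, \mathbf{R_{\delta_{mut}}}(e))$ where $e$ lies in the domain of the recognition function. By Axiom~5, whenever $\mathbf{R_{\delta_{mut}}}(e) = e'$ and $e \in E_s$, we get $e' \in E_{s+1}$, so $C_0 \subseteq \eu$ by inspection; this already shows $C$ is nonvacuous whenever the recognition relation is. I would then describe the augmentation operation by which an observer extends $C_0$ with non-self-persistence links (for instance, in Langton's CA, a link from a parent loop to the freshly extruded child loop at the completion step), and verify that each such added pair $(e, e')$ with $e \in E_s$, $e' \in E_{s+1}$ still sits inside the declared disjoint union by construction.

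The main obstacle will be justifying that augmented links genuinely track causation rather than mere temporal co-occurrence, because the framework forbids any appeal to the underlying transition relation $\delta$ of the AES model. My proposal is to discharge this by making the augmentation criterion purely observational via $D$ and $\Upsilon$: a pair $(e, e')$ is admitted to $C \setminus C_0$ only when the characteristic coordinates of $e'$ in $\Upsilon$ can be accounted for by a bounded distance (under $D$) from $e$, while being incompatible with any candidate parent in $E_s \setminus \{e\}$. This keeps the construction inside the observer's abstractions, but it is inherently observer-dependent and may admit several non-equivalent witnesses; hence the abstraction should be read existentially, with uniqueness deferred to the descent, heredity, and reproduction axioms that the framework introduces next.
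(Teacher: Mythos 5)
There is a mismatch of kind here: the statement you are addressing is an \emph{Observer Abstraction}, i.e.\ a definitional specification that the observer must supply some relation $C$ with signature $C \subseteq \eu$; the paper offers no proof of it and none is called for. The only proof obligation in the vicinity is the subsequent lemma that the \emph{particular} $C$ defined for Langton's CA model (strict containment of the child's $x$- and $y$-coordinate sets in the parent's, distinct pivots, consecutive states) satisfies the Causality and Reproductive Causality axioms, and that argument turns on pivot disjointness of distinct entities within a state, not on any distance bound in $\Upsilon$. Your type-checking observation that any pair $(e,e')$ with $e \in E_s$, $e' \in E_{s+1}$ lands inside the declared disjoint union is fine but essentially trivial, and the existential reading you end with is indeed how the abstraction is meant.

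The substantive problem is your canonical witness $C_0 = \{(e, \R(e))\}$. Seeding $C$ with the recognition pairs contradicts the intended semantics and the very axioms $C$ must satisfy two paragraphs later: the Reproductive Causality axiom requires that $(e,e') \in C$ implies no $e'' \in E_s$ has $\R(e'') = e'$, and taking $e'' = e$ shows every pair of your $C_0$ violates it. The framework deliberately keeps persistence-under-mutation ($\R$) and causal descent ($C$) separate (see also condition 3 in the definition of $\Delta$, which excludes $\R$-pairs), precisely so that mutations are not confused with reproductions. So the construction does not establish nonvacuity of an admissible $C$; it produces a relation that is inadmissible by design. Your final augmentation criterion (bounded $D$-distance to $e$, incompatibility with other candidates in $E_s$) also quietly re-imports the role of $\Delta$ and $\delta_{rep\_mut}$ into the definition of $C$ itself, whereas the paper keeps $C$ model/observer-dependent and enforces similarity only later, via the intersection with $\Delta$ inside $\ancof$.
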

%\begin{ax}
%$\forall e, e' \in E_{\T}\ .\  (e, e') \in C \Rightarrow \ \textit{{\sf if}}\  \exists s \in \Sigma_{\T}\ s.t.\ e \in E_s \ \textit{{\sf then}}\  e' \in E_{s+1}$.
%\end{ax}

Notice that in order to establish causal relation between entities, observer need not necessarily know the underlying reaction semantics or the micro level dynamics of the model. Only requirement is that the observer's claimed causality conforms with the stated axioms. Since causality is largely a observer and model dependent property, we will illustrate it further in Section~\ref{reproformalized} by defining an additional axiom of reproductive causality, which will be used in turn to infer reproductive relationships among entities.

For example, in case of Langton's CA model, the relation $C$ between entities in consecutive states is defined as follows: $C \subseteq E_{\T} \times E_{\T}$
such that $\forall\ e, e' \in E_{\T}$ where $e = [z_e, \mathit{pivot}(z_e)]$ and $e' =[z_{e'}, \mathit{pivot}(z_{e'})]$ we require
\[
(e, e') \in C \Leftrightarrow \left\{ \begin{array}{ll} 1.\, & co_x^+(z_e) \supset co_x^+(z_{e'})\\
2.\, & co_y^+(z_e) \supset co_y^+(z_{e'})\\
3.\, &  \mathit{pivot}(z_e) \neq \mathit{pivot}(z_{e'})\\
4. & \exists s \in \Sigma_{\T}\ s.t.\ e \in E_s \wedge e' \in E_{s+1}
\end{array} \right.
\]
Intuitively what we demand with above definition of causal relation $C$ is that (child) entity $e'$ was part of the (parent) entity $e$ and at certain stage it ``breaks off'' from the (parent) entity $e$, as can be seen in Figure~\ref{ftr1} at time step $127$.

\subsection{Observing Evolution}
\label{chap:evocomponents}

Having defined the observation process as a computable transformation $\mapsto_{Obj}$ from the underlying sequence of observed states of the model $\Gamma$  to the set of components $Abs_{ind}$ involving entities and their observable characteristics with measurable differences as well as observable limits on such differences, we will now consider a specific observation process for observing evolutionary behavior in the AES studies. The following discussion will define components in $Abs_{dep}$ for observing the fundamental evolutionary components: reproduction with mutations and epigenetic developments, heredity, and natural selection. $Abs_{dep}$ can also be defined considering other criterion of recognizing life-like phenomena, for example, metabolism~\cite{ac:BFF92}, complexity~\cite{ac:adami00}, self organization~\cite{ac:Kau93}, autonomy and autopoisis~\cite{Zeleny81}.

\subsubsection{Reproduction}
\label{reproformalized}

Reproduction is one of the fundamental components of evolution. Through reproduction, entities pass on their characteristics to the next generation and increase the population size. Reproduction is possibly the only way by which (abstract) entity features can persist across generations in case of those AES models, where entities do not persist forever. In an observed framework, the way an observer can establish reproduction is by providing observed evidence for it. This is done by defining causal descendance
relationships among the entities across states, whereby parent and the child entities are recognized by the observer as being sufficiently similar and ``causally'' connected across the states. Formally, we add a new Axiom for the causal relation defined before:

\begin{ax}[\textbf{Reproductive Causality}]
$\forall s \in E_{\T}.\ \forall e \in E_s, e' \in E_{s+1}.\  (e, e') \in C \Rightarrow [\not\!\exists e'' \in E_s.\  \mathbf{R_{\delta_{mut}}}$ $(e'') = e']$
\end{ax}

Informally, the axiom of reproductive causality states that if an entity $e$ in state $s$ is causally connected to entity $e'$ in the next state, then there must not be any other entity $e''$ in state $s$, which is also recognized by the observer as $e'$ in the next state. This is to ensure that mutations are not confused by the observer with reproductions. In essence, this formulation of causality is an abstract specification which demands observers to identify the entities which have been observed to be causal sources for the appearance of a new entity. Only then proper descendance relation for the new entity can be established.

\begin{lemma} Causal relation $C$ defined for the Langton's CA model satisfies the Causality Axioms $8$ and Reproductive Causality Axiom $9$. \end{lemma}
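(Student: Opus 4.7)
The plan is to split the lemma into the two assertions it makes about the relation $C$ defined for Langton's CA model: first, that $C$ is of the correct type to qualify as a Causality abstraction (i.e.\ only relates entities in temporally adjacent states), and second, that it satisfies the Reproductive Causality requirement (i.e.\ whenever $(e,e') \in C$, no entity $e''$ in the same state as $e$ is recognized by $\mathbf{R_{\delta_{mut}}}$ as $e'$). I will take $\delta_{mut} = [1,0]$ as fixed in the running example, since the second claim genuinely depends on this choice.

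The first claim is immediate from clause $4$ in the definition of $C$: every pair $(e,e') \in C$ is explicitly required to satisfy $e \in E_s$ and $e' \in E_{s+1}$ for some $s \in \Sigma_{\T}$, which is exactly the condition for $C \subseteq \eu$. I would just state this for completeness and move on.

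The substantive work is in verifying Reproductive Causality. Suppose $(e,e') \in C$ where $e = [z_e, \mathit{pivot}(z_e)] \in E_s$ and $e' = [z_{e'}, \mathit{pivot}(z_{e'})] \in E_{s+1}$. I would then argue by contradiction: assume some $e'' \in E_s$ satisfies $\mathbf{R_{\delta_{mut}}}(e'') = e'$. By Axiom~$6$ applied to the second coordinate, $D(e'',e')[2] \preceq_2 \delta_{mut}[2] = 0$, so $e''$ and $e'$ share the same pivot. I then split into two cases. If $e'' = e$, this contradicts clause $3$ in the definition of $C$, which forces $\mathit{pivot}(z_e) \neq \mathit{pivot}(z_{e'})$. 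If $e'' \neq e$, I use the geometric observation recorded in the paragraph following Axiom~$6$: distinct entities in the same state $s$ consist of disjoint, disconnected sets of non-quiescent cells, hence they occupy disjoint coordinates, and each pivot lies inside its own entity's cell set. By clauses $1$ and $2$ of $C$, the coordinates of $z_{e'}$ (in particular the coordinate $\mathit{pivot}(z_{e'})$) are strictly contained in those of $z_e$, so that coordinate is occupied by a cell of $e$ in state $s$ and cannot be a cell of any other entity $e''$ of $E_s$; therefore $\mathit{pivot}(z_{e''}) \neq \mathit{pivot}(z_{e'})$, again a contradiction.

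The main obstacle, and the one deserving the most care, is the geometric step that identifies $\mathit{pivot}(z_{e'})$ as a coordinate belonging to $z_e$ in state $s$. Clauses $1$ and $2$ of $C$ only compare the $x$- and $y$-projections of $z_e$ and $z_{e'}$ independently, which is a weaker statement than $z_{e'} \subseteq z_e$ cell-wise; a clean argument therefore requires invoking the pivot's definition as a coordinate of one of the entity's own cells, combined with the disjoint-footprint property of distinct entities in a single state, rather than appealing to naive set containment of the cell sets.
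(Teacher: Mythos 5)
Your proof follows essentially the same route as the paper's: clause $4$ of the definition of $C$ gives the consecutive-state (causality) requirement, and reproductive causality is obtained by contradiction from the fact that any offending $e''$ would have to share the pivot of $e'$ (since $\delta_{mut}=[1,0]$), which clashes with clause $3$ when $e''=e$ and with the disjointness of distinct entities in the same state when $e''\neq e$, via placing $\mathit{pivot}(z_{e'})$ among the cells of $e$ using clauses $1$--$2$. The projection-versus-cellwise-containment subtlety you flag at the end is real, but the paper's own proof makes exactly the same leap (it asserts that the pivot ``will be included in the set of cells in $e$'' directly from the containment of the $x$- and $y$-projections), so your write-up is if anything slightly more careful, notably in treating the $e''=e$ case explicitly.
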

\begin{proof}
Condition $4$ insures that $e$ and $e'$ are observed only in the consecutive states as demanded by the axiom $8$. To establish that 
$e'$ is not the result of mutations in some other entity $e''$ observed in past (i.e., $[\exists s \in \Sigma_{\T}\ s.t.\ e'', e \in E_s]\wedge 
[\mathbf{R_{\delta_{mut}}}(e'') = e']$) we note that because of the definition of $\mathbf{R_{\delta_{mut}}}$, $e''$ and $e'$ would otherwise have the same pivots, which means pivot of $e''$ will be included in the set of cells in $e$ (since $[co_x^+(z_e) \supset co_x^+(z_{e'})] \wedge [co_y^+(z_e) \supset co_y^+(z_{e'})]$), which is not possible because $e$ and $e''$ being different entities in the same state cannot have cells in common including pivot as argued above in the proof of previous lemma.
\end{proof}
%%%%%%%%%%%%%%

\subsubsection*{Reproductive Mutations}\label{mutformalized}

For evolution to be effective there should be observable differences between the child and the parent entities arising out of reproductive processes. These changes in the characteristics of the entities may or may not be inheritable based upon the design of the model and the simulation instance.

%%%%%%%% 
\textbf{{\sf Observable Limits on Reproductive Mutational Changes:}}
\label{sec:rep-limits}
Similar to the above discussed $\delta_{mut}$,  it is important to specify the limits under which an observer can identify whether an entity is an descendant of another entity even though they might not be identical. This necessitates us to introduce another bound on observable reproductive mutations as $\delta_{rep\_mut}$. This limit on observable reproductive mutations is indeed crucial while working with models where epigenetic development in the entities can be observed \cite{Mah97}. This is because in such models
including examples from real life, the ``child" entity and the ``parent" entities do not resemble with each other at the beginning and observer has to wait until whole epigenetic
developmental process gets unfolded and then compare the entities for similarities in their characteristics.  $\delta_{rep\_mut}$ assists an observer to establish whether a particular entity could be treated as a ``descendant" of another entity or not.

Another reason for introducing the limit $\delta_{rep\_mut}$ is that from the view point of an high level observation process not recording every micro level details, it is quite essential to distinguish between parent entities and other secondary entities involved in the reproductive process. Consider, for example, a model where entity $A$ reproduces according to reaction $A + B  \rightarrow 2A' +G$, where $A'$ is mutant child entity of $A$, which can be determined by an observation process only when it
can establish that $A$ and $A'$ are sufficiently similar with respect to their characteristics, while $A'$ and $B$ are not.

\begin{ob}[\textbf{Reproductive Mutation Bound}] Based upon the choice of clustering distance measure $D$, the observer selects some suitable $\delta_{rep\_mut} \in \mathbf{Diff}$, which will be used to bound reproductive mutational changes for proper recognition.  $\delta_{rep\_mut}$ is a vector such that each element specifies an
observer-defined threshold on the recognizable mutational changes for corresponding characteristics.
\end{ob}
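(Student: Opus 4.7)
The plan is to treat this not as a theorem requiring deductive justification but as a definitional well-posedness claim, and to argue that $\delta_{rep\_mut}$ can always be coherently selected inside $\mathbf{Diff}$ in a way that matches the vector structure asserted at the end of the statement. The key inputs I would rely on are: the distance measure $D: E_{\T} \times E_{\T} \rightarrow \mathbf{Diff}$ fixed earlier, the decomposition $\Upsilon = \mathit{Char}_1 \times \cdots \times \mathit{Char}_n$ with its per-characteristic (partial) orderings $\preceq_i$ and zero elements $0_{\mathit{diff}_i}$, and the parallel construction for $\delta_{mut}$ in Observer Abstraction 4, which already established the template for picking a threshold vector in $\mathbf{Diff}$.

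First, I would verify type-correctness: since $\mathbf{Diff}$ is exactly the codomain of the previously chosen $D$, the declaration $\delta_{rep\_mut} \in \mathbf{Diff}$ is admissible without any existence argument beyond non-emptiness of $\mathbf{Diff}$, which follows from non-degeneracy of $\Upsilon$. Second, I would argue the coordinate-wise structure: because $\Upsilon$ factors into orthogonal characteristics, $\mathbf{Diff}$ inherits a product decomposition $\mathbf{Diff}_1 \times \cdots \times \mathbf{Diff}_n$, and each $\mathbf{Diff}_i$ carries its own $\preceq_i$ and $0_{\mathit{diff}_i}$. The observer then selects $\delta_{rep\_mut}$ coordinate-by-coordinate, picking each $\delta_{rep\_mut}[i] \in \mathbf{Diff}_i$ with $0_{\mathit{diff}_i} \preceq_i \delta_{rep\_mut}[i]$. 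This makes the phrase \emph{``each element specifies an observer-defined threshold on the recognizable mutational changes for corresponding characteristics''} formally precise and reuses, essentially verbatim, the construction pattern already validated for $\delta_{mut}$.

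The genuinely hard part, which I do not expect can be discharged internally, is \emph{adequacy} rather than well-definedness: $\delta_{rep\_mut}$ must simultaneously be large enough to tolerate the epigenetic divergence between a child $A'$ and its parent $A$, and small enough to exclude secondary reactants $B$ in schemata such as $A + B \rightarrow 2A' + G$. I would argue explicitly that no combination of the preceding Observer Abstractions and Axioms 1--9 pins down a unique adequate value; adequacy is a modeling obligation on the observer, dual to the analogous tension highlighted for $\delta_{mut}$. The proposal therefore closes by recording $\delta_{rep\_mut}$ as a parameter of the observation process, with its soundness deferred to subsequent axioms that will couple $\delta_{rep\_mut}$ with $D$ and the causal relation $C$ to infer descendance. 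In short, the ``proof'' is a definability check plus an explicit acknowledgement that the substantive constraints on $\delta_{rep\_mut}$ live outside the formal system introduced so far.
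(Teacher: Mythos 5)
Your reading is correct: this statement is an \emph{Observer Abstraction} (a definitional component of the framework), and the paper offers no proof of it --- it simply states the abstraction, instantiates it for Langton's CA model with $\delta_{rep\_mut} = [0,1]$, and then remarks informally on the same adequacy tension you identify (too small and every entity appears \emph{de novo}; too large and variability, hence natural selection, becomes unobservable). Your well-posedness check and your explicit deferral of the substantive constraints to the later axioms (via $\Delta$ and $\mathbf{AncestorOf}$) match the paper's treatment in substance, merely making explicit the coordinate-wise structure of $\mathbf{Diff}$ that the paper leaves implicit in its use of $D(e,e')[i] \preceq_i \delta_{rep\_mut}[i]$.
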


In case of Langton's CA model, let $\delta_{rep\_mut}= [0,1]$, which means for reproduction observer strictly demands identical geometrical structure of the parent and child entities, though they may have different pivots - this is essential to capture exact replication of the loops.

It is important to note that the choice of $\delta_{rep\_mut}$ critically affects further inferences. For example, small values for $\delta_{rep\_mut}$ might make it harder to establish reproductive relationships among entities and  for such an observer every new entity would seem to be appearing \emph{de novo} in the model. On the other hand choice of very large values would result in the lack of identification of variability in characteristics and thus make it difficult to infer natural selection (discussed later).

We next need an auxiliary relation $\Delta$ to determine that the differences due to the reproductive mutations are also bounded by $\delta_{rep\_mut}$. 

\begin{df}
$\Delta \subseteq E_{\T} \times E_{\T}$ s.t. $\forall e, e' \in E_{\T}$
%$\forall e, e' \in E_{\T} \ .\ (e, e') \in \Delta \Leftrightarrow \forall \mathit{char}_i \in \Upsilon\ .$ {\sf if} $\mathit{char}_i$ {\sf has an ordering $\preceq_i$ then} $D(e,e')[i] \preceq_i \delta_{rep\_mut}[i]$.
\[
(e, e') \in \Delta \Leftrightarrow \left\{ \begin{array}{ll} 1.\, & \forall \mathit{char}_i \in \Upsilon\ . \mbox{ if } \mathit{char}_i \mbox{ has an ordering } \preceq_i \mbox{ then } D(e,e')[i] \preceq_i \delta_{rep\_mut}[i]\\
2.\, & \mbox{if } \exists s, s' \in \Sigma_{\T} \mbox{ s.t. } e \in E_s \mbox{ and } e' \in E_{s'} \mbox{ then } s' \geq s+1 \\
3.\, & (e, e') \in \R \Rightarrow (e, e') \not\in \Delta \\
\end{array} \right.
\]
\end{df}

Informally for $(e, e'$) to be in $\Delta$, their differences for each single characteristic $char_i$ must be bounded by $\delta_{rep\_mut}[i]$ and $e$ should not be recognized as mutating to $e'$. \\

Based on the thus established notion of ``causal'' relationships between entities and $\Delta$, we will define $\mathbf{AncestorOf}$ relation, which connects entities for which
an observer can establish descendance relationship across generations.

\begin{df}$\mathbf{AncestorOf}$ = $(C \ \cup \ \mathbf{R_{\delta_{mut}}})^+ \ \cap \ \Delta$ \end{df}

In this definition the transitive closure of $(C \ \cup \ \mathbf{R_{\delta_{mut}}})$ captures the observed causality ($C$) across multiple states even in cases when ``parent" entities might undergo mutational changes ($\mathbf{R_{\delta_{mut}}}$) before ``child" entities complete their ``epigenetic" maturation with possible reproductive mutations. Intersection with $\Delta$ ensures that causally related parent and child entities are not too different from each other, that is, reproductive mutational changes are under observable limit. %Outer transitive closure is to make $\mathbf{AncestorOf}$ relationship transitive in nature so that entities in the same lineage can be related with each other. 
For $e, e' \in E_{\T}$, $(e, e') \in$ $\mathbf{AncestorOf}$ when  $e$ is observed as an ancestor of $e'$.
\begin{figure}
\centering
\includegraphics[scale=0.7]{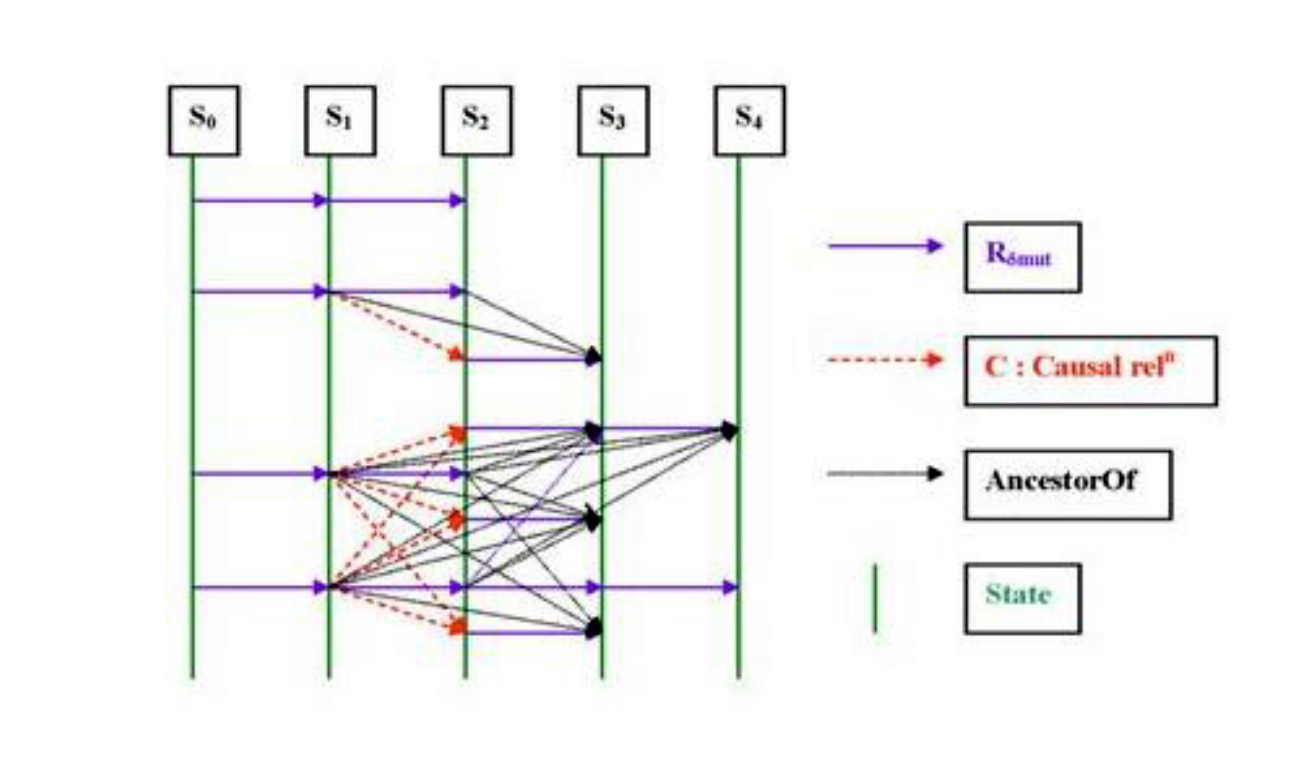}
\caption[Graphical view of evolutionary relations]{Graphical view of the relationships between entities in successive states. Recognition relation $\mathbf{Rec}$, Causal relation $C$, and $\mathbf{AncestorOf}$.} \label{fig:hassediagram}
\end{figure}

Figure~\ref{fig:hassediagram} depicts graphically the relationships between entities in successive states. Vertical lines represent the states ($S_0, S_1, S_2, S_3, S_4$). Various kinds of arrows represent different relationships: recognition relation $\mathbf{R_{\delta_{mut}}}$, causal relation $C$, and $\mathbf{AncestorOf}$. The end points of the arrows on state lines represent entities. 

To illustrate the point further let us also consider the case of {\sf Reflexive Autocatalysis}: In the simplest form, a reflexive  autocatalytic cycle is represented as a system of reaction equations: 
\begin{eqnarray*}
  A + X_1 &=& A_1 + Y_1 \\
  A_1 + X_2 &=& A_2 + Y_2 \\
  \vdots \\
  A_{n-1} + X_{n} &=& mA' + Y_n
\end{eqnarray*}
where $m$ copies of entity $A'$ are produced at the end and that entity $A'$ is a variation of entity $A$, i.e., $(A, A') \in \Delta$. Such autocatalytic cycles are supposed to be the chemical basis of biological growth and reproduction. Examples include the Calvin cycle, reductive citric acid cycle, and the formose system. Competing cycles of this sort can even undergo limited evolution, though they are supposed to have very limited heredity~\cite{ss97}.

In the current framework if an observer could determine the causal relations - $(A, A_1)$, $(A_1, A_2)$, $\ldots$, $(A_{n-1}, A')$ and if the entity $A$ does not undergo any changes before $A'$ is produced, that is, $(A, A) \in \mathbf{R_{\delta_{mut}}}$. Then $(C \ \cup\ \mathbf{R_{\delta_{mut}}})^+$ would contain $(A, A')$ so also would  $(\ (C \ \cup\ \mathbf{R_{\delta_{mut}}})^+ \ \cap \ \Delta)^+$ establishing the reproduction of $A$ through reflexive autocatalytic cycle and with variation.

\begin{theorem}~\label{th:ancof}
The definition of $\mathbf{AncestorOf}$ effectively formalizes the recognition of reproductive relationships under parental mutations together with reproductive mutations and epigenetic developments in the child entities. 
\end{theorem}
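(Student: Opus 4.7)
The plan is to decompose the semantic claim of the theorem into the three evolutionary phenomena it references — parental mutation, reproductive mutation, and epigenetic development of the child — and show that each is faithfully captured by exactly one ingredient in the definition $\mathbf{AncestorOf} = (C \cup \mathbf{R_{\delta_{mut}}})^+ \cap \Delta$. This is essentially an adequacy argument, so I would proceed by taking an arbitrary pair $(e, e')$ claimed to stand in a reproductive relationship (possibly across several states and with mutations on both sides) and show it lies in $\mathbf{AncestorOf}$; then conversely take $(e, e') \in \mathbf{AncestorOf}$ and exhibit an explicit trajectory of states that realizes a reproductive relationship.

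For the forward direction, I would proceed by case analysis on the kind of intermediate step. A single direct reproductive event from $e$ in state $s$ to $e'$ in state $s+1$ is covered by $C$ alone: by the Reproductive Causality Axiom, such a pair cannot be confused with a mutation, because the axiom forbids the existence of any $e'' \in E_s$ with $\mathbf{R_{\delta_{mut}}}(e'') = e'$. A step in which the parent $e$ persists but mutates into $e_1$ before a child appears is captured by prefixing a $\mathbf{R_{\delta_{mut}}}$-step; iterating gives a chain $e \, \mathbf{R_{\delta_{mut}}} \, e_1 \, \mathbf{R_{\delta_{mut}}} \, \cdots \, \mathbf{R_{\delta_{mut}}} \, e_k \, C \, e'$, which lies in $(C \cup \mathbf{R_{\delta_{mut}}})^+$. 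Symmetrically, an epigenetically developing child is captured by appending $\mathbf{R_{\delta_{mut}}}$-steps after the initial $C$ step, and a general trajectory is any alternation of the two, which is precisely what the transitive closure over $C \cup \mathbf{R_{\delta_{mut}}}$ admits. Finally, the intersection with $\Delta$ enforces that the cumulative difference between $e$ and the matured $e'$ remains within the observer-chosen reproductive mutation bound $\delta_{rep\_mut}$, and clause~3 of the definition of $\Delta$ prevents a pair that is merely recognized as the same entity (i.e., in $\mathbf{R_{\delta_{mut}}}$) from being miscounted as an ancestor-descendant pair.

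For the reverse direction I would take $(e, e') \in \mathbf{AncestorOf}$ and unfold the transitive closure to obtain an explicit witnessing sequence $e = f_0, f_1, \ldots, f_n = e'$ such that each consecutive pair lies in $C$ or in $\mathbf{R_{\delta_{mut}}}$. Using Axiom~5 (pairs in $\mathbf{R_{\delta_{mut}}}$ occur in successive states) and condition~4 of the causality definition, the indices of the states $s_i$ containing $f_i$ are monotonically increasing, so the sequence aligns with the temporal progression of $\T$. The $C$-steps identify actual reproduction events and the $\mathbf{R_{\delta_{mut}}}$-steps identify the surrounding parental and child-side mutational evolution, giving exactly the trajectory of an ancestor/descendant pair with epigenetic development; membership in $\Delta$ then certifies that the end-to-end divergence stays within $\delta_{rep\_mut}$.

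The main obstacle I expect is the informality of ``effectively formalizes''. The theorem is really an adequacy statement rather than a sharp mathematical identity, so the hard step is articulating precisely what class of reproductive relationships is being claimed, so that soundness (every element of $\mathbf{AncestorOf}$ is such a relationship) and completeness (every such relationship lies in $\mathbf{AncestorOf}$) become provable rather than merely plausible. In particular, subtle interactions — for instance, a pair that is simultaneously reachable by a pure $\mathbf{R_{\delta_{mut}}}$ chain and by a $C$-involving chain — must be handled carefully, and this is exactly what clause~3 of $\Delta$ is designed to rule out; I expect the bulk of the rigorous work to be a careful verification that this clause, together with the injectivity of $\mathbf{R_{\delta_{mut}}}$ (Axiom~6) and Reproductive Causality (Axiom~9), closes every such loophole.
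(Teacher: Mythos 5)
Your forward direction is essentially the paper's proof: the paper also argues by exhibiting a generic trajectory in which the parent mutates through $p'_1,\ldots,p'_k,p,p_0,\ldots,p_r$ (an $\mathbf{R_{\delta_{mut}}}$-chain), the child develops epigenetically through $c,c_1,\ldots,c_r$ (another $\mathbf{R_{\delta_{mut}}}$-chain), the single reproductive event is a $C$-step, and the transitive closure $(C\cup\mathbf{R_{\delta_{mut}}})^+$ therefore contains pairs such as $(p_m,c_r)$ whose intersection with $\Delta$ is nonempty exactly when some mature form of the child is within $\delta_{rep\_mut}$ of some form of the parent. Where you genuinely depart from the paper is the converse (soundness) direction: the paper does not attempt it at all, contenting itself with the completeness-style claim that any reproduction-with-mutations scenario is witnessed by a nonempty $\mathbf{AncestorOf}$. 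Your converse adds real value as an adequacy argument, but note one caveat in how you discharge it: clause~3 of $\Delta$ only excludes pairs that lie \emph{directly} in $\mathbf{R_{\delta_{mut}}}$, not pairs connected by a pure mutation chain of length two or more. Such a pair lies in $(C\cup\mathbf{R_{\delta_{mut}}})^+$ and, if its characteristic differences happen to fall within $\delta_{rep\_mut}$, also in $\Delta$, hence in $\mathbf{AncestorOf}$ without any $C$-step, so your claim that every witnessing chain contains an actual reproduction event does not follow from clause~3 together with injectivity and Reproductive Causality alone. The paper quietly sidesteps this by proving only the forward direction and by later pruning with $\mathbf{Parent_{\Delta}}$ and $\mathbf{Parent_{\Delta}^{min}}$ (which explicitly quotient out $\mathbf{R^+_{\delta_{mut}}}$-successors); if you want the two-sided adequacy statement, you would either have to strengthen clause~3 to exclude $\mathbf{R^+_{\delta_{mut}}}$-pairs or restrict the soundness claim to chains containing at least one $C$-step. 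Apart from this, your decomposition into the three phenomena and their mapping onto the ingredients of the definition matches the paper's intent, and your account of why the intersection with $\Delta$ is needed is the same as the paper's.
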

\begin{proof}
An observation process need to observe child entities so long that their epigenetic development unfolds completely. Since, in general, a priori limits cannot be assumed on the number of states required for such epigenetic development, this requirement of observations across states is captured using transitive closure - $(C \ \cup \ \mathbf{R_{\delta_{mut}}})^+$, where $\mathbf{R_{\delta_{mut}}}$ ensures that (mutational) changes in the parent entities and also the changes in the child entities during epigenetic development are accounted for. To see this,

Lets us assume that in a state $s$, a child entity $c$ was observed for the first time and (parent) entity $p$ present in the state $s-1$ was observed to be casually connected to it. Suppose that for the entity $c$ its epigenetic development unfolds through states $s+1, s+2, \ldots, s+r$ such that with changes owing to the development $c$ was observed as $c_{1}, c_{2}, \ldots, c_{r}$ in these states with $(c, c_1), (c_1, c_2), \ldots, (c_{r-1}, c_r) \in \mathbf{R_{\delta_{mut}}}$. Similarly suppose that parent entity $p$ has been undergoing mutations in these states and also in the states before observed as $p'_{1}, p'_{2}, \ldots, p'_k, p, p_{0}, \ldots, p_{r}$ such that $(p'_1, p'_2), \ldots, (p'_k, p), (p, p_{0}), \ldots,  (p_{r-1}, p_r) \in \R$. It is clear that $(C \ \cup \ \mathbf{R_{\delta_{mut}}})^+$ would contain $(p', c), (p'_1, c_1), \ldots, (p'_k, c_r)$, $(p, c), (p, c_1), \ldots, (p, c_r), \ldots, (p_r, c)$, $(p_r, c_1), \ldots, (p_r, c_r)$ among other tuples implying that the intersection of $(C \ \cup \ \mathbf{R_{\delta_{mut}}})^+$ with $\Delta$ would result in those tuples $(p_m, c_r)$, where $p_m$ and $c_r$ are sufficiently similar in their characteristic. Therefore if the resultant set $((C \ \cup \ \mathbf{R_{\delta_{mut}}})^+ \cap \Delta)$ is not empty, the observer can establish the reproductive relationship between entities $p$ and $c$ even under parental mutational changes and the epigenetic changes and reproductive mutations in the child entity.
\end{proof}

Using $\mathbf{AncestorOf}$ relation, we now can consider the cases of \emph{entity level reproduction} and \emph{Fecundity}:

\subsubsection*{Case 1: Entity Level Reproduction}

We consider the case where instances of individual entities can be observed as reproducing even though there might not be any observable increase in the size of the whole population. For a given  simulation $\T$ of the model, an observer defines the following  $\mathbf{Parent_{\Delta}}$ relation:
\begin{df}
\begin{equation*}
\begin{split}
\mathbf{Parent_{\Delta}} = \{ & (p, c) \in \mathbf{AncestorOf} \mid\\
& \not\!\exists e \in E_{\T}\ .\  [(p, e) \in \mathbf{AncestorOf}
\wedge (e, c) \in \mathbf{AncestorOf}]\}
\end{split}
\end{equation*}
\end{df}
The condition in defining $\mathbf{Parent_{\Delta}}$ is used to ensure that $p$ is the immediate parent of $c$ and thus there is no intermediate ancestor $e$ between $p$ and $c$.  Notice that $(p, c) \in \mathbf{Parent_{\Delta}} \Leftrightarrow (p, c) \in \left(\left({\R}^+ \cup C \cup {\R}^+\right) \cap \Delta\right)$.  Using $\mathbf{Parent_{\Delta}}$ relation, in order for the observer to establish reproduction in the model, the following axiom should be satisfied:
\begin{ax}[\textbf{Reproduction}]
$\exists \textit{state sequence}\ \T \ .\
\mathbf{Parent_{\Delta}} \neq \emptyset$
\end{ax}
This means, if there is reproduction in the model, then there should exists a simulation $\T$ of the model, where at least one instance of reproduction is observed. 

Notice that, for every $(p, c) \in \mathbf{Parent_{\Delta}}$, some other $(p', c') \in \mathbf{AncestorOf}$ where $p$ has been observed to change to $p'$ and $c$ to $c'$ through a sequence of states would also be included in the $\mathbf{Parent_{\Delta}}$. Therefore, in the following, we limit the attention to temporally least such parent-child pairs in $\mathbf{Parent_{\Delta}}$: 
$$\mathbf{Parent_{\Delta}^{min}} = \{ (p, c) \in \mathbf{Parent_{\Delta}} \mid\ \not\!\exists (p', c') \in \mathbf{Parent_{\Delta}} .\  [(p', p) \in \mathbf{R^+_{\delta_{mut}}} \vee (c', c) \in \mathbf{R^+_{\delta_{mut}}}]\}$$
  
\subsubsection*{Case 2: Population Level Reproduction - Fecundity}

Though entity level reproduction is essential to be observed, for natural selection it is the population level collective reproductive behavior (fecundity), which is significant owing to the \emph{carrying capacity} of the environment. Since carrying capacity is an limiting constraint on the maximum possible size of population, an observer needs to establish that there is no perpetual decline in the size of the population. In other terms for all generations, there exists a future generation that is of the same size or larger. This allows cyclic population sizes where the cycle mean grows (or stays steady) over time. Also in case of fecundity, an observer need not to observe all the parents in the same state, nor do children need to be observed in the same states of the model. Formally we require the observer to establish Fecundity by satisfying the
following axiom: 
\begin{ax}[\textbf{Fecundity}]
There exist statistically significant number of different generations of reproducing entities in temporal ordering $G_1, G_2, \ldots, G_L$ such that $(\forall G_{i < L} \subseteq E_{\T})(\exists G_{j>i} \subseteq E_{\T})\ .\ |G_j| \geq |G_i|$ where $G_j = \{c \in E_{\T} \mid \exists a \in G_i\ .\ (a, c) \in \mathbf{AncestorOf}\}$
\end{ax}
Informally, the axiom states that for every generation of reproducing entities ($G_i$), in future there exist generation of its descendant entities ($G_j$) such that the size of descendant generation must be equal or more than current generation. Note that the granularity of the time for determining generations is entirely dependent on the design of the model and the observation process. 

Let us next prove that, above formulations successfully yield the following claim in case of our example of Langton's CA model:

\begin{figure}[hbtp]
\begin{center}
\includegraphics[scale=0.8,trim=0 0 0 0,clip=]{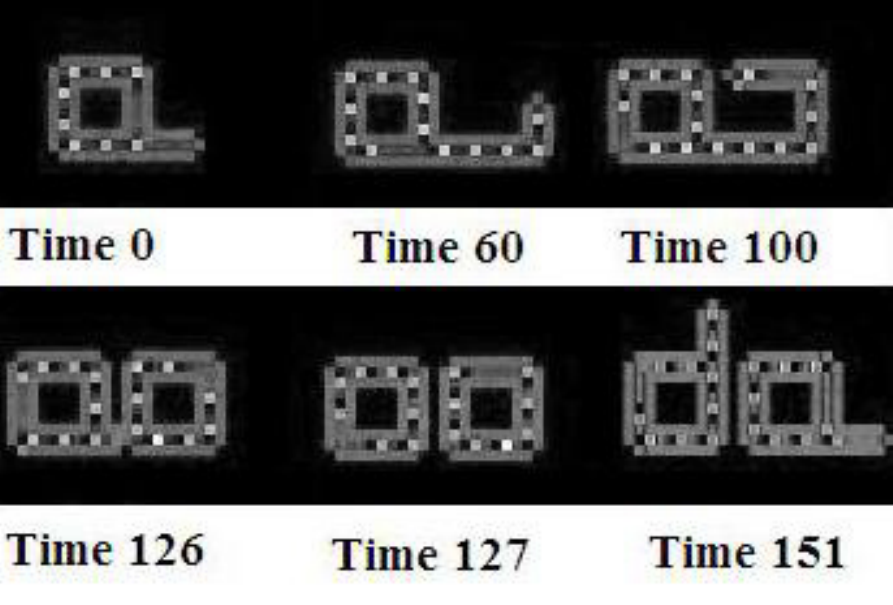}
\caption[Self-replication in Langton loops]{Self-Reproduction in Langton loops; screen shots from~\cite{Sayama2005}} \label{ftr1}
\end{center}
\end{figure}

\begin{lemma}\label{lemma:ca-entity-rep}
\textbf{Axiom of Reproduction} and the \textbf{Axiom of Fecundity} are satisfied by the entities and abstractions on Langton Loops defined above. 
\end{lemma}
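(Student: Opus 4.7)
The plan is to exhibit an explicit run $\T$ of Langton's CA starting from a single standard 86-cell loop configuration and use the well-documented replication dynamics of this model (as in Figure~\ref{ftr1}) to witness both axioms. First I would fix a starting state $s_0 \in \Sigma_0$ containing exactly one Langton loop $p_0 = [z_{p_0}, \mathit{pivot}(z_{p_0})]$ and denote by $s_0, s_1, s_2, \ldots$ the sequence of states obtained by applying the CA transition rule; this gives me $\Sigma_{\T}$ and every $E_{s}$ via the connected-non-quiescent-cells criterion already established in the framework.

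For the Axiom of Reproduction, I would focus on the first state $s_k$ (around the $127^{\text{th}}$ time step in the canonical simulation) at which the loop pinches off so that $E_{s_k}$ contains two distinct entities, one of them a fresh child $c$. Let $p \in E_{s_{k-1}}$ be the unique preceding loop; I would verify $(p, c) \in C$ directly from the four conditions defining $C$ for Langton's model (the enclosing bounding box of $p$ strictly contains that of $c$; the pivots differ; they are in consecutive states), which puts $(p, c) \in (C \cup \R)^+$. Next I would check $(p, c) \in \Delta$: with $\delta_{rep\_mut} = [0,1]$ the child and parent must agree in geometric structure and are allowed to differ in pivot, and Langton's design guarantees exactly this (the newborn loop is a geometric copy of the original); condition~3 of the $\Delta$-definition holds vacuously because $(p, c) \not\in \R$ (their pivots differ, but $\R$ with $\delta_{mut}=[1,0]$ demands equal pivots). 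Hence $(p, c) \in \mathbf{AncestorOf}$. Because $c$ first appears in $s_k$, there is no entity $e$ satisfying $(p,e), (e,c) \in \mathbf{AncestorOf}$, so $(p, c) \in \mathbf{Parent_{\Delta}}$, witnessing Axiom~10.

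For the Axiom of Fecundity, I would define generations inductively from this same run: $G_1 := \{p_0\}$ and, for $i \geq 1$, $G_{i+1} := \{c \in E_{\T} \mid \exists a \in G_i.\ (a, c) \in \mathbf{Parent_{\Delta}^{min}}\}$, restricting to a statistically significant finite $L$ chosen before the lattice saturates. Because Langton's rule lets each mature loop produce at least one viable offspring while persisting (and invoking the earlier lemma ensuring $C$ and reproductive causality behave correctly, together with Theorem~\ref{th:ancof} to cover epigenetic unfolding of the child loop), I would argue $|G_{i+1}| \geq |G_i|$ for all $i < L$. Choosing $G_j := G_{i+1}$ for each $G_i$ then satisfies the axiom's size condition.

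The main obstacle I expect is the careful bookkeeping around epigenetic latency: the child loop is not immediately geometrically identical to the parent at the instant it separates, so the pair $(p, c)$ strictly lies in $\Delta$ only after a few more transitions during which $p$ itself may mutate into some $p_r$. I would therefore actually pick the $\mathbf{Parent_{\Delta}^{min}}$ representative $(p_m, c_r)$ produced by Theorem~\ref{th:ancof}, rather than the raw pair at step $k$, and verify that the pivot invariance along $\R$ chains keeps $p_m$ and $c_r$ separated in pivot while equal in structure, which is precisely what $\delta_{rep\_mut}=[0,1]$ demands. The Fecundity argument then reuses this same machinery for each successive replication event.
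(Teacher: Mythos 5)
Your proposal is correct and follows essentially the same route as the paper's own proof: chain the pivot-preserving recognition relation $\mathbf{R_{\delta_{mut}}}$ through the pre-split states, invoke $C$ at the splitting step (Time=$127$), and — as you correctly note in your final paragraph, after self-correcting the premature claim that the freshly separated child is already $\Delta$-related to the parent — connect the original parent to the matured child (Time=$151$) via $\Delta$ inside $(C \cup \mathbf{R_{\delta_{mut}}})^+$, yielding a nonempty $\mathbf{Parent_{\Delta}}$; fecundity is then obtained, as in the paper, by repeating this argument generation by generation over the growing loop colony before saturation.
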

\begin{proof}
These two axioms can be established by the observer in a specific state sequence as exemplified in Figure~\ref{ftr1} and Figure~\ref{ftr2} by repeatedly applying the recognition relation $\mathbf{R_{\delta_{mut}}}$ when entities are changing in number and states of cells (retaining the pivots) and applying the causal relation when a parent entity splits (e.g. at Time=$127$). The relation $\Delta$ connects the initial parent entity and the child entity at Time=$151$.

With respect to Figure~\ref{ftr1},  an entity is identified at Time=$0$ with associated pivot. Between time steps $[1\ldots126]$  entity changes in number and states of its cells but the pivot remains the same, hence as per the definition of $\mathbf{R_{\delta_{mut}}}$, the observer can recognize the entity in these successive states.
At Time=$127$, the (parent) entity is observed to be splitting into two identical copies. One of these is again recognized as the original parent entity because of its pivot and the second entity would be claimed to be causally related with the parent entity as per the definition of $C$. To see this, notice that the parent entity at Time=$126$ contains all the cells of the child entity appearing at Time=$127$, which satisfies the definition of $C$. Between time steps $128$ and $151$ both parent and child entities undergo changes in the number and states of their cells but their  pivots remain fixed. Hence they can again be recognized. Finally at Time=$151$ the child entity becomes identical to the original parent entity, therefore the parent entity at Time=$0$ and the child entity at Time=$151$ are related using $\Delta$. The transitive closure finally give us the final descendance relationship between the parent and the child entity.
\end{proof}

\begin{figure}[hbtp]
\begin{center}
\includegraphics[scale=0.7,trim=0 0 0 0,clip=]{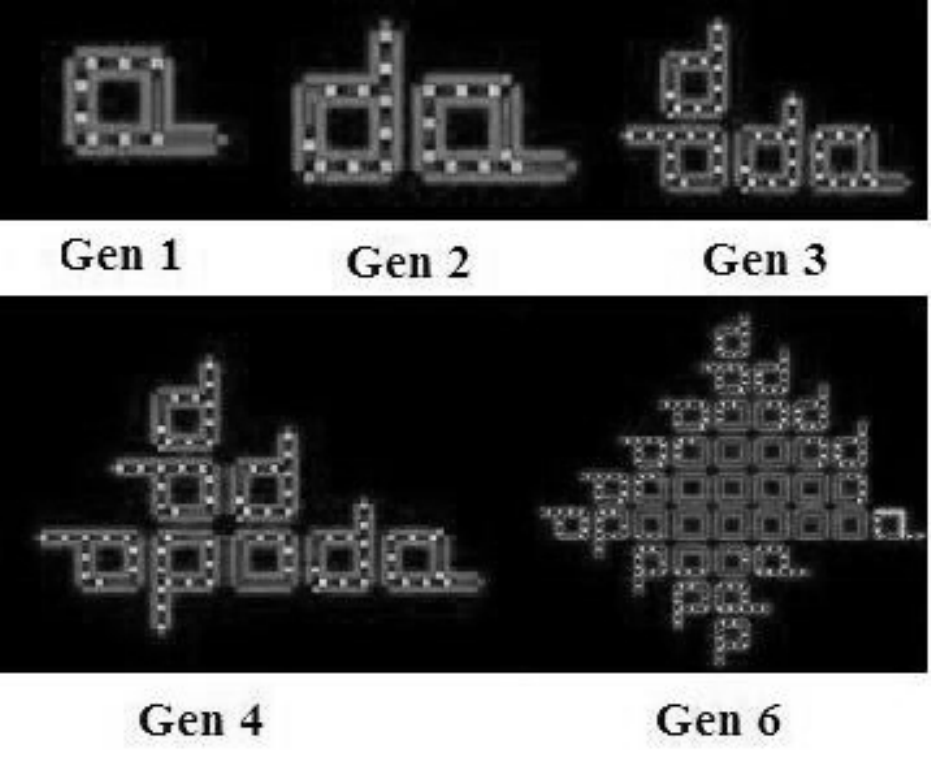}
\caption[Fecundity in Langton loop population]{Fecundity across generation in a population of Self Replicating Langton Loops; screen shots from~\cite{Sayama2005}} \label{ftr2}
\end{center}
\end{figure}

\subsubsection{Heredity}
\label{heredityformalized}

%%%%%% Added on 21-06-2007

Heredity, yet another precondition for evolution, can in general be observed in two different levels: Syntactic level and Semantic level. On \emph{syntactic level}, entity level inheritance is implied by the structural proximity between parents and their progenies ranging over several generations - though in case of continuous structural changes in the parental entities and epigenetic development in progenies, this would require an observer to establish structural similarities over a range of states as discussed earlier with the definition of $\mathbf{AncestorOf}$ relation. Also for syntactic inheritance to persist, design of the model needs to ensure that environment, which controls the reaction semantics of entities, remains approximately constant over a course of time so that structural similarities also result into continued reproductive behavior. 

Difficulty arises primarily on the level of multi parental reproduction - in this situation an observer might have to stipulate some kind of gender types and might have to relax the mechanism of recognizing the parent-child relationship in a way as happens for example in case of organic life, where male-female reproductive process (often) gives birth to a progeny belonging to ``only" one gender type. In such a case, for heredity, an observer need to ensure that, {\sf over a course of time all the gender types are sufficiently produced in the population.}

On the other hand it is also possible to observe inheritance on the semantic level (ignoring structural differences) in terms of \emph{semantic relatedness} between entities, whereby an observer can observe that progenies and their parental entities exhibit similarities in their  behaviors (e.g., reproductive) under near identical set of environments. This in turn would require an observer to identify the possible sequences of observable reactions between existing entities, which appear to be yielding new set of entities (children) and in the child generation as well there exist a similar observable reproductive process, which enables the (re)production of entities. Such an observation would enable the observer to abstract the reproductive processes currently operational in the model. The inherent difficulties in this view are obvious - in essence an observer needs to abstract the reproductive semantics from the observable reactions in the model, which in turn might require non trivial inferences in absence of the knowledge of the actual design of the model.

Considering the case of real-life from an observation view point, semantic view is in fact an abstraction over all the
reproductive processes existing across various species and levels including the case of bacterial organisms, where next generation of bacteria may contain a mix of genetic material from various parental bacterium of previous generation through the process of horizontal transmission. So while in case of syntactic inheritance an observer would only be able to establish inheritance across organisms belonging to same species, using semantic view, he could expand his horizon to the all organic life as a whole. 

%Thus for syntactic inheritance, we can define two or or more fixed number of gender types based higher level reproductive behavior as for example seen in the real-life on earth, using semantical perspective we can possibly get deeper understanding of inter species linkages by abstracting only the continuation of reproductive behavior over a course of time by any means whatever!

However, heredity as a mechanism of preservation of syntactic structures, appears to be crucial for those AES models where entities have very limited set of reproductive variations possible, that is, where environment supports only rare forms of entities to reproduce and any changes in the syntactic structure of these reproductive entities may result in the elimination of the reproductive capability. Real-life on earth as well as the model of the Langton loops are definitive examples where most of the variations in the genetic structure, or the loops geometry/transition rules result in the loss of reproductive/replicative capabilities. 

Also heredity usually requires further mechanisms to reduce possible undoing of current mutations in future generations owing to new mutations. Therefore, in order to establish inheritance in AES models, sufficiently many generations of reproducing entities need to be observed to determine that the number of parent-child pairs where certain characteristics (both syntactic and semantic) were inherited by child entities without further mutations is significantly larger than those cases where mutations altered the characteristics in the child entities. We can express it as the following axiom:

\begin{ax}[\textbf{Heredity}]~\label{axiom:heredity}
Let a statistically large observed subsequence of a run $\T$:
$$\Omega = \langle s_m, \ldots s_r
\rangle , m \ll r$$ Consider $$\mathbf{Parent_{\Delta}^{\Omega}} =
\{(e, e') \in \mathbf{Parent_{\Delta}^{min}} \mid \exists s_l, s_r \in \Omega\ \textit{s.t.}\ 
e \in E_{s_l} \wedge e' \in E_{s_r}\}$$ to be the set of all parent-child pairs
observed in $\Omega$. Again let $$\mathbf{Inherited_{\Omega}^i} = \{(e, e') \in \mathbf{Parent_{\Delta}^{\Omega}} \mid \exists
\mathit{char}_i \in \Upsilon\ .\  D(e, e')[i] = 
0_{\mathit{{diff}_i}} \}$$ to be the set of those cases of reproduction where $i^{th}$ characteristics were inherited without (further) mutations.
High degree of inheritance for $i^{th}$ characteristics $\mathit{char}_i$ implies that
$$\displaystyle \lim_{r \rightarrow \infty}|\mathbf{Parent_{\Delta}^{\Omega}}|/|\mathbf{Inherited_{\Omega}^i}| \leadsto 1$$ 
For syntactic inheritance to be observed in a population of entities, we should have some such characteristics
which satisfy this condition.
\end{ax}

The axiom of heredity together with the axiom of preservation of reproduction under mutation is to ensure that reproductive variation is maintained and propagated across generations. 

\subsubsection{Natural Selection}
\label{natselformalized}

There are several existing notions of selection in the literature on evolutionary theory \cite{Fut98,Ridley96,Ridley97,SS00,Mah97,Kimura83}. In the current framework natural selection is considered as a \emph{statistical inference} of \emph{average reproductive success}, which should be established by an observer on the population of self reproducing entities over an evolutionary time scale i.e., over statistically large number of states in a state sequence. Other notions of selection using fitness, adaptedness, or traits etc. are rather intricate in nature because these concepts are relative to the specific abstraction of ``common environment'' shared by entities and ``the environment-entity interactions'', which are the most basic processes of selection. Nonetheless selecting appropriate generic abstraction for these from the point of view of an observation process is not so simple. Therefore following the idea from~\cite[page 19]{selection-book1} that on evolutionary scale rate of reproduction is the the only attribute selected directly and characteristics affecting the rate of reproduction are selected only indirectly, a more straightforward approach is considered here whereby reproductive success is assumed to be an indicator of better adaptedness or fitness. We thus define the following (necessary) axioms for the natural selection as generally discussed in the literature~\cite{SS00}:

\begin{ax}[\textbf{Observation on Evolutionary Time Scale}]~\label{axiom-ns1} 
An Observer must observe statistically significant population of different reproducing entities for statistically large number of states in a run $\T$. 

Formally, for a statistically large subsequence $\Omega$ of $\T$, $\Omega = \langle s_m, \ldots s_r \rangle , m \ll r$, the observer determines set of reproducing entities $\mathbf{\Lambda_{min}}$ as follows:
\begin{eqnarray*}
\mathbf{\Lambda_{min}} &=& \Lambda \setminus \left\{p \in \Lambda \mid \exists p' \in \Lambda . (p', p)\in \mathbf{R^+_{\delta_{mut}}}\right\} \ \mbox{where} \\
\displaystyle \mathbf{\Lambda} &=& \bigcup_{s_j \in \Omega}SR(s_j) \mbox{ and }\\
SR(s_j) &=& \left\{p \in E_{s_j} \mid \exists c \in E_{\Omega}\ .\  (p, c) \in \p \right\} \\
&& \  \ \ \ \mbox{ is the set of all reproducing entities in state $s_j \in \Omega$.}
\end{eqnarray*} 
\end{ax}

\begin{ax}[\textbf{Sorting}]~\label{axiom-ns2} Entities in $\mathbf{\Lambda_{min}}$ should be different with respect to characteristics in $\Upsilon$ and there should exist differential rate of reproduction among these reproducing entities. Rate of reproduction for an entity is the number of child entities it reproduces before undergoing any mutations beyond observable limit. In other words, $ror: \mathbf{\Lambda_{min}} \rightarrow \N$ is defined as following: 
\begin{eqnarray*}
ror(p) &=& |Child_p| \ \mbox{where,} \\
Child_p &=& \left\{ c \in E_{\Omega} \mid\ \exists p' \in E_{\Omega}\ .\  \left[(p, p') \in \mathbf{R^*_{\delta_{mut}}}\right]\ \wedge \left[(p', c) \in \p \right]\right\} 
\end{eqnarray*}
Variance in the characteristics of entities in $\mathbf{\Lambda_{min}}$ is estimated as: $\forall char_i \in \Upsilon$
\begin{eqnarray*}
{\sigma}^2_{i} &=& \left[\sum_{e \in \mathbf{\Lambda_{min}}}(e[i])^2\right] - \left[|\mathbf{\Lambda_{min}}|.\mu^2_{i}\right] \mbox{ where,} \\
\mu_{i} &=& \frac{\sum_{e \in \mathbf{\Lambda_{min}}}e[i]}{|\mathbf{\Lambda_{min}}|}
\end{eqnarray*}
Similarly variance in the rates of reproduction is estimated as:
\begin{eqnarray*}
{\sigma}^2_{ror} &=& \left[\sum_{e \in \mathbf{\Lambda_{min}}}(ror(e))^2\right] - \left[|\mathbf{\Lambda_{min}}|.\mu^2_{ror}\right] \mbox{ where,} \\
\mu_{ror} &=& \frac{\sum_{e \in \mathbf{\Lambda_{min}}}ror(e)}{|\mathbf{\Lambda_{min}}|} 
\end{eqnarray*}
\end{ax}
The above two axioms though necessary are not sufficient to establish natural selection since these cannot be used as such to distinguish between natural selection with neutral selection~\cite{SS00}. The following axioms are therefore needed to sufficiently establish natural selection.

\begin{ax}[\textbf{Heritable Variation}]~\label{axiom-ns3} There must exist variation in the inherited mutations in the population of $\mathbf{\Lambda_{min}}$. Formally, let 
$$\mathbf{Ch_{mut}} = \left \{ c \in \mathbf{\Lambda_{min}} \mid\  \exists p \in \mathbf{\Lambda_{min}}\ .\  \left[(p, c) \in \p\right] \wedge \left[\exists \mathit{char}_i \in \Upsilon\ .\  0_{\mathit{diff}_i} \prec D(p, c)[i]\right] \right \}$$ 
be the set of child entities carrying reproductive mutations. Let $\mathbf{Var\_Ch_{mut}}$ be the largest subset of $\mathbf{Ch_{mut}}$ consisting of those child entities which are unique with respect to at least one characteristic in $\Upsilon$, that is, $$\forall c, c' \in \mathbf{Var\_Ch_{mut}} \ .\  \exists \mathit{char}_i \in \Upsilon \ .\  0_{\mathit{diff}_i} \prec D(c, c')[i]$$ Then axiom of heritable mutation demands that $$\displaystyle \lim_{r \rightarrow \infty}\frac{|\mathbf{Var\_Ch_{mut}}|}{|\mathbf{\Lambda_{min}}|} \leadsto 1$$ Informally, which means, a significant fraction of the population of all reproducing entities is unique in its characteristics.
\end{ax}

\begin{ax}[\textbf{Correlation}]~\label{axiom-ns4} There must be non zero correlation between heritable variation and differential rate of reproduction.
Formally, a linear correlation coefficient $r_{i}$ (also known as the Pearson's product moment correlation coefficient~\cite{statbook_cox}) between the characteristic $char_i \in \Upsilon$ and the rates of reproductions for the entities in $\mathbf{Var\_Ch_{mut}}$ is defined as
\begin{eqnarray}
r_{i} &=& \frac{\displaystyle \sum_{c \in \mathbf{Var\_Ch_{mut}}}\left(c[i] - \mu_i\right)\left(ror(c) - \mu_{ror}\right)}{\sqrt{\displaystyle \sum_{c \in \mathbf{Var\_Ch_{mut}}}(c[i] - \mu_i)^2 \displaystyle \sum_{c \in \mathbf{Var\_Ch_{mut}}}(ror(c) - \mu_{ror})^2}} \mbox{ where,} \\
\mu_{i} &=& \frac{\displaystyle \sum_{c \in \mathbf{Var\_Ch_{mut}}}c[i]}{|\mathbf{Var\_Ch_{mut}}|} \nonumber \mbox{ and} \\
\mu_{ror} &=& \frac{\displaystyle \sum_{c \in \mathbf{Var\_Ch_{mut}}}ror(c)}{|\mathbf{Var\_Ch_{mut}}|}  \nonumber
\end{eqnarray}
Therefore a non zero correlation between heritable variation and differential rate of reproduction would require that $$\exists char_i \in \Upsilon. r_i > 0$$
\end{ax}
Informally, this means as the values of characteristics inherited by the child entities change, rate of reproduction also changes. Based upon the environmental pressures with respect to a particular characteristics, rate of reproduction might either increase or decrease as the characteristic changes.

The last two axioms state that there must be significant variation in population of entities with respect to their characteristics, which must be maintained for evolutionarily significant periods and that this variation must be caused by the differences in inheriting mutations from the parent entities, which in turn directly  affect the rate of reproduction.

In case of real-life, populations show variation in the characters on all levels, from gross morphology to genetic sequences. These variations in characters are generated
randomly by mutation and recombination~\cite{Ridley97} on genetics material. Based upon the variation in these characters, entities in natural population also vary in their reproductive successes.
%%%%%%%%%%%%%%%%%%%%%%%%%%%%%%%%%%%%%%%%%%%%%%%%%%%%%%%%%%%%%%%%%%%%%%%%%

Yet another important constraint from the evolutionary perspective is that reproduction in a model should not entirely cease because of the (harmful) mutations. Though this constraint is implicitly captured in the axioms~\ref{axiom-ns1} and \ref{axiom-ns2}, we can still restate it below primarily since this weaker version enables us to directly argue for the reasons of the absence of evolutionary behavior in a model:

\begin{ax}[\textbf{Preservation of Reproduction under Mutations}]~\label{axiom:continuation} 
Some mutations do preserve reproduction. Formally, 
\begin{eqnarray*}
(\forall s \in E_{\Omega})\ &.&\ SR(s) \neq \emptyset  \Rightarrow  (\exists e \in T)\ .\  ror(e) > 0 \mbox{ where,} \\
T &=& \displaystyle \bigcup_{p \in SR(s)} [Child_p] \cup [\{p' \mid (p, p') \in \mathbf{R^+_{\delta_{mut}}}\}]
\end{eqnarray*}
\end{ax}
Informally, this means, if there exist reproductive entities in a state $s$, either some mutants of these entities or their children should continue reproducing further.

\section{Computational Complexity of Observing Evolution}\label{e-complexity}

In the next few sections, we will estimate upper bounds on the worst case time complexity to the problem of establishing various axioms dealing with various evolutionary components in the framework for arbitrary AES models. We will also estimate corresponding bounds for the case of Langton's CA based model to demonstrate that often generic upper bounds established for arbitrary AES models may be refined for the case of specific models at hand. 

The estimates for space complexity will be addressed only sometimes. Primary reason for that is that space (memory) requirement is often dependent upon the actual model at hand, the syntactic nature of the entities as determined by an observation process, and often is linear w.r.t. the total number of states observed. 

An important problem to be considered while providing estimates on the computational complexities is that observed state progression during simulations might not correspond to the actual underlying reaction semantics for a specific entity. In other words observed states during simulations progress according to the underlying updating rules for the model, which determine which subset of entities would react in any state. Since an automated discovery of the updation rules in not what is considered here, and automated discover of the evolutionary behavior is considered to be independent of the underlying updation semantics, we assume that all those entities, which are enabled to react in each state, are indeed allowed to react. In cases where it is not true, an observation process stores state subsequences of certain finite lengths where all (or most of) the enabled entities have been observed to react and then merges all the states in these subsequences into a single meta state, which reflect the effect that most of those entities which can react have actually reacted. However, which of the entities will react is still determined by the underlying updation rules and to be followed by observation process as well.  

\subsection{Computational Complexity of Entity Recognition}\label{complexity}

Following results appear in~\cite{Misra09a}:

\begin{theorem}\label{complexity-entity}
The problem of entity recognition using structural (syntactic) constraints is NP-hard.
\end{theorem}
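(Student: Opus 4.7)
The plan is to establish NP-hardness by a polynomial-time reduction from a known NP-hard decision problem to the decision version of entity recognition: given a state $s$ and the observer's structural criterion, does there exist an entity multiset $E_s \subseteq \mathcal{P}(s)$ satisfying all the recognition axioms? The cleanest source problem is Exact Cover by 3-Sets (X3C): given a ground set $X$ with $|X|=3q$ and a family $\mathcal{C}$ of 3-element subsets of $X$, decide whether $\mathcal{C}$ contains a subfamily that partitions $X$.

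First, I would construct the reduction. Given an X3C instance $(X,\mathcal{C})$, build a state $s$ whose set of atomic structural elements is in bijection with $X$, each occurring with multiplicity $1$. Then specify the observer's structural (syntactic) constraint so that a submultiset $e \subseteq s$ is admissible as an entity if and only if it corresponds, under the bijection, to some triple in $\mathcal{C}$. Because $\mathcal{C}$ has polynomial size in the X3C input, this criterion can be stored and checked in polynomial time, so the construction is a legitimate polynomial-time reduction.

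Second, I would argue correctness. Since we are in the non-overlapping regime (the case the theorem explicitly targets), the Axiom of Unique Identification of Entities together with the Axiom of non-Ignorance force $E_s$ to be a cover of $s$ by pairwise disjoint admissible entities: non-Ignorance applied to the subset relation between $s$ and each singleton substate prevents any atomic element of $s$ from being omitted by the observer, while tagging being one-to-one combined with non-overlap prevents reusing the same atomic element in two entities. Hence a valid $E_s$ is precisely an exact cover of $X$ by triples of $\mathcal{C}$, and such an $E_s$ exists iff the X3C instance is a YES-instance. Together with the polynomial-time reduction, this gives NP-hardness of entity recognition.

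The main obstacle will be pinning down what counts as an admissible ``structural (syntactic) constraint'' imposed by the observer, so that the 3-subset membership criterion used in the reduction is a legal observer specification rather than a contrived oracle. I would address this by noting that the framework already permits arbitrary decidable criteria on submultisets of a state (as reflected in the pivot-and-connectivity criterion for Langton loops), and that the explicit enumeration of allowed triples from $\mathcal{C}$ fits within this latitude. A secondary subtlety is the overlapping regime: if entities with shared atomic elements are permitted, the reduction should instead target Set Cover (still NP-hard), replacing exact cover with a covering subfamily and adjusting tagging so that distinct overlapping entities receive distinct tags, which preserves the reduction at the cost of a slightly more elaborate correctness argument.
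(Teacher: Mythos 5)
Two preliminary remarks. First, the paper does not actually prove this theorem: it is imported from~\cite{Misra09a} and only followed by an informal remark that constructing all admissible sub-multisets of a state of size $n$ may take $\cO(2^n)$ steps, so there is no in-paper proof to compare against line by line. Second, your overall strategy --- a polynomial reduction from Exact Cover by 3-Sets in which the observer's structural criterion is ``membership in $\mathcal{C}$'' --- is a sensible shape for such a hardness proof, and the construction itself is clearly polynomial and a legitimate observer specification in the sense of the framework.

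The genuine gap is in your correctness argument. You claim that the Axiom of non-Ignorance, applied to ``the subset relation between $s$ and each singleton substate,'' forces every atomic element of $s$ to be covered by some recognized entity, so that a valid $E_s$ is exactly an exact cover. That is not what the axiom says: non-Ignorance is a consistency condition between two \emph{observed states} $s \subset s'$ of the same run $\T$ (it forbids identifying, in $s'$, an entity made of atoms that were already available in $s$ while omitting it in $s$); singleton sub-multisets of your constructed state are not states of the run, so the axiom cannot be applied to them, and nothing in Axioms 1--3 requires that every atom of a state belong to some entity --- indeed the paper explicitly describes $E_s$ in the non-overlapping case as a partition of a \emph{subset} of $s$ (cf.\ Theorem~\ref{complexity-entity-nooverlap}). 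Consequently, in your reduced instance the empty entity set, or any single admissible triple, already satisfies all the recognition axioms, so the question ``does a valid $E_s$ exist'' is always YES and the reduction does not decide X3C. To repair this you must build exhaustiveness into the decision problem itself (e.g., ask whether there exists an admissible entity set that accounts for \emph{every} atomic element of $s$, or a maximum/complete identification), and argue that this is the intended meaning of ``entity recognition using structural constraints''; with that strengthening the X3C reduction does go through. A secondary slip: one-to-one tagging does not prevent entities from sharing atoms --- the framework introduces tags precisely to permit overlapping or identical structures --- so disjointness in the non-overlapping regime is a modeling assumption, not a consequence of the tagging axiom, and your Set-Cover variant for the overlapping case needs the same exhaustiveness fix for the same reason.
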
   

In the worst case, in a state $s$ of size $n$, it would take no less than $\cO(2^n)$ steps to construct the entity set consisting of all possible multisubsets of $s$ in it. In practice also, during initial stages of the simulations, there might not be any hard constraint defining the entity structure and all possible multisubsets may represent valid entities. %This scenario corresponds to  defining $\mathcal{F}$ as a tautology and would require these many steps. 
If the size of a run $\T$ is $r$, then entity recognition using structural constraints in all the states ${s_0}, {s_1}, \ldots, {s_r}$ may require in the worst case $\cO(r2^n)$ steps.

Let us also consider the case, where entities do not have overlapping structures. In that case, the entity set $E_s$ would  be a partition of a subset of $s$. Following results sets upper bound on the worst case computational complexity of entity recognition in this case:
\begin{theorem}\label{complexity-entity-nooverlap}
Generating entity sets consisting of entities with non overlapping structures, for all the states of a run $\T$ of size $r$, could take time steps upper bounded by $\cO(rn2^n)$, where expected size of a state is $\cO(n)$.
\end{theorem}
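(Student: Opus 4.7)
The plan is to exhibit an algorithm whose worst-case running time matches the claimed bound, by extending the enumeration already used in the proof of Theorem~\ref{complexity-entity}. Recall from that result that for a state $s$ of size $n$, enumerating every multisubset of $s$ (the candidate pool of potential entities) costs $\cO(2^n)$ time, so over the $r$ states of the run $\T$ the raw enumeration is $\cO(r 2^n)$. For the non-overlap variant I would add one bookkeeping step per candidate and account for its cost, relying on the asymptotic multiplication and sum rules from Section~\ref{pre-complexity}.

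Concretely, for each state $s$ I would scan the $\cO(2^n)$ multisubsets $A \subseteq s$ in a fixed canonical order (for instance, lexicographic on their characteristic vectors over the atomic elements of $s$), maintaining a running "covered" multiset $U$, initially empty. For each candidate $A$, first test structural validity against the observation process's constraints; since $|A| \leq n$, this check fits in $\cO(n)$ steps. Next, test whether $A \cap U = \emptyset$: represented as sorted arrays or hash tables keyed on the atomic elements of $s$, both multisets have size $\cO(n)$ and so the intersection takes $\cO(n)$ time. If both tests succeed, add $A$ to $E_s$ and update $U := U \uplus A$ in $\cO(n)$ time. Summing across all $\cO(2^n)$ candidates gives $\cO(n \cdot 2^n)$ per state, and summing over the $r$ states yields the stated $\cO(r n 2^n)$ bound.

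The main point that needs careful justification is the per-candidate $\cO(n)$ overlap check; this reduces to the standard RAM-model claim that multiset intersection and join, when both operands have size bounded by $|s| = \cO(n)$, take linear time in the representation. A secondary subtlety is that the fixed scan order influences which particular partition is produced, but since the theorem only asks for an upper bound on constructing some valid entity set $E_s$ consistent with Axioms 1-3, any canonical deterministic order suffices; no search over the much larger space of all partitions (whose cardinality is the Bell number $B_n$, growing strictly faster than $2^n$) is needed. Hence an upper bound of $\cO(r n 2^n)$ follows immediately from the algorithm just described.
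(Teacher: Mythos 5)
Your argument is correct and, as far as one can tell, follows the same route the paper intends: the paper itself states Theorem~\ref{complexity-entity-nooverlap} without proof (the entity-recognition bounds are imported from the cited prior work), but the surrounding discussion derives $\cO(r2^n)$ from enumerating all $\cO(2^n)$ multisubsets per state, and the extra factor $n$ in the non-overlap case is most naturally explained exactly as you do -- $\cO(n)$ per-candidate bookkeeping (overlap test against the covered multiset, update of it, and writing out a candidate of size at most $n$) on a RAM, then the sum and product rules over the $2^n$ candidates and $r$ states. Your observation that only one entity set per state must be produced, so no enumeration of all $B_n$ partitions is needed, is also the right reading of ``generating entity sets.'' The one assumption doing real work is that the observer's structural-validity test per candidate is itself $\cO(n)$; this is not guaranteed by the framework in general (Theorem~\ref{complexity-entity} shows recognition under arbitrary structural constraints is NP-hard), but it is consistent with the accounting convention the paper already uses for the $\cO(r2^n)$ overlapping-case estimate, so your proof is a legitimate algorithmic witness for the stated upper bound.
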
   

For the specific cases of recognizing Langton loops in a CA model and Lambda entities in the AlChemy, we have the following bounds: %as defined in the Section~\ref{sec:observe}. 

%Because as per the definition,  Langton loops in any given state are mutually exclusive w.r.t. the sets of non-quiescent cells they consist of, determing all such loops would amount to a linear time walk on all the cells of the lattice using either {\it breadth-first search} or {\it depth-first search} techniques~\cite[Chap. 22]{algorithms}. Therefore if there are $n = l \times m$ cells in the CA lattice, the space and time complexities would both be linear in the number of cells, that is $\cO(n)$. The case of Alchemy requires three kinds of orderings of the existing $\lambda$ terms in the chemistry at any instance. The sorting of the $\lambda$ terms w.r.t. their sizes can be achieved in $\cO(n\log n)$ steps using pair-wise comparison (e.g., {\it Quick Sort} or {\it Merge Sort})~\cite{knuth98:_sortin_searc}, where $n$ is the number of terms in the soup. Finally, lexicographic ordering within the group of terms with the same size would take additional $\cO(n\log n)$ steps, yielding $\cO(n\log n)$ as the upper bound for the worst case complexity for identifying and tagging all the lambda entities in a given state in AlChemy. Therefore we have,

\begin{lemma}\label{complexity-en-loop}
The worst case computational complexity of observing Langton loops in a CA model simulation is upper bounded by $\cO(n)$, where $n$ is the size of the lattice.
\end{lemma}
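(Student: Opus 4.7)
\begin{proofn}[Proof Proposal]
The plan is to exploit the fact that, unlike the generic case handled by Theorem~\ref{complexity-entity} where an entity in a state of size $n$ can in principle be any multisubset (yielding the $\cO(2^n)$ blow-up), a Langton loop has a very rigid structural definition: an entity is a maximal connected set of non-quiescent cells together with a pivot cell. This collapses entity recognition to a standard connected-components computation on the CA lattice.

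First I would show that a single state $s$ of a two-dimensional, $5$-neighborhood CA lattice of size $n$ can be processed in $\cO(n)$ steps. The algorithm is a linear scan of all $n$ cells; whenever an unvisited non-quiescent cell $c = \lan(x,y), i\ran$ is encountered, launch a breadth-first (or depth-first) traversal from $c$ using the $Neigh$ function, marking every reachable non-quiescent cell as belonging to the same entity $z$. Since each cell has a constant (at most $5$) neighborhood and each cell is enqueued and dequeued at most once across the whole scan, the total work across all connected components in $s$ is $\cO(n)$. During the same traversal the pivot can be assigned in constant time per component (for example, the lexicographically smallest coordinate encountered), so building the pair $[z, pivot(z)]$ adds no more than $\cO(n)$ overall. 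The Axiom of Unique Identification of Entities (Axiom~1) is automatically satisfied because distinct maximal connected components of non-quiescent cells are disjoint, and the pivot of any component lies inside that component and hence uniquely identifies it; no extra tagging function needs to be invoked.

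Next I would verify that the Axiom of non-Ignorance is vacuously respected: since every non-quiescent cell is placed in exactly one entity by the traversal, the observer does not omit any atomic element from identification. The construction of $E_s$ as a multiset is therefore complete in $\cO(n)$ steps, and its size $|E_s|$ is itself $\cO(n)$, which also bounds the space required to store the abstraction of $s$.

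The main obstacle, and the step requiring a little care, is bookkeeping the constants hidden in the big-Oh notation: the observer must, in addition to the traversal, compute $co_x$, $co_y$ projections and the $pivot$ value per entity, but each of these is amortized across the traversal of the component's cells and adds only a constant multiplicative factor. Finally, I would note that the lemma statement concerns recognition \emph{in a simulation} and so implicitly considers a single observed state; extending this to a run $\T$ of length $r$ would multiply the bound to $\cO(rn)$, but this falls outside the per-state bound stated in the lemma. Hence the worst-case time complexity of identifying Langton loops in one state is $\cO(n)$, as claimed.
\end{proofn}
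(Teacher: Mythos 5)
Your proof is correct, but note that the paper itself does not prove Lemma~\ref{complexity-en-loop}: it is imported without proof from~\cite{Misra09a} (the section opens with ``Following results appear in...''), so there is no in-paper argument to compare against. Your connected-components argument is the natural one and is consistent with how the paper later uses the lemma: entities in Langton's model are exactly the maximal connected sets of non-quiescent cells with an associated pivot, the $5$-cell neighborhood gives constant degree, and a single BFS/DFS sweep touches each cell $\cO(1)$ times, so per-state recognition is $\cO(n)$; your reading of the bound as per-state (with $\cO(rn)$ over a run of length $r$) also matches how the paper quotes $\cO(r2^n)$ and $\cO(rn2^n)$ for the generic cases. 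One small caution: you propose taking the pivot to be the lexicographically smallest coordinate of the component, but the paper requires the pivot to act as a \emph{state invariant} of the loop (it is what $\mathbf{R_{\delta_{mut}}}$ keys on across states), and the lexicographic minimum of a component can move as the loop grows or sheds an arm even when the loop ``does not shift''; this does not affect the $\cO(n)$ complexity bound you are proving, but if you want your construction to also support the recognition relation used later, you should either choose a pivot rule that is stable under the admitted mutations or simply leave $pivot$ as the observer-supplied constant-time function the paper assumes.
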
  

And 

\begin{lemma}\label{complexity-en-alchemy}
The worst case computational complexity of identifying and tagging all the lambda entities in a given state in AlChemy is upper bounded by $\cO(n \log n)$, where $n$ is the number of $\lambda$ terms in a state.
\end{lemma}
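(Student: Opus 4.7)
The plan is to exploit the fact that in AlChemy the state is already given as a multiset of $\lambda$ terms, so each $\lambda$ term is \emph{ipso facto} an entity: there is no need to search for connected subsets or enumerate multisubsets as there was for the overlapping--structures case covered by Theorem~\ref{complexity-entity}. Thus identifying the entity set $E_s$ reduces to enumerating the $n$ terms, which is $\cO(n)$. The non-trivial work is \emph{tagging}: by the Axiom of Unique Identification, multiple structurally identical $\lambda$ terms appearing in the same state with multiplicity greater than $1$ must still receive distinct tags, and this has to be done consistently with the multiset equality semantics of the state.

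The first step I would take is to fix a canonical comparison relation on $\lambda$ terms, for instance a lexicographic ordering on their string/tree representations (under some normal form). Under the working assumption standard in this framework that term size is bounded by a constant independent of $n$ (or is otherwise absorbed as a per-operation constant, consistent with how the CA lemma treats cell comparisons), each pairwise comparison is $\cO(1)$.

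The second step is to sort the multiset of $n$ terms using a comparison-based algorithm such as merge sort, incurring $\cO(n \log n)$ comparisons in the worst case. After sorting, all structurally identical $\lambda$ terms occupy a contiguous block, so a single linear sweep over the sorted sequence (in $\cO(n)$ time) suffices to walk through each block and attach a distinct tag to every copy — for instance, by assigning to the $j$-th copy of term $t$ the tag $(t, j)$. This tagging respects the constraints in the multiset-function definition because within each block the second coordinates $j$ are all distinct, and across blocks the first coordinates $t$ differ, so Tagging is one-to-one. Combining the two phases yields $\cO(n \log n) + \cO(n) = \cO(n \log n)$, matching the claimed bound by the asymptotic sum rule from Section~\ref{pre-complexity}.

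The main obstacle is the justification that the per-comparison cost can legitimately be treated as constant: $\lambda$ terms can in principle be large, and a strict accounting would replace $\cO(n \log n)$ with $\cO(n \log n \cdot L)$, where $L$ bounds the cost of a single canonical comparison. I would address this by observing that the lemma parameterizes complexity \emph{only} in $n$, the number of terms in a state, mirroring the convention already used for the CA case in Lemma~\ref{complexity-en-loop}, where the lattice size rather than cell-state bit-length drives the bound. A brief remark noting that sorting is strictly necessary (since tagging must be consistent with multiset equality, and any tagging procedure must in the worst case distinguish equal terms, forcing a comparison-based lower bound of $\Omega(n \log n)$) would also sharpen the statement from an upper bound to a tight bound, although the lemma itself only requires the upper direction.
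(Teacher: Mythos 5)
The paper itself contains no proof of this lemma: it is one of the results quoted from the earlier entity-recognition paper (cite{Misra09a}), so there is no in-paper argument to compare against. Your sort-and-sweep reconstruction is the natural way to obtain the stated bound and it is sound: since an AlChemy state is already a multiset of $\lambda$ terms, identification is a linear enumeration, and a comparison sort followed by a single pass to hand out distinct tags within each block of identical terms gives $\cO(n\log n)+\cO(n)=\cO(n\log n)$, provided one adopts the convention (consistent with the lemma being parameterized only by the number of terms $n$) that a single term-to-term comparison costs $\cO(1)$; without that convention the honest bound is $\cO(nL\log n)$ with $L$ the per-comparison cost, and you are right to flag this explicitly rather than hide it.

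The one part I would drop or weaken is the closing claim that sorting is \emph{strictly necessary} and hence that $\Omega(n\log n)$ is a lower bound for tagging. Tagging by itself, in the sense required by the Axiom of Unique Identification, can be done in $\cO(n)$ by assigning positional indices as tags, with no comparisons at all. A super-linear lower bound would only follow if the observation process were additionally required to recover the multiset structure (multiplicities, or a canonical structure-determined tagging so that identical states receive identical entity sets per the soundness axiom) using only pairwise comparisons, and even then it would be a lower bound in the comparison model rather than unconditionally. Since the lemma asserts only an upper bound, this aside is not needed and, as stated, overreaches; the upper-bound portion of your argument stands on its own.
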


\subsection{Computational Complexity of Observing Evolutionary Components}
%%%%%%%%%%%%%%%%%%%%%%%%%%%%%%%%%%%%%%%%%%%%%%%%%%%%%%%%%%%%%%%%%%%%%%%%%%%%%%
%\include{case-studies}

We can now discuss some of the computational complexity theoretic aspects of observing various components of evolution. We will assume that all the states in a simulation are of comparable size and use $\cO(n)$ as the size of any state. Also we will use the following notations: \\\\
$t_c$:   expected number of time steps required to determine membership of an entity pair in the relation $C$.\\
$t_\Delta$:  expected number of time steps required to determine membership of an entity pair in the relation $\Delta$.\\
$t_{\delta_{mut}}$:  expected number of time steps required to determine membership of an entity pair in the relation $\R$.\\
$t_=$:  expected number of time steps required to compare two entities for equality checking.\\
$t_D$:  expected time steps required to compute function $D$ to check the equality (or inequality) of the characteristics of two entities.
%\begin{description}
%\item[$t_c$]   expected number of time steps required to determine membership of an entity pair in the relation $C$.
%\item[$t_\Delta$]  expected number of time steps required to determine membership of an entity pair in the relation $\Delta$.
%\item[$t_{\delta_{mut}}$]  expected number of time steps required to determine membership of an entity pair in the relation $\R$.
%\item[$t_=$]  expected number of time steps required to compare two entities for equality checking.
%\item[$t_D$]  expected time steps required to compare function $D$ to check the equality (or inequality) of the characteristics of two entities.
%\end{description}

\subsubsection{Computational Complexity of Observing Entity Level Reproduction} 

Establishing the case for the entity level reproduction in the simplest case where there are no epigenetic developments in the child entities minimally demands identifying a single instance of a reproducing entity and its progeny in the next state during one simulation. In other words, suppose an observer needs to determine that an entity $p$ in a state $s$ is an instance of a reproducing entity. Then the observer needs to establish that under its specified definition of the causal relation $C$, there exists another entity $c$ in the state $s+1$ such that $(p, c) \in C$ and that the reproductive mutations in $c$ with respect to $p$ are bound by $\delta_{rep\_mut}$, i.e., $(p, c) \in \Delta$, and that there does not exist any other entity in the state $s$, which is recognized as mutating to $c$. 

In this process, let determining membership the membership of $(p, c)$ in $C$, $\Delta$, and $\R$ may take $t_c$, $t_{\Delta}$, and $t_{\delta_{mut}}$ steps at worst respectively, then the whole process would at worst take $N_p^{(s)} = t_c + t_{\Delta} + |E_s|t_{\delta_{mut}}$ steps. Since for a state $s$, such a reproducing instance may not be found quickly, in the worst case all the entities in the state $s$ might need to be assessed under these steps. Therefore search for an reproducing instance in a state $s$ may take at worst 
\begin{eqnarray*}
\displaystyle \sum_{p \in E_s}N_p^{(s)} &=& |E_s|N_p^{(s)} \\ 
&=& |E_s|(t_c + t_{\Delta} + |E_s|t_{\delta_{mut}}) \\ 
&\leq& 2^n(t_c + t_{\Delta} + 2^nt_{\delta_{mut}})\\ 
&=& \cO(2^n \max\{t_c, t_{\Delta}, t_{\delta_{mut}}2^{n}\}) 
\end{eqnarray*}
 steps, where $|E_s| \leq 2^n$. Since finding such a state $s$, where a reproducing entity may be present itself may require search into a potentially large state subsequence of a run, it might take $$\cO(\eta)*\cO(2^n\max\{t_c, t_{\Delta}, t_{\delta_{mut}}2^{n}\}) = \cO(\eta 2^n \max\{t_c, t_{\Delta}, t_{\delta_{mut}}2^{n}\})$$ steps to establish the entity level reproduction, where $\eta$ is the number of states in the state subsequence used in the search assuming that all the states are of comparable sizes. Therefore we have

\begin{theorem}\label{complexity-entity-rep}
Given the sets of entities in each state, additional time steps required for observing spontaneous entity level reproduction, i.e., reproduction without epigenetic development in the child entities and mutational changes in the parent entities, in an AES is upper bounded by $\cO(\eta 2^n \max\{t_c, t_{\Delta}, t_{\delta_{mut}}2^{n}\})$, where $\eta$ is the number of states observed before first instance of entity level reproduction is recognized.
\end{theorem}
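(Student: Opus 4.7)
The plan is to bound the cost of a brute-force search that, for each candidate parent entity $p$ in each state $s$ of a finite prefix of the run, attempts to certify $p$ as a reproducer by exhibiting some $c \in E_{s+1}$ satisfying the three conditions required by $\p$ in the spontaneous (epigenesis-free) case: $(p,c) \in C$, $(p,c) \in \Delta$, and no other entity $p' \in E_s$ admits $\R(p') = c$. I will first cost one certification attempt for a fixed pair $(p,c)$, then amortize over all candidates in a single state, and finally multiply by the number of states $\eta$ inspected.

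For a single pair $(p,c)$, membership testing in $C$ costs $t_c$ and membership testing in $\Delta$ costs $t_{\Delta}$ by definition of the constants introduced just before the theorem. The remaining clause---no alternative parent $p' \in E_s$ witnesses $\R(p') = c$, as demanded by the axiom of reproductive causality---forces an exhaustive sweep over $E_s$ at cost $|E_s| \cdot t_{\delta_{mut}}$. Summing, one attempt for the pair $(p,c)$ costs $N_p^{(s)} = t_c + t_{\Delta} + |E_s|\,t_{\delta_{mut}}$ steps. Since the observer may need to exhaust $E_s$ before striking a reproducer, the worst-case cost at a single state is $|E_s| \cdot N_p^{(s)}$.

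Next I would invoke $|E_s| \le 2^n$ (valid because an entity in a state of size $n$ is a multisubset of that state, as used in Theorem~\ref{complexity-entity}). Substituting yields the per-state bound $\cO\bigl(2^n(t_c + t_{\Delta} + 2^n t_{\delta_{mut}})\bigr)$, which collapses via the asymptotic rule $\cO(f_1)+\cO(f_2)=\cO(\max\{f_1,f_2\})$ recalled in Section~\ref{pre-complexity} to $\cO\bigl(2^n \max\{t_c,\,t_{\Delta},\,t_{\delta_{mut}}2^n\}\bigr)$. Since by definition $\eta$ is the number of states examined before the first reproduction is witnessed, multiplying the per-state bound by $\eta$ delivers the claimed $\cO\bigl(\eta\, 2^n \max\{t_c,\,t_{\Delta},\,t_{\delta_{mut}}2^n\}\bigr)$.

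The only subtle step is justifying that the axiom of reproductive causality is enforced \emph{exactly} by the $|E_s|\,t_{\delta_{mut}}$ sweep: one has to confirm that in the spontaneous regime (no parental mutation before observing the child, no epigenetic unfolding after) the tuple belongs to $\p$ iff the three local checks above succeed, so that no additional search over earlier or later states is needed to rule out transitive artifacts of $(C \cup \R)^{+} \cap \Delta$. Once that reduction to local checks is in place, the remainder is the routine asymptotic calculation outlined above and the bound follows.
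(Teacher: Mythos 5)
Your proposal is correct and follows essentially the same route as the paper's own argument: the same per-pair cost $N_p^{(s)} = t_c + t_{\Delta} + |E_s|\,t_{\delta_{mut}}$ (with the $|E_s|\,t_{\delta_{mut}}$ sweep enforcing reproductive causality), the same amortization $|E_s|\cdot N_p^{(s)}$ per state with $|E_s|\le 2^n$, and the same final multiplication by $\eta$ states before the first observed reproduction. Your closing remark that the spontaneous case reduces $\p$-membership to purely local checks between consecutive states is exactly the (implicit) premise the paper also relies on.
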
   

The case where entities do not have overlapping structures, total number of entities in a state are restricted by the number of atomic structures, that is, $|E_s| \leq n$. Therefore we have the following corresponding corollary:
  
\begin{cor}\label{complexity-entity-rep-nooverlap}
Given the sets of entities in each state, additional time steps required for observing spontaneous entity level reproduction in an AES where entities do not have overlapping structures is upper bounded by $$\cO(\eta n\max\{t_c, t_{\Delta}, t_{\delta_{mut}}n\})$$
\end{cor}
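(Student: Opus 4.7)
The plan is to obtain Corollary 3.6 as a direct specialization of Theorem 3.5, substituting a tighter bound on $|E_s|$ that is available under the non-overlapping assumption. The whole argument reuses the step-counting of Theorem 3.5 verbatim; the only new ingredient is the combinatorial observation that restricts $|E_s|$.

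First I would justify the bound $|E_s| \leq n$. Since entities in $E_s$ are multisubsets of the state $s$ (of size $\cO(n)$) and, by assumption, no two entities share atomic elements, the multisets in $E_s$ are pairwise disjoint submultisets of $s$. Each contains at least one atomic occurrence of some element of $s$, and each such occurrence can appear in at most one entity; hence $\sum_{e \in E_s} |e| \leq |s| = \cO(n)$, and in particular $|E_s| \leq n$. This is the only structural fact about the non-overlapping regime that the proof needs.

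Next I would redo the accounting from Theorem 3.5 with this sharper bound in place of $|E_s| \leq 2^n$. For a fixed candidate parent $p \in E_s$, verifying that some $c \in E_{s+1}$ witnesses reproduction requires at most $t_c + t_\Delta$ steps to check $(p,c) \in C$ and $(p,c) \in \Delta$, plus an additional $|E_s|\, t_{\delta_{mut}}$ steps to rule out any alternative $p' \in E_s$ with $\R(p') = c$, giving
\[
N_p^{(s)} \;=\; t_c + t_\Delta + |E_s|\, t_{\delta_{mut}} \;\leq\; t_c + t_\Delta + n\, t_{\delta_{mut}}.
\]
Summing over all $p \in E_s$ and using $|E_s| \leq n$ again gives a per-state cost of $n\,(t_c + t_\Delta + n\, t_{\delta_{mut}}) = \cO(n \max\{t_c, t_\Delta, t_{\delta_{mut}} n\})$, and multiplying by the $\cO(\eta)$ states inspected before a reproducing instance is found yields the claimed $\cO(\eta n \max\{t_c, t_\Delta, t_{\delta_{mut}} n\})$ bound.

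There is no real obstacle here — the content is the substitution $2^n \mapsto n$ in the chain of inequalities — but the one place to be careful is ensuring that the same $|E_s| \leq n$ bound applies both when ranging over candidate parents $p$ and when ranging over potential alternative mutators $p'$ inside $N_p^{(s)}$, since both contribute factors that were previously $2^n$. Both counts are over entities in a single state, so the non-overlapping hypothesis bounds each by $n$, and the product $n \cdot n\, t_{\delta_{mut}}$ is what produces the inner $t_{\delta_{mut}} n$ term inside the $\max$.
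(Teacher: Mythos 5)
Your proposal is correct and matches the paper's own route: the paper obtains this corollary by noting that non-overlapping structures force $|E_s| \leq n$ (each entity consumes at least one of the $\cO(n)$ atomic elements) and then substituting this bound for $|E_s| \leq 2^n$ in the step-count $|E_s|(t_c + t_{\Delta} + |E_s|t_{\delta_{mut}})$ per state times $\cO(\eta)$ states from Theorem~\ref{complexity-entity-rep}. Your extra care that both occurrences of $|E_s|$ (candidate parents and alternative mutators) are bounded by $n$ is exactly the substitution the paper performs implicitly.
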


Next let us consider the general case of entity level reproduction with epigenetic developments in child entities and mutational changes in the parent entities. Towards that we will prove the following:

\begin{theorem}\label{complexity-en-epi_rep}
Given the sets of entities in each state, additional time steps required for establishing an entity level replication is upper bounded by $\cO\left(r2^n\max\left\{t_{\delta_{mut}}, t_{c}2^{n}, t_{\Delta}2^{n}, t_{=}r^32^{3n}\right\}\right)$, where $r$ is the number of states before observing the occurrence of first instance of reproduction.
\end{theorem}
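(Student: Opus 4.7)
\begin{proofn}[Proof Proposal]
The plan is to extend the analysis behind Theorem~\ref{complexity-entity-rep} so that it accounts for the transitive structure of $\mathbf{AncestorOf} = (C \cup \R)^+ \cap \Delta$ rather than a single-step causal match. Concretely, I would show that establishing reproduction amounts to (i) precomputing the base relations $\R$ and $C$ between the at most $r \cdot 2^n$ entity occurrences in the observed run, (ii) forming the transitive closure of their union, (iii) intersecting with $\Delta$, and finally (iv) extracting a $\p$ pair witnessing reproduction. Each stage can be charged to one of the four summands appearing inside the maximum in the stated bound.

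First I would compute $\R$: for each of the $r$ consecutive state-pairs I scan the up to $2^n \times 2^n$ entity pairs, each requiring $t_{\delta_{mut}}$ steps, contributing a term dominated by $r \cdot 2^n \cdot t_{\delta_{mut}}$ after absorbing a $2^n$ factor into the maximum. Analogously, computing $C$ on the same consecutive-state structure contributes $r \cdot 2^n \cdot (t_c \cdot 2^n)$. For the closure step, I would run a standard reachability procedure over the directed graph whose nodes are entity occurrences (at most $r \cdot 2^n$) and whose edges are $C \cup \R$; a Floyd--Warshall-style computation takes $\cO((r \cdot 2^n)^3)$ comparisons at cost $t_=$ each, which I rewrite as $r \cdot 2^n \cdot (t_= \cdot r^3 \cdot 2^{3n})$ to match the last summand of the max. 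Intersecting with $\Delta$ and scanning for a minimal parent-child pair then contributes $r \cdot 2^n \cdot (t_\Delta \cdot 2^n)$, using the fact that $\Delta$-membership only needs to be probed for pairs actually appearing in the closure, and the $\p^{min}$ selection adds only a lower-order cost.

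The hard part will be justifying why the apparent $\cO((r \cdot 2^n)^2)$ number of candidate $\Delta$-checks collapses to the $r \cdot 2^{2n}$ term in the claimed bound. My plan for this is to exploit clause~2 of the definition of $\Delta$, which enforces temporal progression, together with the injectivity of $\R$ given by Axiom~6, so that the relevant ancestor candidates for a fixed child along a $\R$-chain form a temporally ordered sequence rather than a quadratic set; this trims the $\Delta$-probes to $\cO(r \cdot 2^{2n})$ overall. A secondary obstacle is the implicit assumption that the enumerated entity set per state really fits within $2^n$, which I would cite from Theorem~\ref{complexity-entity} so the analysis can take $|E_s| \leq 2^n$ as given input.

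Finally, combining the four contributions with the asymptotic sum-to-max identity from the Preliminaries yields
\[
\cO\!\left(r \cdot 2^n \cdot \max\{t_{\delta_{mut}},\ t_c \cdot 2^n,\ t_\Delta \cdot 2^n,\ t_= \cdot r^3 \cdot 2^{3n}\}\right),
\]
as required. I would close by noting that $r$ here is the length of the observed prefix until the first reproduction event, so the bound is genuinely about discovery cost rather than exhaustive enumeration.
\end{proofn}
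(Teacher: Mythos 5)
Your staged decomposition (build $\R$ and $C$ state by state, take the transitive closure of their union, intersect with $\Delta$, then check non-emptiness of $\mathbf{Parent_\Delta}$) is the same as the paper's, but two of your charging arguments have genuine gaps. First, constructing $\R$ by scanning all $2^n\times 2^n$ entity pairs per state costs $\cO(r\,t_{\delta_{mut}}2^{2n})$, and ``absorbing a $2^n$ factor into the maximum'' is not a legitimate asymptotic move: $r\,t_{\delta_{mut}}2^{2n}$ is not $\cO\left(r2^n\max\{t_{\delta_{mut}}, t_c2^n, t_\Delta 2^n, t_= r^32^{3n}\}\right)$ unless you separately assume something like $t_{\delta_{mut}}\le t_c$ or $t_{\delta_{mut}}\le t_= r^32^{2n}$, which you do not. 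The paper keeps this summand at $r2^n t_{\delta_{mut}}$ precisely by invoking the injectivity of $\R$ at the construction stage: at most $\min\{|E_{s-1}|,|E_s|\}\le 2^n$ pairs can be added per state, and the cost is charged per added pair. You defer injectivity to a later trimming step, which is where the second problem arises.

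That trimming claim is not sound. Injectivity of $\R$ controls only pure $\R$-chains; $C$ is many-to-many, so a fixed child occurrence can have up to $\cO(r2^n)$ ancestors under $(C\cup\R)^+$, and the closure can contain $\Theta(r^22^{2n})$ pairs. Clause 2 of $\Delta$ (temporal progression) does not thin this out, since closure pairs already respect temporal order. So probing $\Delta$-membership on the closure costs $\cO(t_\Delta r^22^{2n})$, exceeding the allotted $r2^n\cdot t_\Delta 2^n$ term by a factor of $r$, with no justified term of the max to absorb it. The paper's accounting is different: $\Delta$ is built explicitly by comparing each newly observed entity against all previously seen entities (this is what the $t_\Delta 2^n$ summand pays for), and the heavy cost is a brute-force intersection of $(C\cup\R)^+$ with $\Delta$ at $\cO(t_= r^42^{4n})$ steps, which is exactly the dominant summand $r2^n\cdot t_= r^32^{3n}$ of the stated bound (your Floyd--Warshall closure at $\cO(t_= r^32^{3n})$ fits comfortably under it). Your closure-probing idea is arguably more efficient than the paper's intersection, but as written it does not verify the stated bound; either adopt the paper's per-state construction of $\R$ and $\Delta$ with its accounting, or explicitly add and justify the inequalities needed to dominate $r\,t_{\delta_{mut}}2^{2n}$ and $t_\Delta r^22^{2n}$ by terms of the maximum.
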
 
\begin{proof}
Consider the case where first such entity level reproduction is recognized. As in the proof for the theorem~\ref{th:ancof}, lets us assume that in a state $s \geq 1$, such a child entity $c$ was observed for the first time and (parent) entity $p$ present in the state $s-1$ was observed to be casually connected to it. Suppose that for entity $c$ its epigenetic development unfolds through states $s+1, s+2, \ldots, s+k$ such that with changes owing to the development $c$ was observed as $c_{1}, c_{2}, \ldots, c_{k}$ in these states with $(c, c_1), (c_1, c_2), \ldots, (c_{k-1}, c_k) \in \mathbf{R_{\delta_{mut}}}$. Similarly suppose that parent entity $p$ has been undergoing mutations in these states and also in the states before observed as $p'_{1}, p'_{2}, \ldots, p'_j, p, p_{0}, \ldots, p_{k}$ such that $j + 2 < s$ and $(p'_1, p'_2), \ldots, (p'_j, p), (p, p_{0}), \ldots,  (p_{k-1}, p_k) \in \R$. It is clear that $(C \ \cup \ \mathbf{R_{\delta_{mut}}})^+$ would contain $(p', c), (p'_1, c_1), \ldots, (p'_j, c_k)$, $(p, c), (p, c_1), \ldots, (p, c_k), \ldots, (p_k, c)$, $(p_k, c_1), \ldots, (p_k, c_k)$ among other tuples implying that the intersection of $(C \ \cup \ \mathbf{R_{\delta_{mut}}})^+$ with $\Delta$ in the state $s+k$ would result in those tuples $(p_m, c_k)$, where $p_m$ and $c_k$ are sufficiently similar in their characteristic. 

As discussed before, for any state $s$ in the state subsequence, $|E_{s}| \leq 2^n$. As the sets $\R$ and $C$ are constructed recursively in each observed state $s$ considering the entity sets $E_{s}$ and $E_{s-1}$, $\R$ being an injective function, at most $N_{\delta_{mut}}^{(s)} = \min\{|E_{s-1}|, |E_{s}|\} \leq 2^n$ number of entity pairs $(e, e') \in E_{s-1} \times E_{s}$ could be added to it. Therefore recursive construction of $\R$ till state $s$ would cost at most $s*t_{\delta_{mut}} * N_{\delta_{mut}}^{(s)} = \cO(st_{\delta_{mut}}2^n)$ steps, where $t_{\delta_{mut}}$ is the time complexity to satisfy the constraints for $\R$. However $C$ being a many-to-many relation could potentially have at most $N_{C}^{(s)} = |E_{s-1}|*|E_{s}| \leq 2^{2n}$ new entity pairs added to it and recursive construction of $C$ would therefore cost at most $s*t_{c} * N_{C}^{(s)} = \cO(st_{c}2^{2n})$ steps till state $s$.   

Also for building the relation $\Delta$ recursively after each observed state, it would require to determine all those entity pairs $(e, e')$ such that $e \in E^{(s-1)}$ and $e' \in E_{s}$ and $D(e, e') \prec \delta_{rep\_mut}$, where $E^{(s-1)} = \bigcup_{j = 0}^{j = s-1} E_j$. This would computationally require at most $\cO(t_{\Delta}*|E_{s}|*|E^{(s-1)}|) = \cO(t_{\Delta}s2^{2n})$ steps. 

The computationally most expensive process is recursively building the relation $\ancof$. Using the breadth-first search based diagraph traversal algorithm~\cite[pp. 558–-565]{algorithms}, the transitive closure $(C \cup \R)^+$ in  state $s$ can be computed in $\cO\left(|E^{(s)}|*\left(|E^{(s)}| + |(C \cup \R)|\right)\right) = \cO(s^22^{3n})$ steps because in the state $s$, relation $(C \cup \R)$ would contain at most $s2^{2n}$ pairs and total size of the entity sets $E_0, E_1, \ldots, E_{s}$ is $|E^{(s)}| \leq (s+1)2^{n} = \cO(s2^n)$. Next, using the brute force method for set intersection, computing relation $\left((C \cup \R)^+ \cap \Delta \right)$ would take at most 
\begin{eqnarray*}
\cO(t_{=}*|(C \cup \R)^+|*|\Delta|) &=& \cO(t_{=}*s^22^{2n}*s^22^{2n})\\ &=& \cO(t_{=}s^42^{4n})  
\end{eqnarray*} 
steps, where $|\Delta| \leq |C^+| \leq {|E^{(s)}| \choose 2} \leq s^22^{2n}$ and if $t_{=}$ is the number of steps required for computing equality check between any two entities in $E_{\T}$, equality check between entity pairs $(e_1, e_2) \in (C \cup \R)^+$ and $(e'_1, e'_2) \in \Delta$ would take at most $2t_=$ steps. 

Therefore constructing the set $\ancof$ till the state $s$ would take time steps upper bounded by  
\begin{eqnarray*}
%\begin{split}
\Psi^{(s)}_{\ancof} &=& \cO(t_{\delta_{mut}}r2^n) + \cO(t_{c}r2^{2n}) + \cO(t_{\Delta}r2^{2n}) + \cO(r^22^{3n}) + \cO(t_{=}r^42^{4n})\\ &=& \cO\left(r2^n\max\left\{t_{\delta_{mut}}, t_{c}2^{n}, t_{\Delta}2^{n}, t_{=}r^32^{3n}\right\}\right)
%\end{split}
\end{eqnarray*} 

Finally, the very first state, where the relation $\ancof$ would be non empty would be the state where relation $\mathbf{Parent_\Delta}$ by its very definition would also become non empty. This non emptiness check can be performed in constant time $\cO(1)$. 

Therefore, we can conclude that the computational complexity of the overall process of establishing the entity level reproduction would be upper bounded by 
\begin{equation*}
\Psi^{(s)}_{\ancof}  + \cO(1) = \Psi^{(s)}_{\ancof}
\end{equation*} 
time steps, where $r = s+k$. 
\end{proof}
The following corollaries immediately follow as a consequence of the process of constructing the relation $\ancof$:

\begin{cor}\label{complexity:parent-delta}
Given the sets of recognized entities in each state, the worst case computational complexity of constructing relation $\mathbf{Parent_\Delta}$ till $r^{th}$ state is upper bounded by $$\cO\left(r2^n\max\left\{t_{\delta_{mut}}, t_{c}2^{n}, t_{\Delta}2^{n}, t_{=}r^32^{3n+1}\right\}\right)$$
\end{cor}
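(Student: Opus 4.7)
The plan is to bootstrap off the bound $\Psi^{(s)}_{\ancof}$ already established in the proof of Theorem~\ref{complexity-en-epi_rep} and only account for the extra work needed to refine $\ancof$ into $\mathbf{Parent_\Delta}$. Unfolding the defining condition, $(p,c) \in \mathbf{Parent_\Delta}$ iff $(p,c) \in \ancof$ and no $e \in E_{\T}$ satisfies both $(p,e) \in \ancof$ and $(e,c) \in \ancof$; equivalently, $\mathbf{Parent_\Delta} = \ancof \setminus (\ancof \circ \ancof)$, where $\circ$ denotes relational composition.

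The additional work therefore consists of (i) computing the composition $\ancof \circ \ancof$ and (ii) one set subtraction. I would bound (i) by a nested-loop argument: for each pair $(p,e) \in \ancof$, scan through all $(e',c) \in \ancof$ and test $e = e'$ using the $t_=$ primitive, emitting $(p,c)$ on a match. From the size estimates inside the preceding theorem we already know $|\ancof| \leq |(C \cup \R)^+ \cap \Delta| \leq r^{2}\,2^{2n}$, so step (i) costs $\cO(t_= |\ancof|^{2}) = \cO(t_= r^{4}\,2^{4n})$ time. Step (ii) — scanning $\ancof$ once and rejecting any pair that also appears in the just-built composition — admits the same brute-force bound and is subsumed by it.

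Adding this extra cost to $\Psi^{(s)}_{\ancof}$, I rewrite $\cO(t_= r^{4}\,2^{4n})$ as $\cO(r 2^{n} \cdot t_= r^{3}\,2^{3n})$ so that it fits the common factor $r 2^{n}$ outside the $\max$; it then lies in the same slot already occupied by $t_= r^{3}\,2^{3n}$ inside $\Psi^{(s)}_{\ancof}$. By rule~(1) of the $\cO$-arithmetic, summing two contributions of this shape merely doubles the coefficient, which is absorbed into the exponent by writing $2^{3n+1}$ in place of $2^{3n}$, yielding exactly the claimed upper bound $\cO\bigl(r 2^{n} \max\{t_{\delta_{mut}}, t_c 2^{n}, t_\Delta 2^{n}, t_= r^{3}\,2^{3n+1}\}\bigr)$.

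The one step that will need slightly careful bookkeeping is the cardinality estimate $|\ancof| \leq r^{2}\,2^{2n}$, which has to be pulled explicitly out of the longer chain of inequalities buried inside the proof of Theorem~\ref{complexity-en-epi_rep} (specifically $|\Delta| \leq \binom{|E^{(s)}|}{2} \leq s^{2}\,2^{2n}$ together with $\ancof \subseteq \Delta$). Everything else is routine $\cO$-arithmetic with no new structural content beyond what the preceding theorem already supplies, which is why I expect this result to follow as a genuine corollary rather than a standalone theorem.
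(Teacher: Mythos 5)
Your proposal is correct and follows essentially the same route as the paper's proof: the paper likewise charges a brute-force $t_=|\ancof|^2$ for discarding pairs that admit an intermediate ancestor (your $\ancof \setminus (\ancof \circ \ancof)$ formulation is just a rephrasing of its per-pair existence check), uses the same cardinality chain $|\ancof| \leq |\Delta| \leq r^2 2^{2n}$, and absorbs the resulting $\cO(t_= r^4 2^{4n})$ term into the existing $\max$ slot by writing $2^{3n+1}$. The only looseness (treating the subtraction/membership bookkeeping as subsumed in the same $t_=|\ancof|^2$ bound) is present at the same level of coarseness in the paper's own argument, so there is no gap relative to it.
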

\begin{proof}
Having constructed the relation $\ancof$ recursively, a brute force construction of the relation $\mathbf{Parent_\Delta}$ may require additional $t_=|\ancof|^2$ steps. This is because for each pair $(p, c) \in \ancof$, checking whether there exist pairs $(p, e)$ and $(e, c)$ also in $\ancof$ may take in the worst case $|\ancof|*t_=$ steps, and so doing it for each such pair in $\ancof$ would take at most $|\ancof|*(|\ancof|*t_=) = t_=|\ancof|^2$ steps. We also have, $|\ancof| \leq |\Delta| \leq |C^+| \leq r^22^{2n}$. Therefore constructing $\mathbf{Parent_\Delta}$ may take time steps upper bounded by 
\begin{eqnarray*}
\Psi^{(s)}_{\ancof} +  t_=|\ancof|^2 &\leq & \Psi^{(s)}_{\ancof} + t_=(r^22^{2n})^2 \\ 
&=& \cO\left(r2^n\max\left\{t_{\delta_{mut}}, t_{c}2^{n}, t_{\Delta}2^{n}, t_{=}r^32^{3n}\right\}\right) + \cO(t_=r^42^{4n})\\
&=& \cO\left(r2^n\max\left\{t_{\delta_{mut}}, t_{c}2^{n}, t_{\Delta}2^{n}, t_{=}r^32^{3n+1}\right\}\right)
\end{eqnarray*}
\end{proof}

\begin{cor}\label{complexity:parent-delta-min}
Given the sets of entities in each state, the worst case computational complexity of constructing relation $\mathbf{Parent^{min}_\Delta}$ till $r^{th}$ state is upper bounded by $$\cO\left(r2^n\max\left\{t_{\delta_{mut}}, t_{c}2^{n}, t_{\Delta}2^{n}, t_{=}r^32^{3n+1}\right\}\right)$$
\end{cor}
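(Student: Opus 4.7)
\begin{proofn}[Proof proposal]
The plan is to reduce this corollary to the preceding one (Corollary~\ref{complexity:parent-delta}) plus a constant number of cheap post-processing passes, and then show that none of the extra passes dominate the bound already established for $\mathbf{Parent_\Delta}$. Concretely, I will (i) invoke Corollary~\ref{complexity:parent-delta} to obtain $\mathbf{Parent_\Delta}$ within the claimed bound, (ii) construct the transitive closure $\mathbf{R^+_{\delta_{mut}}}$ from the already-built injective relation $\R$, and (iii) filter $\mathbf{Parent_\Delta}$ by the defining condition of $\mathbf{Parent^{min}_\Delta}$ using a brute-force membership check against $\mathbf{R^+_{\delta_{mut}}}$.

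For step (ii), since $\R$ is injective and is recursively populated with at most $2^n$ new pairs per observed state, up to state $r$ we have $|\R| \leq r\,2^n$, and $|E^{(r)}| \leq (r+1)2^n = \cO(r2^n)$. Using the same breadth-first-search based transitive closure algorithm invoked in the proof of Theorem~\ref{complexity-en-epi_rep}, the cost of building $\mathbf{R^+_{\delta_{mut}}}$ is $\cO\!\left(|E^{(r)}|\,(|E^{(r)}| + |\R|)\right) = \cO(r^2 2^{2n})$, which is strictly dominated by the $\cO(t_= r^3 2^{3n+1})$ factor already present in the bound. Likewise $|\mathbf{R^+_{\delta_{mut}}}| \leq |E^{(r)}|^2 \leq r^2 2^{2n}$.

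For step (iii), observe that from Corollary~\ref{complexity:parent-delta} we already have $|\mathbf{Parent_\Delta}| \leq |\Delta| \leq r^2 2^{2n}$. To decide whether a candidate pair $(p,c) \in \mathbf{Parent_\Delta}$ survives into $\mathbf{Parent^{min}_\Delta}$, we iterate over every $(p',c') \in \mathbf{Parent_\Delta}$ and test $(p',p) \in \mathbf{R^+_{\delta_{mut}}}$ and $(c',c) \in \mathbf{R^+_{\delta_{mut}}}$, each membership test costing $\cO(t_=|\mathbf{R^+_{\delta_{mut}}}|) = \cO(t_= r^2 2^{2n})$ in the worst case under a brute-force scan. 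Summing over all $|\mathbf{Parent_\Delta}|^2$ pairs gives a post-processing cost of $\cO(t_= r^4 2^{4n})$, which is of the same order as the $r2^n \cdot t_= r^3 2^{3n+1}$ term inside the max.

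Adding the three contributions and absorbing lower-order terms into the maximum yields exactly $\cO\!\left(r 2^n \max\{t_{\delta_{mut}}, t_c 2^n, t_\Delta 2^n, t_= r^3 2^{3n+1}\}\right)$, as claimed. I do not anticipate a genuine obstacle here: the only mild subtlety is checking that the transitive closure of $\R$ (step (ii)) and the filtering pass (step (iii)) do not introduce a new dominating term, and the arithmetic above confirms they do not. The argument is essentially a bookkeeping extension of Corollary~\ref{complexity:parent-delta}, exploiting the injectivity of $\R$ to keep $|\mathbf{R^+_{\delta_{mut}}}|$ small relative to $|\Delta|$.
\end{proofn}
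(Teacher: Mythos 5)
Your overall strategy (obtain $\mathbf{Parent_\Delta}$ from Corollary~\ref{complexity:parent-delta}, then filter it globally against $\RR$) is a different route from the paper, which never post-processes the finished relation: the paper constructs $\mathbf{Parent^{min}_\Delta}$ incrementally state-by-state, and for each pair $(p,c)$ newly added to $\mathbf{Parent_\Delta}$ at state $r$ it exploits the injectivity of $\R$ to bound the relevant mutation history $MT^{(r)}_{(p,c)}$ by $r$ pairs whose second components lie in $E_{r-1}$, so the minimality check is made only against the $\cO(2^n)$ pairs added to $\mathbf{Parent^{min}_\Delta}$ in the immediately preceding state, costing $\cO(t_=r^22^{3n})$ per state. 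Your steps (i) and (ii) are fine; the BFS closure of $\R$ is indeed dominated. The gap is in step (iii): the arithmetic does not support the claimed $\cO(t_=r^42^{4n})$. As you set it up, there are up to $|\mathbf{Parent_\Delta}|^2 \leq r^42^{4n}$ (candidate, comparator) pairs, and \emph{each} requires a brute-force membership scan of $\RR$ costing $\cO(t_=|\RR|)$; multiplying these gives $\cO(t_=r^62^{6n})$ (or $\cO(t_=r^62^{5n})$ if one uses the tighter chain-structure bound $|\RR| = \cO(r^22^n)$ that injectivity actually gives), not $\cO(t_=r^42^{4n})$. Since the dominant term of the target bound is $r2^n\cdot t_=r^32^{3n+1} = \cO(t_=r^42^{4n})$, your filtering pass as described overshoots the bound by a polynomial-in-$r$, exponential-in-$n$ factor rather than being absorbed by it.

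To repair the argument you would need either to justify $\cO(t_=)$-time (effectively constant-time) membership queries into $\RR$ — which is not available in the paper's cost model, where every entity comparison is charged $t_=$ and sets are scanned brute-force — or to exploit the structure of $\R$ in the filter itself: because $\R$ is an injective partial function respecting temporal succession, the $\RR$-predecessors of any fixed entity form a chain of length at most $r$, and restricting comparisons to such chains (and, as the paper does, to pairs produced in the previous state) is precisely what brings the cost down to within the stated bound. As written, the reduction-plus-brute-force-filter does not establish the corollary.
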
 
\begin{proof}
Given the recursive construction of the relation $\mathbf{Parent_\Delta}$, we will recursively construct the relation $\mathbf{Parent^{min}_\Delta}$ and estimate the time complexity for it. Let $P^{(r)}$ denote the set of newly added entity pairs $(p, c)$ in $\mathbf{Parent_\Delta}$ in the $r^{th}$ state. By definition, $(p, c) \in P^{(r)} \Rightarrow [p \in E^{(r-1)}] \wedge [c \in E_r]$. Next for each $(p, c) \in P^{(r)}$, we need to consider the mutation histories of $p, c$: Let $MT^{(r)}_{(p, c)} = \left\{(p', c') \mid \left[p' \in E^{(r-1)}\right] \wedge \left[(p', p) \in \mathbf{R^+_{\delta_{mut}}}\right] \wedge [\left(c', c) \in \mathbf{R_{\delta_{mut}}}\right]\right\}$. Because of the injective nature of the relation $\R$, there can be at most $r$ pairs in $MT^{(r)}_{(p, c)}$, that is, $|MT^{(r)}_{(p, c)}| \leq r$.

In order to update the relation $\mathbf{Parent^{min}_\Delta}$, for each $(p, c) \in P^{(r)}$: If $MT^{(r)}_{(p, c)} \cap \mathbf{Parent^{min}_\Delta} = \emptyset$, $(p,c)$ is added to $\mathbf{Parent^{min}_\Delta}$, else not. Since for each $(p', c') \in MT^{(r)}_{(p, c)}$, $c' \in E_{r-1}$, $(p', c')$ needs to be searched only among those entity pairs which were added in $\mathbf{Parent^{min}_\Delta}$ in the $(r-1)^{th}$ state only and there could be at most $|E_{r-1}| = \cO(2^n)$ such pairs. Also $|P^{(r)}| \leq |E^{(r-1)}|*|E_r| \leq (r-1)2^{2n} = \cO(r2^{2n})$. Therefore, $|P^{(r)}|*|MT^{(r)}_{(p, c)}| = \cO(r2^{2n})*r = \cO(r^22^{2n})$ pairs need to be assessed for being already present in $\mathbf{Parent^{min}_\Delta}$. Therefore, total cost of adding new pairs to the relation $\mathbf{Parent^{min}_\Delta}$ in the $r^{th}$ could be 
\begin{eqnarray*}
t_=*|P^{(r)}|*|MT^{(r)}_{(p, c)}|*|E_{r-1}| \ \ &=& \ \  t_=*\cO(r^22^{2n})*\cO(2^n)\\ &=& \ \ \cO(t_=r^22^{3n})
\end{eqnarray*} Finally, constructing the relation $\mathbf{Parent^{min}_\Delta}$ till $r^{th}$ would cost time steps upper bounded by 
\begin{eqnarray*}
\cO\left(r2^n\max\left\{t_{\delta_{mut}}, t_{c}2^{n}, t_{\Delta}2^{n}, t_{=}r^32^{3n+1}\right\}\right) &+&  \cO(t_=r^22^{3n})\\ &=& \cO\left(r2^n\max\left\{t_{\delta_{mut}}, t_{c}2^{n}, t_{\Delta}2^{n}, t_{=}r^32^{3n+1}\right\}\right)
\end{eqnarray*}
\end{proof}

The case where entities do not have overlapping structures, we have the following corresponding bounds:   

\begin{cor}\label{complexity-en-epi_rep-nooverlap}
Given the sets of recognized entities in each state, additional time steps required for establishing an entity level replication where entities do not have overlapping structures is upper bounded by $$\cO\left(rn\max\left\{t_{\delta_{mut}}, t_{c}n, t_{\Delta}n, t_{=}r^3n^{3}\right\}\right)$$
\end{cor}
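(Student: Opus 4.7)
The plan is to mirror the proof of Theorem~\ref{complexity-en-epi_rep} almost verbatim, with the single but pervasive substitution $|E_s| \leq n$ in place of $|E_s| \leq 2^n$. The justification for this tighter bound is supplied by the same reasoning used in Theorem~\ref{complexity-entity-nooverlap}: when entities are required to have non-overlapping structures, the entity set $E_s$ in any state $s$ of size $\cO(n)$ must form a partition of a subset of the $\cO(n)$ atomic elements of $s$, so its cardinality is bounded by $n$ rather than $2^n$. Everything else in the architecture of the proof, including the decomposition of the work into the stepwise maintenance of $\R$, $C$, $\Delta$, the transitive closure $(C \cup \R)^+$, and its intersection with $\Delta$, remains structurally identical.

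Concretely, I would walk through each contribution to $\Psi^{(s)}_{\ancof}$ from the theorem's proof and re-estimate it. Recursive construction of $\R$ across $s$ states becomes $\cO(s\,t_{\delta_{mut}}\,n)$ since the injective relation contributes at most $\min\{|E_{s-1}|, |E_s|\} \leq n$ new pairs per step. Recursive construction of $C$ becomes $\cO(s\,t_c\,n^2)$, since each step now contributes at most $|E_{s-1}|\cdot |E_s| \leq n^2$ pairs. Recursive construction of $\Delta$ similarly drops to $\cO(t_\Delta\,s\,n^2)$. For the BFS-based transitive closure, we now have $|E^{(s)}| \leq s n$ and $|C \cup \R| \leq s n^2$, producing $\cO(s^2 n^3)$. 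Finally, the brute-force intersection costs $\cO(t_= \cdot |(C \cup \R)^+| \cdot |\Delta|)$ with both relation sizes bounded by $s^2 n^2$, yielding $\cO(t_= s^4 n^4)$.

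Summing these contributions and combining them using the asymptotic maximum rule, and writing $r = s + k$ for the total number of states observed before the first reproduction is recognized, I expect the overall bound to collapse to
\[
\cO\bigl(r\,n\,\max\{t_{\delta_{mut}},\ t_c n,\ t_\Delta n,\ t_= r^3 n^3\}\bigr),
\]
which is exactly the claim. The final $\cO(1)$ non-emptiness check on $\mathbf{Parent_\Delta}$ that appears in Theorem~\ref{complexity-en-epi_rep} is absorbed without affecting the asymptotics, exactly as before.

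There is no genuine obstacle here; the result is essentially a bookkeeping corollary. The only point requiring mild care is to confirm that the $|E_s| \leq n$ bound propagates cleanly into every term, in particular into $|E^{(s)}|$, $|C|$, $|\R|$, and $|\Delta|$, so that no hidden $2^n$ factor survives from the overlapping-structures argument. Once this is verified, the rest is a direct application of the asymptotic arithmetic identities recalled in the Preliminaries.
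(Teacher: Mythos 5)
Your proposal is correct and matches the paper's intent exactly: the paper states this corollary without a separate proof, treating it as the theorem's argument with the bound $|E_s| \leq n$ (from the non-overlapping, partition-based entity structure) substituted for $|E_s| \leq 2^n$ in every term, which is precisely the bookkeeping you carry out. Your re-estimates of each contribution ($\cO(s\,t_{\delta_{mut}}\,n)$, $\cO(s\,t_c\,n^2)$, $\cO(t_\Delta\,s\,n^2)$, $\cO(s^2 n^3)$, $\cO(t_= s^4 n^4)$) and their combination into the stated maximum are consistent with the paper's calculation in the overlapping case.
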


\begin{cor}\label{complexity:parent-delta-nooverlap}
Given the sets of recognized entities in each state, the worst case computational complexity of constructing relation $\mathbf{Parent_\Delta}$ till $r^{th}$ state where entities do not have overlapping structures is upper bounded by $$\cO\left(r n\max\left\{t_{\delta_{mut}}, t_{c}{n}, t_{\Delta}{n}, t_{=}r^3n^{3}\right\}\right)$$
\end{cor}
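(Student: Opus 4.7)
The plan is to adapt the proof of Corollary~\ref{complexity:parent-delta} to the non-overlapping setting. The only structural change is that when entities cannot overlap, any state $s$ of size $n$ forces $E_s$ to be a partition of a submultiset of $s$, so $|E_s| \leq n$ in place of the bound $|E_s| \leq 2^n$ used in the overlapping case (cf.\ Theorem~\ref{complexity-entity-nooverlap}). The whole argument then goes through by substituting $n$ for $2^n$ in each intermediate resource estimate, and absorbing constants inside the big-$\cO$.

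First, I would re-derive the per-state construction costs for each relation under the tighter bound $|E_s| \leq n$. Since $\R$ is injective, at most $\min\{|E_{s-1}|, |E_s|\} \leq n$ new pairs are added per state, so its recursive construction through $r$ states costs $\cO(r\, t_{\delta_{mut}}\, n)$. The causal relation $C$ may receive at most $|E_{s-1}| \cdot |E_s| \leq n^2$ new pairs per state, giving total cost $\cO(r\, t_c\, n^2)$. Building $\Delta$ against the accumulated entity set (of size $|E^{(r)}| \leq rn$) costs $\cO(r\, t_\Delta\, n^2)$. The breadth-first computation of $(C \cup \R)^+$ costs $\cO\bigl(|E^{(r)}|(|E^{(r)}| + |C \cup \R|)\bigr) = \cO(r^2 n^3)$, and the brute-force intersection $((C \cup \R)^+ \cap \Delta)$ costs $\cO(t_=\, r^4 n^4)$, since both operands are bounded in size by $r^2 n^2$.

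Second, I would combine these into $\Psi^{(r)}_{\ancof}$ for the non-overlapping case and then extend to $\mathbf{Parent_\Delta}$ exactly as in the proof of Corollary~\ref{complexity:parent-delta}: the additional step requires $t_=\, |\ancof|^2 \leq t_= (r^2 n^2)^2 = t_= r^4 n^4$ operations, which merges into the dominant term of the maximum already present in $\Psi^{(r)}_{\ancof}$. Aggregating the five contributions and pulling the common factor $rn$ outside gives
\begin{equation*}
\cO\!\left(rn\,\max\!\left\{t_{\delta_{mut}},\, t_c\, n,\, t_\Delta\, n,\, t_=\, r^3 n^3\right\}\right),
\end{equation*}
as required.

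The only real obstacle here is bookkeeping: I must ensure that every derived size bound ($|C \cup \R|$, $|\Delta|$, $|(C \cup \R)^+|$, $|\ancof|$) is tightened consistently with $|E_s| \leq n$ rather than carried over from the overlapping case, and that the dominant term inside the maximum is correctly identified after simplification. The logical skeleton of the argument is otherwise identical to that of Corollary~\ref{complexity:parent-delta}, so no new combinatorial insight is needed beyond the partition-based cardinality bound.
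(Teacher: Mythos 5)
Your proposal is correct and follows essentially the same route the paper intends: the non-overlapping corollary is obtained from Corollary~\ref{complexity:parent-delta} by replacing the per-state entity bound $|E_s| \leq 2^n$ with $|E_s| \leq n$ and propagating this through the costs of building $\R$, $C$, $\Delta$, the transitive closure, the intersection, and the final $t_=|\ancof|^2$ step, all of which you track consistently. The resulting bound $\cO\left(rn\max\left\{t_{\delta_{mut}}, t_c n, t_\Delta n, t_= r^3 n^3\right\}\right)$ matches the statement, with the extra $\mathbf{Parent_\Delta}$ construction cost correctly absorbed into the dominant term.
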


\begin{cor}\label{complexity:parent-delta-min-nooverlap}
Given the sets of recognized entities in each state, the worst case computational complexity of constructing relation $\mathbf{Parent^{min}_\Delta}$ till $r^{th}$ state where entities do not have overlapping structures is upper bounded by $$\cO\left(r n\max\left\{t_{\delta_{mut}}, t_{c}{n}, t_{\Delta}{n}, t_{=}r^3n^{3}\right\}\right)$$
\end{cor}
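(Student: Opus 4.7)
The plan is to mirror the recursive argument of Corollary~\ref{complexity:parent-delta-min} step by step, substituting at each occurrence the non-overlapping bound $|E_s| \leq n$ (from the counting used in Theorem~\ref{complexity-entity-nooverlap}) in place of the general bound $|E_s| \leq 2^n$. First I would invoke Corollary~\ref{complexity:parent-delta-nooverlap}, which already supplies the cost $\cO\bigl(rn\max\{t_{\delta_{mut}},\, t_c n,\, t_\Delta n,\, t_= r^3 n^3\}\bigr)$ for recursively constructing $\mathbf{Parent_\Delta}$ up to the $r^{th}$ state under the non-overlapping assumption. That bound serves as the baseline to which I need only add the cost of filtering $\mathbf{Parent_\Delta}$ down to $\mathbf{Parent^{min}_\Delta}$.

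For the filtering step I would repeat the bookkeeping of Corollary~\ref{complexity:parent-delta-min}. Let $P^{(r)}$ denote the pairs newly added to $\mathbf{Parent_\Delta}$ at state $r$. Under the non-overlapping assumption, $|E_r| \leq n$ and the cumulative size satisfies $|E^{(r-1)}| \leq (r-1)n$, so $|P^{(r)}| \leq |E^{(r-1)}| \cdot |E_r| = \cO(rn^2)$. The injectivity of $\R$ still forces $|MT^{(r)}_{(p,c)}| \leq r$ for each candidate pair, and because each $(p',c') \in MT^{(r)}_{(p,c)}$ has $c' \in E_{r-1}$, the lookup inside the previously constructed $\mathbf{Parent^{min}_\Delta}$ only scans the $\cO(n)$ pairs added in the preceding state. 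Multiplying these factors gives an additional $\cO\bigl(t_= \cdot rn^2 \cdot r \cdot n\bigr) = \cO(t_= r^2 n^3)$ time for processing state $r$.

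Summing this additional cost with the baseline from Corollary~\ref{complexity:parent-delta-nooverlap}, the term $\cO(t_= r^2 n^3)$ is absorbed into the $t_= r^3 n^3$ term already inside the maximum, yielding the overall bound $\cO\bigl(rn\max\{t_{\delta_{mut}},\, t_c n,\, t_\Delta n,\, t_= r^3 n^3\}\bigr)$, as claimed.

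The main obstacle is purely a bookkeeping one: I must be careful that every $|E_s|$ and every $|E^{(s)}|$ appearing in the overlap-allowing argument gets replaced by the correct polynomial factor ($n$ for a single state and $\cO(rn)$ for the cumulative union of $r$ states), so that no residual exponential $2^n$ factor is accidentally carried over. The structural content of the argument---the injectivity of $\R$, the $|MT^{(r)}_{(p,c)}| \leq r$ bound, and the localization of the search for $(p',c')$ to the immediately preceding state---transfers verbatim from the overlapping case, which is exactly why the result is presented as a corollary rather than proved from scratch.
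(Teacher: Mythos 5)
Your proposal is correct and follows exactly the route the paper intends: the paper states this corollary without a separate proof, relying on the argument of Corollary~\ref{complexity:parent-delta-min} with $|E_s|\leq n$ and $|E^{(s)}|=\cO(rn)$ substituted for $|E_s|\leq 2^n$ and $\cO(r2^n)$, which is precisely the bookkeeping you carry out. Your extra filtering cost $\cO(t_= r^2 n^3)$ is indeed absorbed by the $rn\cdot t_= r^3 n^3$ term of the baseline from Corollary~\ref{complexity:parent-delta-nooverlap}, so the stated bound follows.
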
 

Let us also consider the time complexity of algorithmically establishing the entity level replication in the Langton loops as proved in the lemma~\ref{lemma:ca-entity-rep}. Towards this, we will prove the following:

\begin{lemma}\label{complexity-en-loop-rep}
The worst case computational complexity of observing entity level replication in Langton loops in a CA model simulation is upper bounded by $\cO(r^4n^5\log n)$, where $r$ is the number of states in a replication cycle.
\end{lemma}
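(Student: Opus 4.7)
The plan is to apply Corollary~\ref{complexity-en-epi_rep-nooverlap} directly, since Langton loops, as maximal connected components of non-quiescent cells, cannot overlap in a state (two distinct loops in the same state share no cells, as argued in the proof of Lemma~1). Combined with Lemma~\ref{complexity-en-loop}, which lets us assemble the entity set in any state in $\cO(n)$ steps where $n$ is the lattice size, the total cost of replication detection decomposes into (i) the per-state entity recognition, which contributes only $\cO(rn)$ overall, and (ii) the machinery of building $\R$, $C$, $\Delta$ and $\ancof$ on top of those entity sets. What remains is to instantiate the generic parameters $t_{\delta_{mut}}$, $t_c$, $t_\Delta$ and $t_=$ with concrete bounds for the Langton-loop abstractions and plug them into the corollary.

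First I would estimate $t_{\delta_{mut}}$. Since for Langton loops $\delta_{mut}=[1,0]$, the only check required is that the pivots of two candidate entities agree, which is a single coordinate comparison, giving $t_{\delta_{mut}}=\cO(\log n)$ once pivots are stored as coordinates from a grid of size $\cO(n)$. Next, $t_c$ requires checking the four conditions of the causal relation $C$ for Langton loops: two set-inclusion tests over the $x$- and $y$-coordinate projections of two entities (each of size at most $n$), pivot inequality, and successor-state membership. Keeping the coordinate projections sorted, this costs $t_c=\cO(n)$. For $t_\Delta$, the non-trivial part is evaluating $D(e,e')$ with $\delta_{rep\_mut}=[0,1]$, which means deciding whether the two loops are geometrically identical up to the pivot; sorting and comparing their cell lists yields $t_\Delta=\cO(n\log n)$, plus the cheap exclusion $(e,e')\notin\R$. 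Finally, $t_=$ is entity equality, again realized by a sorted-list comparison of at most $n$ cells, so $t_==\cO(n\log n)$.

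I would then substitute these into the bound of Corollary~\ref{complexity-en-epi_rep-nooverlap},
\[
\cO\!\left(r n\max\!\left\{t_{\delta_{mut}},\; t_{c}n,\; t_{\Delta}n,\; t_{=}r^{3}n^{3}\right\}\right),
\]
obtaining inside the max the four terms $\cO(\log n)$, $\cO(n^{2})$, $\cO(n^{3}\log n)$, and $\cO(r^{3}n^{4}\log n)$. The last term dominates, so the whole expression collapses to $\cO(rn\cdot r^{3}n^{4}\log n)=\cO(r^{4}n^{5}\log n)$. The additive cost $\cO(rn)$ of entity recognition from Lemma~\ref{complexity-en-loop} is absorbed, and the final non-emptiness test on $\mathbf{Parent_\Delta^{min}}$ is $\cO(1)$, so the overall worst-case bound matches the claim.

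The main obstacle I expect is the careful bookkeeping of $t_=$: the $r^{3}n^{3}$ factor in the corollary comes from comparing pairs across $(C\cup\R)^{+}$ and $\Delta$, and it is this term multiplied by $rn$ that drives the final $r^{4}n^{5}\log n$. I would want to double-check that each pair-equality check on Langton loops truly sits at $\cO(n\log n)$ and not higher (e.g., ensure that the pivot-invariance of loops makes sorted-cell comparison sufficient rather than requiring structural isomorphism modulo translation), because any inflation of $t_=$ would propagate as a multiplicative factor in the final bound.
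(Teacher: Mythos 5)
Your proposal is correct, and it reaches the bound by a genuinely different (and shorter) route than the paper. You instantiate the generic non-overlap corollary (Corollary~\ref{complexity-en-epi_rep-nooverlap}) with Langton-specific costs, justified by the fact that loops in one state are disjoint connected components (so $|E_s|\leq n$ and the corollary applies), and you absorb the $\cO(rn)$ entity-recognition cost from Lemma~\ref{complexity-en-loop}. The paper instead re-derives the whole pipeline concretely for the CA case: it sorts entities by their coordinate bounding boxes in $\cO(n\log n)$, uses binary search to get $t_{\delta_{mut}}=t_c=t_\Delta=\cO(n\log n)$, computes the transitive closure $(C\cup\R)^+$ by BFS in $\cO(r^2n^2)$, builds $\Delta$ in $\cO(rn^3\log n)$, and charges the brute-force intersection $\cO(r^4n^5\log n)$, which dominates. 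Both arguments agree that the intersection step with per-pair equality cost $t_==\cO(n\log n)$ is the bottleneck, and your sorted-cell-list comparison is indeed sufficient for equality here, since $\delta_{rep\_mut}=[0,1]$ asks for identical cell arrangement (the paper's $D$ compares arrangements, not isomorphism classes modulo translation), so no inflation of $t_=$ occurs. Two cosmetic points: your sharper per-pair estimates $t_{\delta_{mut}}=\cO(\log n)$ and $t_c=\cO(n)$ are finer than the paper's $\cO(n\log n)$ but immaterial since those terms never dominate; and the third term inside your max should read $\cO(n^2\log n)$ (namely $t_\Delta\cdot n$), not $\cO(n^3\log n)$ --- a slip that does not affect the final $\cO(r^4n^5\log n)$, which is driven entirely by $rn\cdot t_=r^3n^3$.
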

\begin{proof}
Since Langton loops in any given state $s$ consist of non overlapping sets of cells, the number of possible loops is upper bounded by $n$ or in other terms $|E_s| \leq n$. Also size of each loop can at most be $n$. To estimate the bounds on the time required for constructing the sets $\R$ and $C$, we can use the following algorithmic scheme: Relations $\R$ and $C$ are constructed recursively in each observed state $s$ considering the entity sets $E_{s-1}$ and $E_s$. For each entity $e = (z, pivot(z))$ in $E_{s-1}$ and $E_s$, the minimum and maximum of $x$ and $y$ coordinates are estimated as $$\left( \left[ \min\{co_x^+(z)\}, \max\{co_x^+(z)\} \right], \left[\min\{co_y^+(z)\}, \max\{co_y^+(z)\}\right]\right)$$ Using these coordinate bounds, the entities are sorted using the following ordering relation: $$(x', y') < (x'', y'') \Leftrightarrow \left[x' < x'' \right] \vee \left[(x' = x'') \wedge (y' < y'')\right]$$ This scheme can be computationally executed in at most $\cO(n\log n)$ time steps. Further, using such an ordering of the entities in $E_{s-1}$ and $E_s$, for all $(e, e') \in E_{s} \times E_{s-1}$ determining that $(e, e') \in \R$ or $(e, e') \in C$ can be performed in $\cO(n\log n)$ steps, where $\log n$ factor comes from the fact that a binary search scheme can be used to determine whether an entity $e \in E_{s}$ has the same pivot as any other entity $e' \in E_{s-1}$ (for membership in $\R$) and whether there may potentially exist any entity $e' \in E_{s-1}$, which may contain $e$. Also  in $\cO(n \log n)$ steps we can determine the subset constraints specified in the definition of $C$ for an entity pair $(e, e')$. Therefore we have, $t_{\delta\_mut} = \cO(n \log n)$ and $t_{c} = \cO(n \log n)$.  

Because mutational bounds $\delta_{mut}$ and $\delta_{rep\_mut}$ as defined earlier constrain that an entity $e'$ in $E_{s}$ can be related to only one entity $e \in E_{s-1}$, at most $n$ entity pairs $(e, e')$ could be added to relations $\R$ and $C$. Also similar to determining whether an entity $e$ is included in another entity $e'$ as demanded by the definition of $C$ discussed before, determining $(e, e') \in \Delta$ would require to establish that both $e$ and $e'$ are identical, and would take at most $\cO(n\log n)$ steps, that is, $t_{\Delta} = \cO(n \log n)$. 

The computationally most expensive process is recursively building the relation $\ancof$. As discussed before in the proof of the theorem~\ref{complexity-en-epi_rep}, using the breadth-first search based diagraph traversal algorithm, the transitive closure $(C \cup \R)^+$ in the state $s$ can be computed in $\cO(s^2n^2)$ steps because in the state $s$, relation $(C \cup \R)$ would contain at most $2sn$ pairs and total size of the entity sets $E_0, E_1, \ldots, E_{s}$ is $\leq (s+1)n$. 

Also for building the relation $\Delta$ recursively after each observed state, it would require to determine all those entity pairs $(e, e')$ such that $e \in \bigcup_{r = 0}^{s-1} E_r$ and $e' \in E_{s}$ and $e[2] = e'[2]$, that is, $e, e'$ are geometrically identical. This gives the computational upper bound of $\cO(n\log n *n * sn) = \cO(sn^3\log n)$ steps. 

Using the set intersection algorithms computing relation $\left((C \cup \R)^+ \cap \Delta \right)$ would take at most 
\begin{eqnarray*}
\cO(|(C \cup \R)^+|*|\Delta|* n \log n) &=& \cO(s^2n^2*s^2n^2* n \log n)\\ &=& \cO(s^4n^5\log n)  
\end{eqnarray*} 
steps, where again $\cO(n\log n)$ is the number of steps required for computing equality check between any two entity pairs $(e_1, e_2) \in (C \cup \R)^+$ and $(e'_1, e'_2) \in \Delta$ and the size of the transitive closure $(C \cup \R)^+$ in the state $s$ is upper bounded by $\frac{1}{2}{{sn}\choose{2}} = \cO(s^2n^2)$. 

Finally, the very first state, say the state $r$, where the relation $\ancof$ would be non empty would be the state where relation $\mathbf{Parent_\Delta}$, by its very definition, would also become non empty and this non emptiness check can be performed in constant time $\cO(1)$. 

Therefore, we can conclude that the computational complexity of the overall process of establishing the entity level reproduction in Langton loops would be upper bounded by $$\cO(n\log n) + \cO(n\log n) + \cO(r^2n^2) + \cO(rn^3\log n) + \cO(r^4n^5\log n) + \cO(1) = \cO(r^4n^5\log n)$$ time steps. 

%Using the specific value of $r = 151$ and size of a Langton loop $n = 86$, as in case of specific simulation depicted in the Fig~\ref{ftr1}, we have the number of computational steps could be required by the observation process to establish the entity level reproduction as $\approx 10^{19}$.  
\end{proof}

\subsubsection{Computational Complexity of Observing Fecundity} 

In order to establish fecundity having recognized an entity level reproduction, the most difficult problem for an observation process is to determine the temporal granularities for the generations of the reproducing entities especially when there may exist different types of reproducing entities with different rates of reproduction. In that case, first difficulty arises in determining how many entity types need be considered. In general this process may involve {\it assume-progress-backtrack} way of execution, where the observation process would initially scan a constant number of states to collect all different kinds of reproducing entities together with their rates of reproductions. Based upon the initial estimates on these differing rates of reproductions, it can consider their least common multiple (lcm) as the granularity for a generation and ignore other new types of entities while aiming to establish the fecundity axiom. However in case such initial estimates do not yield sufficient support for the fecundity and more reproducing entity types need to be considered, backtrack step is necessary. This process need to continue till statistically significant number of states have been observed to get support for  the fecundity axiom or to (assume it to be statistically) falsify it in that simulation. 

Let us first consider the case of single state reproduction without any epigenetic developments. In this case, computational complexity is estimated in the following result: 
\begin{theorem}\label{complexity-entity-fec-i}
Given the sets of entities in each state, the worst case computational complexity of observing fecundity without epigenetic development in an AES is upper bounded by $$\cO(L2^{\mathit{2n}}\max\{t_c, t_{\Delta}, t_{\delta_{mut}}, L/{2^{\mathit{2n}}}\})$$ where $L$ is the observed number of generations of the reproducing entities.
\end{theorem}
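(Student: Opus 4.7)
The plan is to bound the work of two distinct stages: first, recursively building the immediate parent--child relationships between successive states across the $L$ generations of reproducing entities, and second, verifying the fecundity condition itself (the existence of a non-decreasing subsequence of generation sizes). Since we are told there are no epigenetic developments, the transitive closure machinery used for $\ancof$ in Theorem~\ref{complexity-en-epi_rep} collapses to a single-state look-ahead: for each successive pair of states $(s, s+1)$, the observer only needs to check candidate pairs in $E_s \times E_{s+1}$ against $C$, $\Delta$, and the reproductive causality condition that rules out mere mutation.

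First I would bound the per-state construction. Fix a state $s$ and consider all candidate pairs $(p, c) \in E_s \times E_{s+1}$. There are at most $|E_s|\cdot|E_{s+1}| \leq 2^n \cdot 2^n = 2^{2n}$ such pairs. For each candidate the observer performs the $C$ membership check in $t_c$ steps and the $\Delta$ membership check in $t_\Delta$ steps. The remaining condition from Axiom~$9$ (reproductive causality) requires that $c$ is not the $\R$-image of any other entity in $E_s$; this test can be cached per $c \in E_{s+1}$ by sweeping once over $E_s$ in time $|E_s|\cdot t_{\delta_{mut}}$, and amortizes to an additive $t_{\delta_{mut}}$ per candidate pair. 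The per-pair cost is therefore $\cO(\max\{t_c, t_\Delta, t_{\delta_{mut}}\})$, and the per-state cost is $\cO(2^{2n}\max\{t_c, t_\Delta, t_{\delta_{mut}}\})$.

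Next I would account for the generation sweep. Under the assume--progress--backtrack scheme described in the preamble, the observer identifies at most $L$ generations $G_1, \ldots, G_L$ by aggregating children of a given parent generation along the temporally least parent--child pairs (as in $\p$). Performing the per-state construction above across all $L$ generations costs $\cO(L \cdot 2^{2n}\max\{t_c, t_\Delta, t_{\delta_{mut}}\})$. The fecundity check of Axiom~$11$ itself is then a simple size comparison: for each $i < L$, search for some $j > i$ with $|G_j| \geq |G_i|$, which is a brute-force $\cO(L^2)$ pass.

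Summing the two contributions gives
\begin{equation*}
\cO\!\left(L \cdot 2^{2n}\max\{t_c, t_\Delta, t_{\delta_{mut}}\}\right) + \cO(L^2) \;=\; \cO\!\left(L \cdot 2^{2n}\max\{t_c, t_\Delta, t_{\delta_{mut}}, L/2^{2n}\}\right),
\end{equation*}
after folding the $L^2$ term inside the maximum by pulling out the common factor $L \cdot 2^{2n}$. The main obstacle I expect is the bookkeeping associated with the assume--progress--backtrack stage: in the worst case the observer may have to discard its initial guess at the generation granularity (the least common multiple of observed reproduction rates) and restart, but since this only adds a constant number of rescans of the same $L$ generations it does not change the asymptotic bound. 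A subtle point to justify carefully is that the $\R$-based ``no other mutation source'' check for each $c$ need not be redone per candidate parent, without which one would naively obtain an extra $2^n$ factor and miss the stated bound.
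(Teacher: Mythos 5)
Your proposal is correct and follows essentially the same route as the paper's proof: one state per generation (no epigenetic development), enumeration of all candidate pairs in $E_s \times E_{s+1}$ at cost $|E_s|\cdot|E_{s+1}|\cdot(t_c + t_\Delta + t_{\delta_{mut}})$ per state, summed over $L$ generations, plus an $\cO(L^2)$ pass for the generation-size comparison, with the $L^2$ term folded into the max. Your explicit amortization of the $\R$-based ``no other mutation source'' check per child is a useful clarification of a step the paper charges per pair without comment, but it does not change the argument.
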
 
\begin{proof}
Since entities would be observed as reproducing in single states, each generation would also get limited to a single observed state. Therefore, let us assume that an observation process considers $G_1, G_2, \ldots G_L$ as the temporally ordered generations of reproducing entities ranging over a state subsequence $\lan s_m, s_{m+1}, \ldots,  s_{m+L}\ran$ of size $L$. Since fecundity axiom demands counting all the reproducing entities and their progenies in these generations, it would in turn require the observation process to identify and count the reproducing instances in all the states in the state subsequence. Extending the argument as discussed before for theorem~\ref{complexity-entity-rep}, the process of enumerating the reproducing entities in each state $s$ would take at most $|E_s|*|E_{s+1}|*(t_c + t_{\Delta} + t_{\delta_{mut}})$ steps. Therefore counting the population sizes in all these generations would cost at most $L*|E_s|*|E_{s+1}|*(t_c + t_{\Delta} + t_{\delta_{mut}})$ steps. After that establishing that $\forall G_i. \exists G_{j > i}\ \textit{s.t.}\ |G_j| \geq |G_i|$ would require additional $\cO(L^2)$ steps. Thus overall time complexity of observing fecundity is no less than 
\begin{eqnarray*}
L*|E_s|*|E_{s+1}|*(t_c + t_{\Delta} + t_{\delta_{mut}}) + \cO(L^2) 
& \leq & L*2^n*2^n*(t_c + t_{\Delta} + t_{\delta_{mut}}) + \cO(L^2) \\
&=& \cO(L2^{\mathit{2n}}\max\{t_c, t_{\Delta}, t_{\delta_{mut}}, L/{2^{\mathit{2n}}}\})
\end{eqnarray*}
steps. 
\end{proof}
%%%%%%%%%%%%%%%%%%%%%%%%%%%%%%%%%%%%%%%%%%%%%%%%%%%%%%%%%%%%%%%%%%%%%%%%%%%%%%%%%
Next, we consider the more general case involving epigenetic developments in the child entities:
\begin{theorem}\label{complexity-en-fec-ii}
Given the sets of entities in each state, the worst case computational complexity of observing fecundity is upper bounded by $\cO\left(L\max\{t_{\delta_{mut}}2^n, t_{c}2^{2n}, t_{\Delta}r_{\pi}2^{2n}, t_{=}r_{\pi}^42^{4n}, L\}\right)$, where where $L$ is the observed number of generations of the reproducing entities and $r_{\pi}$ is the maximum of the lengths of the reproduction cycles of the different types of observed reproducing entities across these generations.
\end{theorem}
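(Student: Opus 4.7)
The plan is to build directly on Corollary~\ref{complexity:parent-delta-min}, which already bounds the cost of constructing $\mathbf{Parent^{min}_{\Delta}}$ over a window of states, and then to aggregate that cost across the $L$ generations required by the fecundity axiom. The overall approach is: (i) argue that each generation, because it must accommodate full epigenetic maturation of children from possibly mutating parents, spans at most $r_{\pi}$ consecutive states; (ii) invoke the per-generation construction cost; (iii) sum over the $L$ observed generations; (iv) add the cost of verifying the size inequality $|G_j| \geq |G_i|$ pairwise among the generations.

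First I would set up the temporal granularity. As described in the paragraph preceding the theorem, the observation process picks a generation window whose length is determined by the \emph{lcm} (or simply the maximum) of the reproductive cycle lengths of the different types of reproducing entities currently under consideration; by definition this length is bounded above by $r_{\pi}$. Thus the state subsequence witnessing the $L$ generations has length $\cO(L\,r_{\pi})$. Over any such window of length $r_{\pi}$, identifying all $(p,c)\in\mathbf{Parent^{min}_{\Delta}}$ with $p$ drawn from the preceding generation and $c$ from the current generation reduces precisely to the problem solved by Corollary~\ref{complexity:parent-delta-min}, instantiated with $r = r_{\pi}$. This yields a per-generation cost upper bounded by
\[
\cO\!\left(r_{\pi}2^{n}\max\left\{t_{\delta_{mut}},\, t_{c}2^{n},\, t_{\Delta}2^{n},\, t_{=}r_{\pi}^{3}2^{3n+1}\right\}\right)
= \cO\!\left(\max\left\{t_{\delta_{mut}}r_{\pi}2^{n},\, t_{c}r_{\pi}2^{2n},\, t_{\Delta}r_{\pi}2^{2n},\, t_{=}r_{\pi}^{4}2^{4n}\right\}\right).
\]
Multiplying by the $L$ generations and using the asymptotic rule $\cO(g_{1})+\cO(g_{2})=\cO(\max\{g_{1},g_{2}\})$ absorbs all constants and collects the per-generation cost into a single $L\cdot\max\{\cdots\}$ expression.

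Next I would handle the verification of the fecundity inequality itself. Having obtained, for each $G_i$, its cardinality from the constructed $\mathbf{Parent^{min}_{\Delta}}$ (a constant-time lookup after construction), checking $(\forall G_{i<L})(\exists G_{j>i})\,|G_j|\geq |G_i|$ is a routine $\cO(L^{2})$ scan over the $L$ integer values; this contributes the final $L\cdot L$ slot inside the outer maximum. Adding this $\cO(L^{2})$ term to the per-generation aggregate yields the stated bound $\cO\!\left(L\max\{t_{\delta_{mut}}2^{n},\, t_{c}2^{2n},\, t_{\Delta}r_{\pi}2^{2n},\, t_{=}r_{\pi}^{4}2^{4n},\, L\}\right)$ after absorbing lower-order $r_{\pi}$ factors into the dominant terms under the $\max$.

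The step I expect to require the most care is justifying that the \emph{assume--progress--backtrack} discipline outlined before the theorem does not inflate the bound beyond a constant factor. The concern is that if an initial choice of entity-type set turns out to be too small to satisfy the fecundity axiom, the observer must backtrack, enlarge the set of types considered, recompute the generation length, and re-scan. I would argue that because backtracking only refines the partition of reproducing entities into types and enlarges $r_{\pi}$ monotonically toward its true value, the work done in earlier, smaller windows is subsumed by (and reusable in) the final window of length $\cO(L r_{\pi})$; hence the cumulative work is still dominated by one construction of $\mathbf{Parent^{min}_{\Delta}}$ over the final window plus the $\cO(L^{2})$ check, which is exactly what the stated bound captures.
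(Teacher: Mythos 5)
Your high-level skeleton matches the paper's: recognize reproduction instances generation by generation using the entity-level machinery, multiply by the $L$ generations, and add an $\cO(L^2)$ scan for the size inequalities (the paper invokes Theorem~\ref{complexity-en-epi_rep} per reproductive type where you invoke Corollary~\ref{complexity:parent-delta-min}; that difference is cosmetic). The genuine gap is your step (i): you assert that the generation window, chosen as the $\operatorname{lcm}$ of the cycle lengths $r_1,\ldots,r_k$, is ``by definition bounded above by $r_\pi$''. This is false: $\operatorname{lcm}(r_1,\ldots,r_k)\geq r_\pi$, and it can be much larger (already $\operatorname{lcm}(2,3)=6>3$, and with several types it grows far beyond the maximum). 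The paper never makes this identification; it keeps $\lambda=\operatorname{lcm}(r_1,\ldots,r_k)$ explicit, its per-generation counting cost carries the factor $(\lambda-r_\delta)$ coming from scanning all possible starting states of a reproduction cycle inside one generation, and its concluding bound is expressed via $r=\lambda L$, i.e. $\cO\left(r2^n\max\{t_{\delta_{mut}}, t_{c}2^{n}, t_{\Delta}r_\pi2^{n}, t_{=}r_\pi^42^{3n}\right\})$, rather than via $L$ alone — so it never needs your $\lambda\leq r_\pi$ claim. Instantiating Corollary~\ref{complexity:parent-delta-min} with window length $r_\pi$ undercounts the states that must be processed per generation whenever two observed types have non-divisible cycle lengths, so this step would fail exactly in the multi-type situation the theorem is meant to cover.

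A second, smaller gap: even granting a window of length $r_\pi$, your instantiation yields per-generation terms $t_{\delta_{mut}}r_\pi2^{n}$ and $t_{c}r_\pi2^{2n}$, which you then discard by ``absorbing lower-order $r_\pi$ factors into the dominant terms under the max''. Since $t_{\delta_{mut}}$, $t_c$, $t_\Delta$ and $t_=$ are independent parameters, $t_{\delta_{mut}}r_\pi2^{n}$ need not be dominated by $t_{\Delta}r_\pi2^{2n}$ or $t_{=}r_\pi^42^{4n}$ (take $t_{\delta_{mut}}$ huge and the others constant), so the absorption is not justified; the paper sidesteps this by quoting the per-instance recognition cost in the form $\cO(\max\{t_{\delta_{mut}}2^n, t_{c}2^{2n}, t_{\Delta}r_i2^{2n}, t_{=}r_i^42^{4n}\})$ without carrying the window length into the first two terms. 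Your closing discussion of the assume--progress--backtrack discipline is a reasonable addition the paper does not cost explicitly, but it does not repair the two issues above.
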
 
\begin{proof}
Similar to the proof above, let us assume that an observation process considers $G_1, G_2, \ldots G_L$ as the temporally ordered generations of reproducing entities ranging over a state subsequence $\lan s_m,$ $s_{m+1}, \ldots$, $s_{m+r}\ran$ of size $r$. As per the discussion before, let there be $k$ different types of reproducing entities under observation in each generation with possibly different number of states in their reproduction cycles $r_1, r_2, \ldots, r_k$. Let $\lambda = \operatorname{lcm}(r_1, r_2, \ldots, r_k)$, $r_\delta = \min\{r_i\}$, and $r_\pi = \max\{r_i\}$. 

Therefore, we have $r = \lambda L$, and each generation would have a granularity of $\lambda$ number of states. Since fecundity axiom demands counting all the reproducing entities and their progenies in these generations, it would in turn require the observer to identify and count the reproducing instances in each generation. 

As a corollary of the theorem~\ref{complexity-en-epi_rep}, establishing that an entity is an instance of reproductive type requiring $r_i$ states in its reproduction cycle would take $\zeta_i = \cO\left(\max\left\{t_{\delta_{mut}}2^n, t_{c}2^{2n}, t_{\Delta}r_i2^{2n}, t_{=}r_i^42^{4n}\right\}\right)$ steps assuming that the values for $t_{\delta_{mut}}, t_{c}, t_{\Delta}$,and  $t_{=}$ are comparable for all the entity types. Therefore counting the number of reproducing entities for all these different reproducing entities types in each generation would in turn amount to recognizing the reproductive instances in the states $s_m, s_{m+1}, \ldots s_{m+\lambda-r_{\delta}}$ in the state $G_1$ and so on. The corresponding time steps required are upper bounded by
\begin{eqnarray*}\displaystyle \sum_{1\leq i \leq k}(\lambda-r_{\delta})\zeta_i &=& \cO(\max_{1\leq i \leq k}\{\zeta_i\}) \\
&=& \cO(\lambda-r_{\delta})*\cO\left(\zeta_{\max\{r_i\}}\right) \\
&=& \cO\left(\lambda\max\left\{t_{\delta_{mut}}2^n, t_{c}2^{2n}, t_{\Delta}\max\{r_i\}2^{2n}, t_{=}(\max\{r_i\})^42^{4n}\right\}\right) \\
&=& \cO\left(\lambda\max\left\{t_{\delta_{mut}}2^n, t_{c}2^{2n}, t_{\Delta}r_\pi2^{2n}, t_{=}r_\pi^42^{4n}\right\}\right)
\end{eqnarray*} 
steps across the starting states in the subsequence ranging over these generations of entities. This gives the upper bound for counting the population sizes for all these generations as $$\Theta = L*\cO\left(\lambda\max\left\{t_{\delta_{mut}}2^n, t_{c}2^{2n}, t_{\Delta}r_\pi2^{2n}, t_{=}r_\pi^42^{4n}\right\}\right)$$ After that establishing that $\forall G_i. \exists G_{j > i}\ \textit{s.t.}\ |G_j| \geq |G_i|$ would require additional $\cO(L^2)$ steps. Thus overall time complexity of observing fecundity is no less than $$\Theta + \cO(L^2) = \cO\left(r2^n\max\left\{t_{\delta_{mut}}, t_{c}2^{n}, t_{\Delta}r_\pi2^{n}, t_{=}r_\pi^42^{3n}\right\}\right)$$ steps. 
\end{proof}
A special case of replication (with epigenetic development) involving no reproductive mutations in the child entities and no parental mutations as well would only demand identification using syntactic equivalence or tags and counting the entities belonging to various reproductive types only in last state of each generation, that is, in the states $s_{m+\lambda}, s_{m+2\lambda}, \ldots, s_{m+L\lambda}$. This process may cost at most 
\begin{eqnarray*} 
\displaystyle \sum_{1\leq i \leq L}\left(|E_{m+i\lambda}|*k*t_=\right) &\leq & L*2^n*2^n*t_= \\ &=& \cO(Lt_=2^{2n})
\end{eqnarray*}
steps.
%%%%%%%%%%%%%%%%%%%%%%%%%%%%%%%%%%%%%%%%%%%%%%%%%%%%%%%%%%%%%%%%%%%%%%%%%%%%%%%%%%%%%%%%%%%%%%%%%%%%5

Also the case where entities do not have overlapping structures, we have the following corresponding bounds:   
    
\begin{cor}\label{complexity-entity-fec-i-nooverlap}
Given the sets of recognized entities in each state, the additional time steps required for observing fecundity without epigenetic development in an AES where entities do not have overlapping structures is upper bounded by $$\cO(L n^{2}\max\{t_c, t_{\Delta}, t_{\delta_{mut}}, L/n^{2}\})$$
\end{cor}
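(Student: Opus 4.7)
The plan is to adapt the argument used in Theorem~\ref{complexity-entity-fec-i} by substituting the tighter bound on the entity set size that is available in the non-overlapping setting. Recall that in Theorem~\ref{complexity-entity-nooverlap} it was established that when entities cannot share atomic structural elements, $|E_s| \leq n$ for each observed state $s$, whereas the general bound is $|E_s| \leq 2^n$. Since this is the only place where the state-size parameter enters the complexity estimate in the proof of Theorem~\ref{complexity-entity-fec-i}, the corollary should follow by routine substitution.

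First, I would set up the same configuration as in the parent theorem: fix a temporally ordered sequence of generations $G_1, G_2, \ldots, G_L$ corresponding to successive observed states, since without epigenetic development each generation is confined to a single state. Next, I would bound the cost of identifying reproducing entities in each individual state: for a pair $(p, c) \in E_s \times E_{s+1}$, deciding membership in $C$, $\Delta$, and $\R$ costs $t_c + t_\Delta + t_{\delta_{mut}}$ steps, and the exhaustive check over all pairs in one state is therefore bounded by $|E_s|\cdot|E_{s+1}|\cdot(t_c + t_\Delta + t_{\delta_{mut}}) \leq n^2(t_c + t_\Delta + t_{\delta_{mut}})$ using the non-overlap bound. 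Summing across the $L$ generations gives a cost of $L\cdot n^2(t_c + t_\Delta + t_{\delta_{mut}})$ for enumerating and counting the populations.

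Then I would add the cost of verifying the quantifier structure of the fecundity axiom, namely $\forall G_i. \exists G_{j>i}. |G_j| \geq |G_i|$, which requires $\cO(L^2)$ additional comparisons once the sizes $|G_i|$ are known. Combining, the total is bounded by
\[
Ln^2(t_c + t_\Delta + t_{\delta_{mut}}) + \cO(L^2) \;=\; \cO\!\left(Ln^2\max\{t_c, t_\Delta, t_{\delta_{mut}}, L/n^2\}\right),
\]
using the asymptotic rules $(1)$ and $(2)$ from Section~\ref{pre-complexity} to fold the $L^2$ term into the $\max$ expression.

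Since this corollary is purely a specialization of the parent theorem under the tighter counting bound, I do not anticipate a genuine obstacle; the only care required is to make sure the $\cO(L^2)$ residual term is absorbed cleanly into the $\max$, which is why it appears in the form $L/n^2$ inside the maximum (so that $Ln^2 \cdot L/n^2 = L^2$). This is directly analogous to how the original statement keeps the $L/2^{2n}$ term inside the $\max$ in Theorem~\ref{complexity-entity-fec-i}, and no new algorithmic idea is needed.
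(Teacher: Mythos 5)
Your proposal is correct and matches the paper's intent exactly: the corollary is stated without a separate proof precisely because it follows from the proof of Theorem~\ref{complexity-entity-fec-i} by replacing the bound $|E_s| \leq 2^n$ with $|E_s| \leq n$ available when entities have non-overlapping structures, which is the substitution you carry out, including the absorption of the $\cO(L^2)$ term as $L/n^2$ inside the $\max$. The only trivial imprecision is attributing the $|E_s| \leq n$ bound to Theorem~\ref{complexity-entity-nooverlap}; it is actually stated in the discussion preceding Corollary~\ref{complexity-entity-rep-nooverlap}, but this does not affect the argument.
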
 
\begin{cor}\label{complexity-en-fec-ii-nooverlap}
Given the sets of recognized entities in each state, the additional time steps required for observing fecundity where entities do not have overlapping structures is upper bounded by $$\cO\left(L n\max\{t_{\delta_{mut}}, t_{c}n, t_{\Delta}r_{\pi}n, t_{=}r_{\pi}^4n^{3}, L\}\right)$$
\end{cor}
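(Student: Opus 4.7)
\begin{proofn}[Proof Plan for Corollary \ref{complexity-en-fec-ii-nooverlap}]
The plan is to mirror the proof of Theorem \ref{complexity-en-fec-ii} step-for-step, substituting the tighter entity-count bound $|E_s| \leq n$ (valid because non-overlapping entities partition a subset of $s$, whose size is $\cO(n)$) wherever the general proof uses $|E_s| \leq 2^n$. The corresponding per-replication cost bound I would invoke is Corollary \ref{complexity-en-epi_rep-nooverlap} (the non-overlapping analogue of Theorem \ref{complexity-en-epi_rep}) in place of Theorem \ref{complexity-en-epi_rep} itself.

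First I would fix notation identical to the proof of Theorem \ref{complexity-en-fec-ii}: generations $G_1, G_2, \ldots, G_L$ spanning a state subsequence $\lan s_m, s_{m+1}, \ldots, s_{m+r}\ran$, with $k$ distinct reproductive types of cycle lengths $r_1, \ldots, r_k$, and $\lambda = \operatorname{lcm}(r_1, \ldots, r_k)$, $r_\delta = \min\{r_i\}$, $r_\pi = \max\{r_i\}$, so that $r = \lambda L$. Then, using Corollary \ref{complexity-en-epi_rep-nooverlap}, the cost of establishing a single instance of replication for entity type of cycle length $r_i$ becomes $\zeta_i = \cO\bigl(\max\{t_{\delta_{mut}}, t_{c}n, t_{\Delta}r_i n, t_{=}r_i^{3}n^{3}\}\cdot n\bigr)$, replacing the $2^n$, $2^{2n}$, $2^{3n}$, $2^{4n}$ factors with $n$, $n^2$, $n^3$, $n^4$ respectively.

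Next I would sum the cost of counting reproducing instances of each type over the $\cO(\lambda - r_\delta) = \cO(\lambda)$ starting states within each generation and over all $k$ types, bounding the $k$-fold sum by $k$ times the maximum, which is dominated by $\zeta_{r_\pi}$. Aggregating over all $L$ generations gives the bulk term $\Theta = L\cdot \cO\bigl(\lambda n \max\{t_{\delta_{mut}}, t_{c}n, t_{\Delta}r_\pi n, t_{=}r_\pi^{3}n^{3}\}\bigr)$. I would then add the $\cO(L^2)$ cost for verifying the axiom's size-monotonicity $\forall G_i.\ \exists G_{j>i}.\ |G_j| \geq |G_i|$, and fold it into the outer max (this is the source of the $L$ term inside the max in the stated bound). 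Simplifying the constants absorbed into big-$\cO$ (noting that $\lambda$ depends only on the reproductive structure of the observed types and plays the role analogous to the implicit factor in the theorem's statement) yields the claimed bound $\cO\bigl(L n\max\{t_{\delta_{mut}}, t_{c}n, t_{\Delta}r_\pi n, t_{=}r_\pi^{4}n^{3}, L\}\bigr)$.

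The main obstacle I anticipate is not an algorithmic one but a bookkeeping one: making sure the substitution $2^n \leadsto n$ is propagated consistently through every sub-step of the Theorem \ref{complexity-en-fec-ii} derivation, in particular in the transitive-closure and set-intersection costs that enter through Corollary \ref{complexity-en-epi_rep-nooverlap}, and checking that the final $t_= r_\pi^{4} n^{3}$ factor (rather than $t_= r_\pi^{3} n^{3}$, which is what one naively reads off Corollary \ref{complexity-en-epi_rep-nooverlap}) legitimately arises from the same accounting step that produced the $t_= r_\pi^{4} 2^{3n}$ term in the overlapping case; I would verify this by explicitly re-tracking the $\ancof$-construction cost rather than treating the invoked corollary as a black box.
\end{proofn}
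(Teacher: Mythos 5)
Your plan matches the paper exactly: the corollary carries no separate proof there — it is presented as an immediate consequence of the fecundity argument of Theorem~\ref{complexity-en-fec-ii} with the non-overlapping entity bound $|E_s| \le n$ (equivalently, invoking Corollary~\ref{complexity-en-epi_rep-nooverlap} in place of Theorem~\ref{complexity-en-epi_rep}), which is precisely your substitution $2^n \leadsto n$ propagated through the same generation-wise accounting plus the $\cO(L^2)$ monotonicity check. The $r_\pi^4$ factor you flag resolves as you suspect: once the outer $rn$ prefactor of Corollary~\ref{complexity-en-epi_rep-nooverlap} is expanded, its $t_=$ term is $t_= r^4 n^4$, the image of $t_= r_\pi^4 2^{4n}$ under the same substitution, so your $\zeta_i$ should carry $t_= r_i^4 n^4$ and the stated bound $\cO\left(Ln\max\{t_{\delta_{mut}}, t_c n, t_\Delta r_\pi n, t_= r_\pi^4 n^3, L\}\right)$ follows.
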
 
    
Let us next consider the time complexity of algorithmically establishing the population level replication in the Langton loops as proved in the lemma~\ref{lemma:ca-entity-rep}. Towards this, we will prove the following:

\begin{lemma}\label{complexity-en-loop-fec}
The worst case computational complexity of observing fecundity in Langton loops in a CA model simulation is upper bounded by $\cO(\max\{Lr^4n^5\log n, L^2\})$, where $r$ is the number of states in a replication cycle.
\end{lemma}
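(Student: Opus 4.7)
The plan is to lift the single-replication analysis of Lemma~\ref{complexity-en-loop-rep} to $L$ successive generations and then add the cost of verifying the non-declining size condition required by the Axiom of Fecundity. Observe that for Langton loops every replication cycle has essentially the same length $r$, so in the notation of Theorem~\ref{complexity-en-fec-ii} we have $r_\pi = r_\delta = r$ and the generation granularity simply becomes $\lambda = r$. This lets me avoid the \textit{assume-progress-backtrack} search over generation granularities that the general proof has to accommodate.

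First I would partition the observed state subsequence into $L$ consecutive generations $G_1,\ldots,G_L$, each spanning one replication cycle of $r$ states. Within each generation the task of recognizing every reproducing parent-child pair reduces to constructing the relations $\R$, $C$, $\Delta$, $\ancof$ and finally $\p$ on the entities observed in those states, exactly the computation whose cost was shown to be $\cO(r^4 n^5 \log n)$ in the proof of Lemma~\ref{complexity-en-loop-rep}. The key quantities bound there ($t_{\delta_{mut}} = t_c = t_\Delta = t_= = \cO(n\log n)$, $|E_s| \leq n$, $|\R|,|C|,|\Delta| = \cO(rn)$ per cycle, $|(C\cup\R)^+| = \cO(r^2n^2)$) all come from pivot-based sorting and binary search and are untouched by moving from one cycle to the next, so the per-generation cost carries over verbatim. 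Summing over the $L$ generations therefore gives $\cO(Lr^4 n^5 \log n)$ for the entire enumeration of reproducing entities and the associated population counts $|G_1|,\ldots,|G_L|$.

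Second, once the $L$ counts are in hand, the fecundity condition $\forall G_i.\exists G_{j>i}.\ |G_j|\geq |G_i|$ is verified by a straightforward pairwise scan: for each $i$ traverse the suffix $|G_{i+1}|,\ldots,|G_L|$ to find a witness, costing $\cO(L^2)$ in the worst case. Combining the two contributions via the asymptotic rule $\cO(f)+\cO(g)=\cO(\max\{f,g\})$ yields the claimed bound $\cO(\max\{Lr^4 n^5\log n,\; L^2\})$.

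The main obstacle I anticipate is justifying that the per-generation cost of Lemma~\ref{complexity-en-loop-rep} truly applies $L$ times rather than compounding into the cost of processing a single run of length $Lr$, which would inflate the transitive-closure term from $\cO(r^4 n^5 \log n)$ to $\cO((Lr)^4 n^5 \log n)$. The resolution rests on two properties specific to Langton loops: pivots are cycle-invariant so $\R$ within a generation cannot chain to entities of earlier generations beyond a bounded horizon, and a child loop becomes structurally identical to its parent by the end of its own cycle, so $\Delta$ witnesses needed for $\ancof$ are found locally within the current generation rather than requiring a search across all previously observed entities. With this locality argument made precise, the recursive construction can be decomposed into $L$ independent per-generation computations, and the stated bound follows.
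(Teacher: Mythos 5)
Your proposal is correct and follows essentially the same route as the paper: partition the run into $L$ generations of granularity $r$, charge each generation the per-cycle cost $\cO(r^4 n^5 \log n)$ from Lemma~\ref{complexity-en-loop-rep}, and add $\cO(L^2)$ for the non-declining population check, giving $\cO(\max\{Lr^4n^5\log n, L^2\})$. Your explicit locality argument (pivot invariance and end-of-cycle structural identity keeping the closure computation confined to each generation) is a point the paper's proof leaves implicit, so it is a welcome sharpening rather than a deviation.
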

\begin{proof}
As is the case with proof of lemma~\ref{lemma:ca-entity-rep}, a natural way an observation process could infer each generation $G_i$ is with a granularity of $r$ since after each $r$ states all those loops with open neighborhood would be able to replicate. Thus the temporally ordered generations of entities $G_1, G_2, \ldots G_L$ would range over the state subsequence $\lan s_0, s_1, \ldots,s_r, s_{r+1}, \ldots, s_{2r},\ldots,  s_{Lr}\ran$. Utilizing the recursive scheme for observing entity level reproduction, the counting of reproducing entities can also be carried out only at the intervals of $r$ states. As discussed before, such counting itself would take at most $\cO(n)$ steps (together with the cost of establishing the entity level reproduction estimated before) in any state as a result of the process of entity recognition itself. Therefore recognizing and counting the reproducing entities in all these generations would take at most 
\begin{equation*}
\begin{split}
\cO(\# generations * \# \mbox{entities in each generation} &* \mbox{cost of recognizing each reproducing entity})\\ &= \cO(L*n*r^4n^5\log n)\\ &= \cO(Lr^4n^5\log n)
\end{split}
\end{equation*} 
steps, where the number reproducing entities in each state are bounded by $n$. Having counted the population sizes in each generation, establishing the non decreasing property for these population sizes as demanded by the axiom would take additional $\cO(L^2)$ steps. Hence overall computational complexity of establishing fecundity in Langton's model would be upper bounded by $$\cO(Lr^4n^5\log n + L^2) = \cO(\max\{Lr^4n^5\log n, L^2\})$$ 
\end{proof}

\subsubsection{Computational Complexity of Observing Heredity} 

\begin{theorem}\label{complexity-heredity}
Given the sets of recognized entities in each state, the worst case computational complexity of observing heredity in an AES is upper bounded by  $$\cO\left(r2^n\max\left\{t_{\delta_{mut}}, t_{c}2^{n}, t_{\Delta}2^{n}, t_{=}r^32^{3n+1}, |\Upsilon|^2t_d2^{n}\right\}\right)$$
\end{theorem}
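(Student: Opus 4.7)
\begin{proofn}[Proof Sketch]
The plan is to bootstrap directly from Corollary~\ref{complexity:parent-delta-min}, since the heredity axiom is phrased in terms of $\mathbf{Parent_{\Delta}^{min}}$, and then to separately account for the additional cost of computing the per-characteristic inheritance sets $\mathbf{Inherited_{\Omega}^i}$ and checking the limiting ratio condition.

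First, construct $\mathbf{Parent_{\Delta}^{min}}$ incrementally over the observed subsequence $\Omega = \langle s_m, \ldots, s_r \rangle$. By Corollary~\ref{complexity:parent-delta-min}, this costs at most
\[
\Psi_{\mathbf{P^{min}}} \;=\; \cO\!\left(r2^n\max\left\{t_{\delta_{mut}}, t_{c}2^{n}, t_{\Delta}2^{n}, t_{=}r^32^{3n+1}\right\}\right)
\]
time steps. Obtaining $\mathbf{Parent_{\Delta}^{\Omega}}$ is then a single linear scan over $\mathbf{Parent_{\Delta}^{min}}$ that retains only those pairs whose witness states lie in $\Omega$; this contributes at most $\cO(|\mathbf{Parent_{\Delta}^{min}}|)$ and is absorbed into $\Psi_{\mathbf{P^{min}}}$.

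Next, for each of the $|\Upsilon|$ characteristic dimensions $\mathit{char}_i$, I would construct $\mathbf{Inherited_{\Omega}^i}$ by iterating over every pair $(e,e')\in\mathbf{Parent_{\Delta}^{\Omega}}$ and querying the $i^{th}$ component of $D(e,e')$ for equality with $0_{\mathit{diff}_i}$. Since each invocation of $D$ takes $t_d$ steps per characteristic comparison, and a full distance vector involves up to $|\Upsilon|$ such comparisons, the aggregated cost of computing all the inheritance sets is bounded by $|\Upsilon|^2 \cdot |\mathbf{Parent_{\Delta}^{\Omega}}| \cdot t_d$. Using the (crude but safe) bound $|\mathbf{Parent_{\Delta}^{\Omega}}| \le |\Delta| \le r\,2^{2n}$ inherited from the proof of Theorem~\ref{complexity-en-epi_rep}, this evaluates to $\cO(r\,|\Upsilon|^2 t_d 2^{2n}) = \cO(r 2^n \cdot |\Upsilon|^2 t_d 2^n)$, which is precisely the new term appearing inside the max in the claimed bound.

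The final step is the ratio test itself: for each characteristic one compares $|\mathbf{Parent_{\Delta}^{\Omega}}|$ with $|\mathbf{Inherited_{\Omega}^i}|$ and checks whether the quotient tends to $1$ as the observation window grows; this is only $\cO(|\Upsilon|)$ overhead and is dominated by the previous step. Summing the three contributions and folding $|\Upsilon|^2 t_d 2^n$ into the existing maximum yields the stated bound. The main subtlety, and the step I would be most careful about, is the cardinality estimate on $\mathbf{Parent_{\Delta}^{\Omega}}$: the bound $r 2^{2n}$ is not tight (the minimality constraint in fact limits it further), but it is the coarsest bound that keeps the additive heredity cost absorbable into a single $r 2^n \cdot (\cdot)$ envelope matching the form of the earlier corollaries, so that the overall complexity can be written as a single max.
\end{proofn}
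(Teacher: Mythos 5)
Your proposal follows essentially the same route as the paper's own proof: invoke Corollary~\ref{complexity:parent-delta-min} for $\mathbf{Parent^{min}_\Delta}$, bound $|\mathbf{Parent_{\Delta}^{\Omega}}|$ by $\cO(r2^{2n})$, charge $\cO(|\Upsilon|^2 t_d r 2^{2n})$ for building the sets $\mathbf{Inherited_{\Omega}^i}$, note the ratio test is negligible, and fold the new term into the max. The only cosmetic difference is your attribution of the cardinality bound to $|\Delta|$ from Theorem~\ref{complexity-en-epi_rep} (whose stated bound there is $r^2 2^{2n}$), but since the paper itself asserts $\cO(r2^{2n})$ for $|\mathbf{Parent_{\Delta}^{\Omega}}|$ and your final accounting coincides with the paper's, the argument is the same.
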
 
\begin{proof}
Consider the state subsequence $\Omega = \langle s_m, \ldots s_r \rangle , m \ll r$ as defined in the heredity axiom~\ref{axiom:heredity}. Following the proof of corollary~\ref{complexity:parent-delta-min}, we know that the process of constructing the relation $\mathbf{Parent^{min}_\Delta}$ may take at most $\Pi^{(r)} =  \cO\left(r2^n\max\left\{t_{\delta_{mut}}, t_{c}2^{n}, t_{\Delta}2^{n}, t_{=}r^32^{3n+1}\right\}\right)$ time steps till the state $s_r$. 

Therefore, the process of constructing the set $\mathbf{Parent_{\Delta}^{\Omega}}$ would also have time complexity upper bounded by $\Pi$. Furthermore, size of the set $\mathbf{Parent_{\Delta}^{\Omega}}$ is also upper bounded by $\cO(r2^{2n})$. Hence determining the set $\mathbf{Inherited^{i}_{\Omega}}$ may require at most  $X^{(r)}_i = |\mathbf{Parent_{\Delta}^{\Omega}}|*|\Upsilon|*t_{D} = \cO(|\Upsilon|rt_{D}2^{2n})$ steps where $t_d$ is the expected time required to compare $D(e, e')[i]$ with $0_{\mathit{diff_i}}$ to check the equality (or inequality) as demanded in the definition of $\mathbf{Inherited^{i}_{\Omega}}$. 

Therefore establishing that there exists a characteristics $\mathit{char}_i \in \Upsilon$ for which $\displaystyle \lim_{r \rightarrow \infty}|\mathbf{Parent_{\Delta}^{\Omega}}|$ $/|\mathbf{Inherited_{\Omega}^i}|$ $\leadsto 1$ may in turn could require at most
\begin{eqnarray*}
\Pi^{(r)} + \sum_{char_i \in \Upsilon}X^{(r)}_i &\leq& \Pi^{(r)} + |\Upsilon|\max\{X^{(r)}_i\}\\ 
&=& \cO\left(r2^n\max\left\{t_{\delta_{mut}}, t_{c}2^{n}, t_{\Delta}2^{n}, t_{=}r^32^{3n+1}\right\}\right) + \cO(|\Upsilon|^2rt_d2^{2n}) \\
&=& \cO\left(r2^n\max\left\{t_{\delta_{mut}}, t_{c}2^{n}, t_{\Delta}2^{n}, t_{=}r^32^{3n+1}, |\Upsilon|^2t_d2^{n}\right\}\right)
\end{eqnarray*} steps.
\end{proof}

The case where entities do not have overlapping structures, we have the following corresponding bound:   

\begin{cor}\label{complexity-heredity-nooverlap}
Given the sets of recognized entities in each state, the worst case computational complexity of observing heredity in an AES where entities do not have overlapping structures is upper bounded by  $$\cO\left(r n\max\left\{t_{\delta_{mut}}, t_{c}{n}, t_{\Delta}{n}, t_{=}r^3n^{3}, |\Upsilon|^2t_d{n}\right\}\right)$$
\end{cor}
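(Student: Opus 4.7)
The plan is to mirror the proof of Theorem~\ref{complexity-heredity} but substitute the sharper bound $|E_s|\le n$ available in the non-overlapping case, reusing Corollary~\ref{complexity:parent-delta-min-nooverlap} in place of Corollary~\ref{complexity:parent-delta-min}. First I would invoke Corollary~\ref{complexity:parent-delta-min-nooverlap} to construct $\mathbf{Parent^{min}_\Delta}$ over the state subsequence $\Omega = \langle s_m,\ldots,s_r\rangle$ in at most $\Pi^{(r)}_{\mathrm{no}} = \cO\!\left(r n\max\left\{t_{\delta_{mut}}, t_{c}{n}, t_{\Delta}{n}, t_{=}r^3n^{3}\right\}\right)$ steps. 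Since $\mathbf{Parent_{\Delta}^{\Omega}}$ is just the restriction of $\mathbf{Parent^{min}_\Delta}$ to pairs whose endpoints fall inside $\Omega$, it is obtained as a by-product of the same recursive construction within the same bound.

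Next I would estimate the cost of building $\mathbf{Inherited_{\Omega}^i}$ for each $\mathit{char}_i \in \Upsilon$. The key size estimate is the cardinality of $\mathbf{Parent_{\Delta}^{\Omega}}$: because in the non-overlapping case $|E^{(r)}|\le (r+1)n$ and each pair in $\mathbf{Parent_{\Delta}^{\Omega}}$ couples an entity from some $E_{s'}$ with one from $E_{s''}$, one has $|\mathbf{Parent_{\Delta}^{\Omega}}| \le r n^2$. For every pair $(e,e')$ and every characteristic $\mathit{char}_i$, checking whether $D(e,e')[i] = 0_{\mathit{diff}_i}$ takes $t_D$ steps, so determining $\mathbf{Inherited_{\Omega}^i}$ costs $X^{(r)}_{i,\mathrm{no}} = \cO(|\Upsilon|\, r\, t_D\, n^2)$ steps.

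Finally, to test the heredity condition $|\mathbf{Parent_{\Delta}^{\Omega}}|/|\mathbf{Inherited_{\Omega}^i}|\leadsto 1$ the observer must scan all characteristics, which requires summing the previous estimate over $\Upsilon$, giving $|\Upsilon|\cdot X^{(r)}_{i,\mathrm{no}} = \cO(|\Upsilon|^2 r t_D n^2)$. Combining with $\Pi^{(r)}_{\mathrm{no}}$ via the additive asymptotic rule yields
\[
\Pi^{(r)}_{\mathrm{no}} + \cO(|\Upsilon|^2 r t_D n^2) \;=\; \cO\!\left(r n\max\left\{t_{\delta_{mut}}, t_{c}{n}, t_{\Delta}{n}, t_{=}r^3n^{3}, |\Upsilon|^2 t_D\, n\right\}\right),
\]
which is the claimed bound. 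The only place any real attention is needed is the substitution $|E_s|\le n$ inside the proof of Corollary~\ref{complexity:parent-delta-min-nooverlap}; everything else is bookkeeping under the $\cO(\cdot)$ additive/multiplicative rules. The main obstacle, therefore, is simply ensuring that the transitive-closure step and the intersection with $\Delta$ are both re-bounded using $|E^{(r)}| = \cO(rn)$ and $|C^{+}| \le \binom{rn}{2} = \cO(r^2n^2)$ rather than the exponential estimates used for the general case, so that the dominating $t_{=}r^3n^{3}$ term arises correctly inside the $\max$.
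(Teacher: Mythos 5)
Your proposal is correct and follows essentially the same route the paper intends: the corollary is the non-overlapping specialization of Theorem~\ref{complexity-heredity}, obtained by replacing $|E_s| \leq 2^n$ with $|E_s| \leq n$ (so $|\mathbf{Parent_{\Delta}^{\Omega}}| = \cO(rn^2)$) and invoking Corollary~\ref{complexity:parent-delta-min-nooverlap} in place of Corollary~\ref{complexity:parent-delta-min}, exactly as you do. The bookkeeping with the $|\Upsilon|^2 t_D$ term and the final $\max$ matches the paper's argument for the overlapping case, so nothing further is needed.
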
 
    
\subsubsection{Computational Complexity of Observing Natural Selection} 

\begin{lemma}\label{complexity-ns1}
Given the sets of recognized entities in each state and the relations $\RR$ and $\p$ from the earlier steps, additional time steps required for constructing the set $\mathbf{\Lambda_{min}}$ as defined in the axiom~\ref{axiom-ns1} is upper bounded by $\cO\left(t_=r^32^{3n}\right)$.
\end{lemma}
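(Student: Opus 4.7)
My plan is to construct $\mathbf{\Lambda_{min}}$ in two phases — first build the set $\Lambda$ of all reproducing entities, then subtract out those that are mutants of another entity in $\Lambda$ — and then carefully bound the number of entity comparisons that each phase requires, using the size bounds on $\p$ and $\RR$ already established earlier in the paper.

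For phase one, I would observe that $\Lambda = \bigcup_{s_j \in \Omega} SR(s_j)$ is exactly the set of first coordinates appearing in pairs of $\p$ (restricted to states of $\Omega$). So $\Lambda$ can be built by iterating through $\p$ and, for each pair $(p,c) \in \p$, adding $p$ to the running $\Lambda$ if it is not already present. By Corollary~\ref{complexity:parent-delta} the size of $\p$ is bounded by $\cO(r^2 2^{2n})$ and since $\Lambda \subseteq E_{\Omega}$ we have $|\Lambda| \leq r \cdot 2^n$. A brute-force membership test in $\Lambda$ costs $\cO(t_= |\Lambda|) = \cO(t_= r 2^n)$ per pair, so the total cost of this phase is bounded by $|\p| \cdot \cO(t_= r 2^n) = \cO(t_= r^3 2^{3n})$, which will be the dominant term.

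For phase two, I would iterate over all pairs $(p',p) \in \RR$ and, for each such pair, test whether $p' \in \Lambda$; if so, mark $p$ for removal from $\Lambda$. Since $\R$ is an injective partial function whose graph (oriented by successive states) decomposes into disjoint chains of length at most $r$ with total entity count at most $r \cdot 2^n$, its transitive closure satisfies $|\RR| = \cO(r^2 2^n)$. Each membership test costs $\cO(t_= |\Lambda|) = \cO(t_= r 2^n)$, giving a total for this phase of $\cO(t_= r^3 2^{2n})$, which is absorbed by the phase-one bound. Summing the two phases yields the claimed $\cO(t_= r^3 2^{3n})$.

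The main technical obstacle is not the algorithm itself but justifying the size bounds on the intermediate sets — specifically showing that $|\RR|$ is no worse than $\cO(r^2 2^n)$ (which requires the chain decomposition argument using injectivity of $\R$ and the temporal orientation), and that enumerating $\Lambda$ via $\p$ really subsumes the per-state construction of $SR(s_j)$ without an additional factor of $r$. Once these two bookkeeping facts are in hand, the arithmetic combining $|\p|$, $|\RR|$, $|\Lambda|$ and $t_=$ falls out immediately and the $\cO$ simplification using equation~(1) from the preliminaries gives the stated bound.
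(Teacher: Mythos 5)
Your proposal is correct and follows essentially the same route as the paper: a brute-force two-phase construction that first assembles $\mathbf{\Lambda}$ from the pairs of $\p$ and then prunes every entity that is an $\RR$-descendant of a member of $\mathbf{\Lambda}$, with each membership test charged $t_{=}$ times the size of the set being scanned. The only differences are bookkeeping — you iterate directly over the stored relations with the sharper bounds $|\mathbf{\Lambda}| \leq r2^{n}$ and $|\RR| = \cO(r^{2}2^{n})$, whereas the paper loops over states and entities with looser intermediate estimates; both organizations land on the claimed $\cO\left(t_{=}r^{3}2^{3n}\right)$.
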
 
\begin{proof}
%As proved in the corollary~\ref{complexity:parent-delta-min}, constructing set $\p$ till  state $s_r$ is upper bounded by 
%\begin{equation}\label{eq1}
%\Phi_\Omega = \cO\left(r2^n\max\left\{t_{\delta_{mut}}, t_{c}2^{n}, t_{\Delta}2^{n}, t_{=}r^32^{3n+1}\right\}\right)
%\end{equation}
Given the relation $\p$, constructing sets $SR(s_j)$ requires that for each $p \in E_{s_j}$, if $\exists \ (p, c) \in \p$, then $p$ is included in $SR(s_j)$ else not. This process may take at most 
\begin{eqnarray}\label{eqPi}
\Pi_{s_j} &=& |E_{s_j}| * (|\p| * t_=)  \nonumber \\ 
&=& \cO(2^n * r^22^{2n} * t_=) \nonumber \\ 
&=& \cO(t_=r^22^{3n}) 
\end{eqnarray} steps. Therefore construction of all these sets and their union $\mathbf{\Lambda} = \bigcup_{s_j \in \Omega} SR(s_j)$ may take at most $c_{\Omega} = \sum_{s_j \in \Omega}\Pi_{s_j} \approx r * \Pi_{s_j} = \cO(t_=r^32^{3n})$ steps.

Next, let us consider a process of constructing $\mathbf{\Lambda_{min}}$. For each $p \in SR(s_j)$, consider its mutation sequence $(p, p_1), (p, p_2), \ldots$ as appearing in $\RR$: There would be at most $(r - j)$ such pairs in such a sequence and identifying these pairs by tracing all outgoing edges from node $p$ in the graph of $\RR$ would take at most $\cO(r)$ steps. Next we delete each of $p_1, p_2, \ldots$ from $\mathbf{\Lambda}$ keeping only $p$. Such a deletion may take at most $(r-j)*|\mathbf{\Lambda}|*t_=$ steps. This yields the overall cost of identification and deletion of reproducing mutants of $p$ as   
\begin{eqnarray*}
id_p = \cO(r) + (r-j)*|\mathbf{\Lambda}|*t_= &=& \cO(r * r^22^{2n} * t_=) \\ &=& \cO(t_=r^32^{2n}) 
\end{eqnarray*} steps. Such identification and deletion of the mutants need to repeated for each $p \in SR(s_j)$, which may require at most 
\begin{eqnarray*}
id_{s_j} &=& \sum_{p \in SR(s_j)} id_p \\ & \approx & |SR(s_j)| * id_p \\ &=& \cO(2^n) * \cO(t_=r^32^{2n}) \\
&=& \cO(t_=r^32^{3n})
\end{eqnarray*} steps. Therefore constructing $\mathbf{\Lambda_{min}}$ from given $\mathbf{\Lambda}$ and $\RR$ would require at most $id_{\Omega} = \sum_{s_j \in \Omega} id_{s_j} = \cO(t_=r^42^{3n})$ steps. In turn, given $\p$ and $\RR$, constructing $\mathbf{\Lambda_{min}}$ would require 
\begin{equation}\label{ns1-eq1}
{\Delta}_\Omega = c_{\Omega} + id_{\Omega} = \cO(t_=r^32^{3n})
\end{equation} steps.
%Using Eq~\ref{eq1} and \ref{eq2}, constructing $\mathbf{\Lambda_{min}}$ as a whole may consume 
%\begin{eqnarray}\label{ns1-eq1}
%\Phi_\Omega + {\Delta}_\Omega &=& \cO\left(r2^n\max\left\{t_{\delta_{mut}}, t_{c}2^{n}, t_{\Delta}2^{n}, t_{=}r^32^{3n+1}\right\}\right) + \cO(t_=r^32^{3n}) \nonumber \\ 
%&=& \cO\left(r2^n\max\left\{t_{\delta_{mut}}, t_{c}2^{n}, t_{\Delta}2^{n}, t_{=}r^32^{3n+1}\right\}\right)
%\end{eqnarray} steps.
\end{proof}

\begin{lemma}\label{complexity-ns2}
Given the relations $\RR$, $\p$, and $\mathbf{\Lambda_{min}}$ from the earlier steps, additional time steps required for establishing the axiom of sorting~\ref{axiom-ns2} are upper bounded by $\cO(r2^n\max\{r2^n, |\Upsilon|\})$. 
\end{lemma}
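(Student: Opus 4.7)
To establish the Sorting axiom the observer must (i) confirm that the entities in $\mathbf{\Lambda_{min}}$ differ along some characteristic of $\Upsilon$ and exhibit differential rate of reproduction, (ii) compute $ror(p)$ for every $p \in \mathbf{\Lambda_{min}}$, and (iii) evaluate the variances $\sigma^2_i$ and $\sigma^2_{ror}$. My plan is to bound each of (i)--(iii) separately; the dominant contributions will be $\cO(r^2 2^{2n})$ from aggregating children across $\p$ and $\cO(r 2^n |\Upsilon|)$ from scanning $\mathbf{\Lambda_{min}}$ along every characteristic axis, whose maximum gives the claimed $\cO(r 2^n \max\{r 2^n, |\Upsilon|\})$. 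Throughout I will use the standing bounds $|\mathbf{\Lambda_{min}}| \leq |E_{\T}| \leq r 2^n$ and $|\p| \leq r^2 2^{2n}$ inherited from the earlier analyses (in particular the proof of the preceding lemma).

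The only sub-task whose naive implementation is too expensive is (ii). Since $Child_p$ is defined via the reflexive-transitive closure of $\R$, a direct evaluation would, for each $p \in \mathbf{\Lambda_{min}}$, walk a mutation chain of length up to $r$ and consult $\p$ at every node on that chain, duplicating work across different $p$'s. Instead I propose a two-pass bookkeeping scheme. In the first pass I exploit injectivity of $\R$, which guarantees every entity has at most one mutation predecessor; consequently a single traversal of $\RR$ yields, for every entity $e$ in its domain, the unique mutation-ancestor $\rho(e) \in \mathbf{\Lambda_{min}}$, in $\cO(r 2^n)$ steps. In the second pass I sweep $\p$ once and, for each $(p',c) \in \p$, increment a counter attached to $\rho(p')$; after the sweep the counter at each $p \in \mathbf{\Lambda_{min}}$ equals $|Child_p| = ror(p)$. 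Accounting for $\cO(t_=)$ per counter update (absorbed into the constant), this pass costs $\cO(|\p|) = \cO(r^2 2^{2n})$.

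Tasks (i) and (iii) reduce to textbook streaming computations over $\mathbf{\Lambda_{min}}$. For each of the $|\Upsilon|$ characteristic dimensions I accumulate $\sum_{e \in \mathbf{\Lambda_{min}}} e[i]$ and $\sum_{e \in \mathbf{\Lambda_{min}}} (e[i])^2$ and derive $\mu_i$ and $\sigma^2_i$ in $\cO(r 2^n |\Upsilon|)$ steps. The analogous sums for $ror$, using the counters produced by the second pass, add only $\cO(r 2^n)$. The distinctness requirement on characteristics in $\mathbf{\Lambda_{min}}$ is then witnessed by the existence of some $\sigma^2_i > 0$, and differential rate of reproduction by $\sigma^2_{ror} > 0$, both $\cO(1)$ checks on already computed values. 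Summing the contributions gives $\cO(r 2^n) + \cO(r^2 2^{2n}) + \cO(r 2^n |\Upsilon|) = \cO(r 2^n \max\{r 2^n, |\Upsilon|\})$, as claimed. The main obstacle is precisely the $ror$ computation: without the upfront ancestor labeling $\rho$, one risks re-walking the mutation chain per $p$ and per child, degrading the bound by a factor of $r$; it is injectivity of $\R$ that makes the single-pass aggregation over $\p$ correct and allows the desired bound to go through.
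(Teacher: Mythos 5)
Your proof is correct and follows essentially the same route as the paper: bound $|\mathbf{\Lambda_{min}}|$ by $\cO(r2^n)$, show that computing $ror$ over $\RR$ and $\p$ costs $\cO(r^22^{2n})$, and obtain the variances by streaming/recursive mean--variance updates at $\cO(|\Upsilon|)$ per entity, which together give $\cO(r2^n\max\{r2^n,|\Upsilon|\})$. The only difference is bookkeeping for the $ror$ step: the paper walks each parent's mutation chain in $\RR$ and counts that chain's children in $\p$ at an output-sensitive cost of $\cO(r2^n)$ per parent, whereas you invert the aggregation with a single sweep of $\p$ after labeling every entity with its unique $\mathbf{\Lambda_{min}}$-ancestor (using injectivity of $\R$); both variants produce the same dominant term $\cO(r^22^{2n})$ and hence the same bound.
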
 
\begin{proof}
For the axiom of Sorting, one needs to estimate the size of $\mathbf{\Lambda_{min}}$, which is upper bounded by the size of $E_{\Omega}$, which at worst could be as large as $\cO(|\Omega|*2^n) = \cO(r2^n)$. Next, for a given $p \in \mathbf{\Lambda_{min}}$, the set $Child_p$ can be constructed as follows: Consider the mutation sequence $(p, p_1), (p, p_2), \ldots$ for $p$ in $\RR$ of size at most $(r - j) \leq r$ where $p \in E_{s_j}$. For each of $p, p_1, \ldots$ their children need to be counted in $\p$. In the worst case of group reproduction where a large fraction of the entities in a state collectively reproduce and give rise to child entities which also form a large fraction  of all the entities in their respective state (i.e., of the order of $\cO(2^n)$), determining $Child_p$ may in turn  be bit costly and may take time steps of the order of $\cO(r2^n)$. Therefore given the sets $\RR$ and $\p$, estimating the number of child entities and in turn determining the rate of reproduction for all the parent entities $p$ in $\mathbf{\Lambda_{min}}$ could take in the worst case at most $= \sum_{p \in \mathbf{\Lambda_{min}}}\cO(r2^n) = |\mathbf{\Lambda_{min}}|* \cO(r2^n) = \cO(r^22^{2n})$ steps. 

For the calculation of variances in the characteristics in $\Upsilon$ and in the rates of reproduction for entities in $\mathbf{\Lambda_{min}}$, a recursive reformulation of the sample mean and sample variance could be used together with the above discussed process of estimating $ror$ for entities in $\mathbf{\Lambda_{min}}$. 

For a recursively generated stream of data: $d_1, d_2, \ldots, d_t$, let $\mu_{t}, {\sigma}^2_{t}$ be the sample mean and variance respectively till time point $t$ and let $d_{t+1}$ be the newly generated data at time point $t+1$. The updated sample mean $\mu_{t+1}$ and variance ${\sigma}^2_{t+1}$ could be calculated as follows:    
\begin{eqnarray*}
{\sigma}^2_{t+1} &=& \frac{t}{t+1}{\sigma}^2_{t} + \frac{1}{t}(d_{t+1} - \mu_{t+1})^2 \mbox{ where,} \\
\mu_{t+1} &=&  \frac{t}{t+1}\mu_{t} + \frac{1}{t+1}d_{t+1} \mbox{ with} \\
\mu_1 &=& d_1 \mbox{ and} \\
{\sigma}^2_{1} &=& 0
\end{eqnarray*}

Calculation based upon the above reformulation of mean and variance would demand additional time steps of the order of $\cO(|\Upsilon|)$ for estimating variance in the characteristics in $\Upsilon$ and a constant number of steps ($\cO(1)$) for estimating variance in $ror$. Therefore given the sets $\RR$, $\p$, and $\mathbf{\Lambda_{min}}$ from the earlier steps, additional time steps required for establishing the axiom of sorting are upper bounded by
\begin{eqnarray}\label{eqn-ns2}
|\mathbf{\Lambda_{min}}|* \left[\cO(r2^n) + \cO(|\Upsilon|) + \cO(1) \right] &=&  \cO(r2^n) * \cO(\max\{r2^n, |\Upsilon|\})\nonumber \\ 
&=& \cO(r2^n\max\{r2^n, |\Upsilon|\}) 
\end{eqnarray}
\end{proof}
%%%%%%%%%%%%%%%%%%%%%%%%%%%%%%%%%%%%%%%%%%%%%%%%%%%%%%%%%%%%%%%%%%%%%%%%%%%%%%%%%%%%%%%%%%%%%%%%%%%%%%%%%%%%%%%%%%%%%

\begin{lemma}\label{complexity-ns3}
Given the sets $\p$ and $\mathbf{\Lambda_{min}}$ from the earlier steps, additional time steps required for establishing the axiom~\ref{axiom-ns3} of Heritable Variation is upper bounded by $\cO(r2^{2n}\max\{r^32^{2n}, t_d|\Upsilon|\})$. 
\end{lemma}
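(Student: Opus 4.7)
The plan is to reuse the sizes already pinned down by earlier results as scaffolding: from Corollary~\ref{complexity:parent-delta} we have $|\p| = \cO(r^2 2^{2n})$, and from Lemma~\ref{complexity-ns1} we have $|\mathbf{\Lambda_{min}}| = \cO(r 2^n)$. With $\p$ and $\mathbf{\Lambda_{min}}$ already computed and indexed from the previous stages, I would build $\mathbf{Ch_{mut}}$ and then $\mathbf{Var\_Ch_{mut}}$ incrementally, accounting only the additional steps and carrying a uniform factor of $r 2^{2n}$ out at the end so that the resulting expression takes the shape $\cO(r 2^{2n} \max\{\ldots, \ldots\})$.

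For $\mathbf{Ch_{mut}}$ I would make a single pass through $\p$. Each $(p,c) \in \p$ yields, after a constant-time indexed check that $p, c \in \mathbf{\Lambda_{min}}$, a scan of the $|\Upsilon|$ coordinates of $D(p, c)$, stopping on the first $i$ with $0_{\mathit{diff_i}} \prec D(p, c)[i]$ and then inserting $c$ into $\mathbf{Ch_{mut}}$. The per-pair cost is $\cO(|\Upsilon| t_d)$, so the whole phase runs in $\cO(|\p| \cdot |\Upsilon| \cdot t_d) = \cO(r^2 2^{2n}\, |\Upsilon|\, t_d)$; factoring $r 2^{2n}$ out this becomes the $t_d |\Upsilon|$ branch of the stated maximum.

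For $\mathbf{Var\_Ch_{mut}}$ I would do a greedy sweep — initialise empty, scan $\mathbf{Ch_{mut}}$, and admit a candidate $c$ iff its characteristic vector differs in at least one coordinate from every already-admitted entity. Each pairwise distinctness check is $\cO(1)$ on precomputed characteristic indices (or $\cO(|\Upsilon| t_d)$ if recomputed). Bounding the candidate set loosely by $|\p|$ to match the same recursive bookkeeping used in the earlier lemmas, the cumulative pairwise work is $\cO(|\p|^2) = \cO(r^4 2^{4n})$, which supplies the $r^3 2^{2n}$ branch after pulling out $r 2^{2n}$. The terminal ratio test $|\mathbf{Var\_Ch_{mut}}|/|\mathbf{\Lambda_{min}}| \leadsto 1$ is $\cO(1)$. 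Summing the two dominant contributions yields $\cO\left(r 2^{2n} \max\{r^3 2^{2n},\ t_d |\Upsilon|\}\right)$, as claimed.

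The main obstacle will be justifying that the greedy construction actually attains the \emph{largest} pairwise-distinguishable subset that the axiom demands, rather than merely a maximal one. The key observation is that "pairwise differing in at least one characteristic" is the complement of an equivalence relation (identical characteristic vectors), so the largest such subset is precisely a system of representatives for the equivalence classes, which the greedy sweep realizes. A secondary subtlety is the assumption that membership in $\mathbf{\Lambda_{min}}$ is testable in $\cO(1)$; without a preconstructed index a linear scan would inject an extra $t_= \cdot |\mathbf{\Lambda_{min}}|$ factor, but since Lemma~\ref{complexity-ns1} already delivers $\mathbf{\Lambda_{min}}$ as a constructed set, I would include a one-time $\cO(|\mathbf{\Lambda_{min}}|)$ hashing step whose cost is dominated by the preceding phase and can be folded into the final bound without disturbing it.
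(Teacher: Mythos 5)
There are two genuine gaps, one in each phase of your construction.

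First, the $\mathbf{Ch_{mut}}$ phase is overcounted and the factoring is off. Scanning all of $\p$ costs $\cO(|\p|\cdot|\Upsilon|\cdot t_d)=\cO(r^2 2^{2n}|\Upsilon| t_d)$, and pulling out $r2^{2n}$ leaves $r\,t_d|\Upsilon|$, not $t_d|\Upsilon|$; in the regime $t_d|\Upsilon| > r^2 2^{2n}$ this exceeds the claimed bound $\cO\bigl(r2^{2n}\max\{r^32^{2n}, t_d|\Upsilon|\}\bigr)$ by a factor of $r$, so your derivation does not establish the lemma as stated. The paper avoids this by building $\mathbf{Ch_{mut}}$ from $\mathbf{\Lambda_{min}}$ rather than from $\p$: for each of the $|\mathbf{\Lambda_{min}}|\leq\cO(r2^n)$ entities only $\cO(2^n)$ candidate parents are examined, each at cost $|\Upsilon| t_d$, giving $\cO(t_d r|\Upsilon|2^{2n})$, which is exactly the $t_d|\Upsilon|$ branch. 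Your scheme can be repaired by organizing the scan per child entity in this way, but as written the bookkeeping is wrong.

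Second, and more fundamentally, your greedy sweep for $\mathbf{Var\_Ch_{mut}}$ rests on the claim that indistinguishability (zero difference in every characteristic) is an equivalence relation. Nothing in the framework guarantees this: $D$ is an arbitrary observer-defined, decidable dissimilarity measure with no transitivity axiom on zero difference, so one can have $D(a,b)[i]=0_{\mathit{diff}_i}$ and $D(b,c)[i]=0_{\mathit{diff}_i}$ for all $i$ while $0_{\mathit{diff}_i}\prec D(a,c)[i]$ for some $i$ (tolerance- or threshold-style measures behave this way). In that general setting, pairwise distinguishability defines an arbitrary graph $Ch_{\mathit{diff}}$, your greedy sweep returns only a maximal clique rather than the maximum one, and the ``largest subset'' demanded by the Heritable Variation axiom is a MAXCLIQUE instance, which is NP-complete. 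This is precisely how the paper proceeds: it constructs $Ch_{\mathit{diff}}$, notes the NP-completeness of exact maximum clique, and obtains the $r^32^{2n}$ branch by running a polynomial-time approximation algorithm in $\cO(a^4)=\cO(r^42^{4n})$ steps, i.e., the stated bound is for an approximate $\mathbf{Var\_Ch_{mut}}$. So either you must add the extra hypothesis that zero dissimilarity is an equivalence (under which your greedy is exact and indeed simpler than the paper's route, but only for that restricted class of observers), or you must accept the approximation step; without one of these, the step claiming the greedy attains the largest subset fails.
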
 
\begin{proof}
Given the sets $\p$ and $\mathbf{\Lambda_{min}}$ from the earlier steps, additional time steps required for constructing the set $\mathbf{Ch_{mut}}$ as defined in the axiom~\ref{axiom-ns3} is upper bounded by 
\begin{eqnarray}\label{eqn-ns3-1}
|\mathbf{\Lambda_{min}}|*\cO(2^n)*|\Upsilon|* t_d &\leq& |E_{\Omega}|*\cO(2^n)*|\Upsilon|* t_d \nonumber \\ 
&=& \cO(r2^n)*\cO(2^n)*|\Upsilon|* t_d \nonumber \\ 
&=& \cO(t_dr|\Upsilon|2^{2n})
\end{eqnarray}
Next, in order to determine the size of the largest subset $\mathbf{Var\_Ch_{mut}}$ of $\mathbf{Ch_{mut}}$, consisting of only those entities which are unique w.r.t. at least one characteristic, an intermediate relation $Ch_{\mathit{diff}} \subseteq \mathbf{Ch_{mut}} \times \mathbf{Ch_{mut}}$ is constructed such that $$(c, c') \in Ch_{\mathit{diff}} \Leftrightarrow \exists \mathit{char}_i \in \Upsilon \ \mbox{s.t.}\  0_{\mathit{diff}_i} \prec D(c, c')[i]$$ $Ch_{\mathit{diff}}$ essentially consists of those entity pairs, which differ with each other at least in one characteristic. Now, by definition, the set $\mathbf{Var\_Ch_{mut}}$ corresponds the maximum clique in the graph of $Ch_{\mathit{diff}}$. However, the problem of fining the maximum clique (MAXCLIQUE) in a graph is known to be a NP complete problem~\cite{maxclique99} and therefore an exact algorithm would currently require $\cO(2^{\epsilon a})$ steps where $a$ is the number of nodes in the graph and $\epsilon $ is some constant $> 0$. However, if we do not demand the exact solution, there exist polynomial time approximation algorithms, which achieve the approximation ratio of $a^{1 - o(n)}$. Using one such algorithm presented in~\cite{maxclique96}, the set $\mathbf{Var\_Ch_{mut}}$ can be constructed (with the approximation of $2^{(\log{r} + n)(1-o(n))}$ in the size of the estimated subset) requiring time steps upper bounded by $\cO(a^4) = \cO(r^42^{4n})$. 

Having constructed the set $\mathbf{Var\_Ch_{mut}}$, determining that $\displaystyle \lim_{r \rightarrow \infty}\frac{|\mathbf{Var\_Ch_{mut}}|}{|\mathbf{\Lambda_{min}}|} \leadsto 1$ would take only constant number of steps. 

Therefore given the sets $\p$ and $\mathbf{\Lambda_{min}}$ from the earlier steps, establishing the axiom of heritable variation would take additional time steps upper bounded by  
\begin{eqnarray}\label{eqn-ns3-2}
\cO(t_dr|\Upsilon|2^{2n}) + \cO(r^42^{4n}) = \cO(r2^{2n}\max\{t_d|\Upsilon|, r^32^{2n}\})
\end{eqnarray}
\end{proof}
%%%%%%%%%%%%%%%%%%%%%%%%%%%%%%%%%%%%%%%%%%%%%%%%%%%%%%%%%%%%%%%%%%%%%%%%%%%%%%%%%%%%%%%%%%%%%%%%%%%%%%%%%%%%%%%%%%

\begin{lemma}\label{complexity-ns4} 
Given the set $\mathbf{Var\_Ch_{mut}}$, and the function $\mathbf{ror}$ from earlier steps, the axiom~\ref{axiom-ns4} of correlation would require additional time steps upper bounded by $\cO(r|\Upsilon|2^{n})$. 
\end{lemma}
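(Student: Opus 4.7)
The plan is to bound the work needed to compute the Pearson coefficient $r_i$ for every characteristic $\mathit{char}_i \in \Upsilon$ and then check whether at least one of them is positive. First, I would observe that, by construction, $\mathbf{Var\_Ch_{mut}} \subseteq \mathbf{Ch_{mut}} \subseteq \mathbf{\Lambda_{min}} \subseteq E_{\Omega}$, and as used repeatedly in the preceding lemmas $|E_{\Omega}| \leq |\Omega|\cdot 2^n = \cO(r 2^n)$. This gives the key cardinality estimate $|\mathbf{Var\_Ch_{mut}}| = \cO(r 2^n)$ which controls every subsequent sum.

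Next I would compute the common quantity $\mu_{ror}$ once, by a single linear pass over $\mathbf{Var\_Ch_{mut}}$ using the precomputed $ror$ function; this costs $\cO(|\mathbf{Var\_Ch_{mut}}|) = \cO(r 2^n)$ steps, and the values $ror(c) - \mu_{ror}$ together with $\sum_c (ror(c) - \mu_{ror})^2$ can be cached in the same pass. Then for each $\mathit{char}_i \in \Upsilon$ I would (i) compute $\mu_i$ in $\cO(r 2^n)$ steps, (ii) compute the numerator $\sum_c (c[i] - \mu_i)(ror(c) - \mu_{ror})$ in another $\cO(r 2^n)$ steps, and (iii) complete the denominator with $\sum_c (c[i] - \mu_i)^2$ in $\cO(r 2^n)$ steps. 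The division and the comparison $r_i > 0$ add only $\cO(1)$ per characteristic.

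Summing over the $|\Upsilon|$ characteristics gives a total cost of
\begin{equation*}
\cO(r 2^n) + |\Upsilon|\cdot \cO(r 2^n) + \cO(|\Upsilon|) \;=\; \cO(r |\Upsilon| 2^n),
\end{equation*}
which matches the claimed bound. Finally, the existential check $\exists\, \mathit{char}_i \in \Upsilon . r_i > 0$ is a simple scan through the $|\Upsilon|$ precomputed values and is absorbed in the bound.

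I do not expect a genuine obstacle here: since $\mathbf{Var\_Ch_{mut}}$ and $ror$ are given as inputs, the lemma reduces to a straightforward accounting of linear passes over a set of size $\cO(r 2^n)$, repeated $|\Upsilon|$ times. The only point requiring a little care is to avoid re-traversing $\mathbf{Var\_Ch_{mut}}$ to recompute $\mu_{ror}$ and the $ror$-side deviations inside the per-characteristic loop, which would inflate the bound by a $|\Upsilon|$ factor on terms independent of $i$; factoring those computations out of the loop (or equivalently invoking the recursive mean/variance scheme used in Lemma~\ref{complexity-ns2}) keeps the bound tight at $\cO(r |\Upsilon| 2^n)$.
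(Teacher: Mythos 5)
Your proposal is correct and follows essentially the same argument as the paper: both bound $|\mathbf{Var\_Ch_{mut}}|$ by $\cO(r2^n)$, charge a constant number of linear passes per characteristic for the means and the Pearson coefficient under constant-time arithmetic, and multiply by $|\Upsilon|$ to get $\cO(r|\Upsilon|2^n)$. Your factoring of the $\mu_{ror}$ computation out of the per-characteristic loop is a minor tidiness improvement that does not change the bound or the substance of the argument.
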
 
\begin{proof}
Given the set $\mathbf{Var\_Ch_{mut}}$, and the function $\mathbf{ror}$ from the earlier steps, the axiom of correlation only requires computing various parameters: $\mu_{ror}, \mu_{i}$, and $r_{i}$ for each characteristic $\mathit{char}_i \in \Upsilon$. Assuming that numerical addition and division of two numbers take only constant number of time steps, the computation of $\mu_{ror}$ and $\mu_{i}$ is bounded by the size of the set $\mathbf{Var\_Ch_{mut}}$, which could at most be $\cO(r2^n)$. Having computed $\mu_{ror}$ and $\mu_{i}$, the computation of $r_i$ would take additional time steps bounded by the size of the set $\mathbf{Var\_Ch_{mut}}$ assuming that multiplication and square-root operations also take constant number of time steps. Therefore establishing the fact that $\exists char_i \in \Upsilon$ s.t. $r_i > 0$, would take time steps upper bounded by
\begin{eqnarray}\label{eqn-ns4-1}
|\Upsilon| * \left[\cO(r2^{n}) + \cO(r2^n)\right] = \cO(r|\Upsilon|2^{n})
\end{eqnarray}
\end{proof}
%%%%%%%%%%%%%%%%%%%%%%%%%%%%%%%%%%%%%%%%%%%%%%%%%%%%%%%%%%%%%%%%%%%%%%%%%%%%%%%%%%%%%%%%%%%%%%%%%%%%%%%%%%%%%%%%%%%%%
\begin{theorem}\label{complexity-entity-ns}
Given the sets of recognized entities in each state and the relations $\RR$ and $\p$, additional time steps required for establishing natural selection in an AES are upper bounded by $$\cO\left(r2^{2n}\max\left\{t_=r^22^n, t_d|\Upsilon|, r^32^{2n}\right\}\right)$$
\end{theorem}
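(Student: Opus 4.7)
\begin{proofn}[Proof Proposal]
My plan is to treat this theorem as an assembly result: since establishing natural selection requires satisfying the four substantive axioms~\ref{axiom-ns1}, \ref{axiom-ns2}, \ref{axiom-ns3}, and \ref{axiom-ns4} in sequence (each consuming the data structures produced by the previous one), the total time complexity is just the sum of the four already-established bounds from Lemmas~\ref{complexity-ns1}--\ref{complexity-ns4}. The work is essentially a bookkeeping calculation in which the asymptotic-addition rule $\cO(g_1(n)) + \cO(g_2(n)) = \cO(\max\{g_1(n), g_2(n)\})$ is applied and the result is refactored into the form stated in the theorem.

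Concretely, I would first enumerate the four summands: (i) Lemma~\ref{complexity-ns1} contributes $\cO(t_=r^32^{3n})$ for building $\mathbf{\Lambda_{min}}$; (ii) Lemma~\ref{complexity-ns2} contributes $\cO(r2^n\max\{r2^n, |\Upsilon|\})$ for the sorting axiom; (iii) Lemma~\ref{complexity-ns3} contributes $\cO(r2^{2n}\max\{t_d|\Upsilon|, r^32^{2n}\})$ for heritable variation, which is the dominant source of the $r^42^{4n}$ term coming from the polynomial-time MAXCLIQUE approximation; (iv) Lemma~\ref{complexity-ns4} contributes $\cO(r|\Upsilon|2^n)$ for the correlation axiom. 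Taking the max of all these terms, the contributions from (ii) and (iv), namely $r^22^{2n}$ and $r|\Upsilon|2^n$, are absorbed respectively by $r^42^{4n}$ from (iii) and $t_dr|\Upsilon|2^{2n}$ from (iii). The three surviving terms are $t_=r^32^{3n}$, $t_dr|\Upsilon|2^{2n}$, and $r^42^{4n}$.

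Finally, I would factor $r2^{2n}$ out of each surviving term to obtain
\[
\cO\!\left(r2^{2n}\max\left\{t_=r^22^n,\; t_d|\Upsilon|,\; r^32^{2n}\right\}\right),
\]
which is exactly the stated bound. To tie this back to the framework, I would also note briefly that the bound is an upper bound on the \emph{additional} steps beyond constructing $\RR$ and $\p$ (which are treated as inputs), and remark that the weaker preservation-of-reproduction axiom~\ref{axiom:continuation} is automatically subsumed, since it is implied by the axioms already established and its explicit check can be done in time dominated by the terms above.

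The only delicate step is the domination check that eliminates $r^22^{2n}$ and $r|\Upsilon|2^n$; this is straightforward under the implicit convention that $r, n, |\Upsilon| \geq 1$, but one must be explicit to avoid sweeping $|\Upsilon|$ under the rug (it does appear in the final bound through the $t_d|\Upsilon|$ term). No new algorithmic ideas or reductions are needed; everything reduces to reusing the earlier lemmas and collecting terms.
\end{proofn}
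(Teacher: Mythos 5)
Your proposal is correct and follows essentially the same route as the paper: the paper's proof likewise just sums the four bounds from Lemmas~\ref{complexity-ns1}--\ref{complexity-ns4}, applies the asymptotic max rule, and factors out $r2^{2n}$ to obtain the stated bound. Your explicit domination check and the remark that Axiom~\ref{axiom:continuation} is subsumed are harmless additions but not needed beyond what the paper does.
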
 
\begin{proof}
Establishing the case of natural selection would require an observation process to establish all the four axioms (\ref{axiom-ns1}, \ref{axiom-ns2}, \ref{axiom-ns3}, and  \ref{axiom-ns4}) specified for it. Using the upper bounds established above for these axioms in the equations ~\ref{ns1-eq1}, \ref{eqn-ns2}, \ref{eqn-ns3-2}, and  \ref{eqn-ns4-1}, the time steps required in the worst case are 
\begin{eqnarray}\label{eqn_ns} 
&& \cO\left(t_=r^32^{3n}\right) + \cO(r2^n\max\{r2^n, |\Upsilon|\}) + \nonumber \\
&& \cO(r2^{2n}\max\{t_d|\Upsilon|, r^32^{2n}\}) + \cO(r|\Upsilon|2^{n}) \nonumber \\
&=& \cO\left(r2^{2n}\max\left\{t_=r^22^n, t_d|\Upsilon|, r^32^{2n}\right\}\right)
\end{eqnarray}
\end{proof}

Given the estimates for the upper bounds on the time steps required for constructing the entity sets $E_\Omega$, $\RR$, and $\p$, the bound for the overall computational complexity of observing natural selection can be estimated: 

\begin{cor}\label{complexity-entity-ns-total}
Overall worst case computational complexity of establishing natural selection in an AES is upper bounded by $$\cO\left(r2^{n}\max\left\{t_{\delta_{mut}}, t_{c}2^{n}, t_{\Delta}2^{n}, t_{=}r^32^{3n+1}, t_d|\Upsilon|2^n\right\}\right)$$
\end{cor}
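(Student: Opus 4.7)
The plan is to decompose the total cost into two contributions: (i) the preprocessing needed to construct the entity sets together with the relations $\RR$ and $\p$ that Theorem~\ref{complexity-entity-ns} assumes as already given, and (ii) the cost of establishing the four axioms of natural selection itself, which was already bounded in that theorem. Adding the two contributions and simplifying the resulting expression via the sum and product rules for $\cO(\cdot)$ should yield the claimed bound.

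For the preprocessing step I would first invoke Theorem~\ref{complexity-entity} to bound the construction of $E_s$ over all $r$ states of the subsequence $\Omega$ by $\cO(r2^n)$. The relation $\R$ can be built incrementally as the states are traversed, at most $\min\{|E_{s-1}|,|E_s|\}\leq 2^n$ new pairs per state at cost $t_{\delta_{mut}}$ each, and its transitive closure $\RR$ is then obtained by a breadth-first digraph traversal exactly as in the proof of Theorem~\ref{complexity-en-epi_rep}. The cost of constructing $\p$ has already been bounded in Corollary~\ref{complexity:parent-delta-min} by $\cO\!\left(r2^n\max\!\left\{t_{\delta_{mut}},t_{c}2^{n},t_{\Delta}2^{n},t_{=}r^{3}2^{3n+1}\right\}\right)$, and one easily verifies that this bound dominates both the entity-recognition step and the construction of $\RR$, so it suffices as the single preprocessing figure.

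For the natural-selection contribution I take the bound from Theorem~\ref{complexity-entity-ns} and refactor it as $\cO\!\left(r2^n\max\!\left\{t_{=}r^{2}2^{2n},\;t_{d}|\Upsilon|2^n,\;r^{3}2^{3n}\right\}\right)$ by pulling a factor of $2^n$ out of the inner product so that it shares the same leading $r2^n$ with the preprocessing bound. Summing the two bounds under a single $\max$ produces $\cO\!\left(r2^n\max\!\left\{t_{\delta_{mut}},t_{c}2^{n},t_{\Delta}2^{n},t_{=}r^{3}2^{3n+1},t_{=}r^{2}2^{2n},t_{d}|\Upsilon|2^n,r^{3}2^{3n}\right\}\right)$. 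The term $t_{=}r^{2}2^{2n}$ is absorbed by $t_{=}r^{3}2^{3n+1}$ and, under the natural assumption $t_{=}\geq 1$, so is the bare $r^{3}2^{3n}$ term, leaving precisely the expression in the statement.

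The main difficulty is not conceptual but bookkeeping: one has to track the exponents in $r$ and $2^n$ along with the independent factors $t_{\delta_{mut}},t_c,t_\Delta,t_=,t_d$ through the refactoring, and explicitly verify the dominance relations that allow the two redundant terms to be discarded. Once the dominance ordering is written out, the corollary follows as a mechanical application of the asymptotic arithmetic of $\cO(\cdot)$.
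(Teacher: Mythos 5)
Your proposal is correct and follows essentially the route the paper intends: the corollary is stated without an explicit proof, but the implied derivation is exactly your decomposition, namely adding the cost of constructing the entity sets, $\RR$, and $\p$ (dominated by the bound of Corollary~\ref{complexity:parent-delta-min}) to the bound of Theorem~\ref{complexity-entity-ns} and absorbing the redundant terms $t_{=}r^{2}2^{2n}$ and $r^{3}2^{3n}$ into $t_{=}r^{3}2^{3n+1}$ under the (implicit, harmless) assumption $t_{=}\geq 1$. Your extra bookkeeping checks on the entity-recognition and $\RR$-closure costs are consistent with the paper's estimates and do not change the result.
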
 

%%%%%%%%%%%%%%%%%%%%%%%%%%%%%%%%%%%%%%%%%%%%%%%%%%%%%%%%%%%%%%%%%%%%%%%%%%%%%%%%%%%%%%%%%%%%%%%%%%%%%%%%%%%%%%%%%%%%
The case where entities do not have overlapping structures, we have the following corresponding bounds:   

\begin{cor}\label{complexity-ns1-nooverlap}
Given the sets of recognized entities in each state and the relations $\RR$ and $\p$ from the earlier steps, additional time steps required for constructing the set $\mathbf{\Lambda_{min}}$ as defined in the axiom~\ref{axiom-ns1} where entities do not have overlapping structures is $\cO\left(r n\max\left\{t_{\delta_{mut}}, t_{c}{n}, t_{\Delta}n, t_{=}r^3n^{3}\right\}\right)$.
\end{cor}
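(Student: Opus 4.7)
The plan is to run exactly the same argument as in Lemma~\ref{complexity-ns1}, but with the entity-set bound $|E_s| \leq n$ replacing $|E_s| \leq 2^n$. This substitution is justified because the non-overlap assumption means $\sum_{e \in E_s} |e| \leq |s| = n$, so at most $n$ entities can coexist in any state. All the ingredients used in the original proof --- the relations $\R$ and $\p$, the per-state sets $SR(s_j)$, their union $\Lambda$, and the mutation-chain deletion pass that yields $\mathbf{\Lambda_{min}}$ --- carry over verbatim; only the size bounds on intermediate objects change.

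First I would rederive $c_\Omega$. By Corollary~\ref{complexity:parent-delta-min-nooverlap} we already have $|\p| = \cO(r^2 n^2)$, the direct no-overlap analogue of $\cO(r^2 2^{2n})$. The per-state cost of constructing $SR(s_j)$ is therefore $\Pi_{s_j} = |E_{s_j}| \cdot |\p| \cdot t_= = \cO(n \cdot r^2 n^2 \cdot t_=) = \cO(t_= r^2 n^3)$, and summing across the $r$ states of $\Omega$ gives $c_\Omega = \cO(t_= r^3 n^3)$. Forming the union $\Lambda = \bigcup_{s_j \in \Omega} SR(s_j)$ adds only lower-order cost.

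Next I would rederive $id_\Omega$. For each $p \in SR(s_j)$, tracing the mutation chain along $\R$ costs $\cO(r)$, and deleting each of the at most $r-j \leq r$ mutants from $\Lambda$ costs $\cO(|\Lambda| \cdot t_=)$; applying the same loose bound $|\Lambda| = \cO(r^2 n^2)$ used in the overlap version yields $id_p = \cO(t_= r^3 n^2)$, hence $id_{s_j} = |SR(s_j)| \cdot id_p = \cO(t_= r^3 n^3)$ and $id_\Omega = \cO(t_= r^4 n^3)$. Adding the two contributions, the additional cost of producing $\mathbf{\Lambda_{min}}$ is dominated by $\cO(t_= r^4 n^3)$, which sits well inside the stated bound $\cO(rn \max\{t_{\delta_{mut}}, t_c n, t_\Delta n, t_= r^3 n^3\})$ because the outer factor $rn$ multiplied with the last term of the max already gives $t_= r^4 n^4 \geq t_= r^4 n^3$.

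The only place where one needs to exercise care --- the ``hard part'' --- is verifying that the non-overlap hypothesis propagates correctly through every intermediate object inherited from earlier corollaries, so that $|\p|$, $|\Lambda|$, and $|\mathbf{\Lambda_{min}}|$ are all bounded by polynomials in $n$ rather than by $2^n$. Beyond that, the proof is a mechanical translation of Lemma~\ref{complexity-ns1}, with the substitution $2^n \mapsto n$ carried through every inequality.
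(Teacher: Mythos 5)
Your proof is correct and follows essentially the same route the paper intends for this corollary, which is stated without proof precisely because it is obtained by rerunning the argument of Lemma~\ref{complexity-ns1} with the non-overlap entity bound $|E_s|\le n$ (hence $|\Lambda|,|\p|=\cO(r^2n^2)$) in place of $2^n$. Your extra check that the resulting cost $\cO(t_=r^4n^3)$ still fits inside the corollary's looser stated bound $\cO\left(rn\max\left\{t_{\delta_{mut}},t_c n,t_\Delta n,t_=r^3n^3\right\}\right)$ is the right thing to verify, since the stated bound is not the tightest direct analogue of the lemma's.
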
 

\begin{cor}\label{complexity-ns2-nooverlap}
Given the sets $\RR$, $\p$, and $\mathbf{\Lambda_{min}}$ from the earlier steps, additional time steps required for establishing the axiom of sorting~\ref{axiom-ns2} where entities do not have overlapping structures are upper bounded by $\cO(r n\max\{r n, |\Upsilon|\})$. 
\end{cor}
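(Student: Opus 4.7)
The plan is to mirror the proof of Lemma~\ref{complexity-ns2} step by step, replacing the generic entity-set bound $|E_s| \leq 2^n$ used there by the non-overlap bound $|E_s| \leq n$, which follows because in the absence of overlapping structures the number of distinct entities in a state is limited by the number of atomic structures. Every occurrence of $2^n$ in the original estimate arises either from $|E_{s_j}|$ (enumerating candidate parents in $SR(s_j)$) or from the aggregate $|E_\Omega| \leq r \cdot 2^n$; neither argument exploits any structural feature beyond these cardinalities, so the substitution is mechanical.

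First I would recompute the relevant cardinalities. Since $\mathbf{\Lambda_{min}} \subseteq E_\Omega$, we immediately obtain $|\mathbf{\Lambda_{min}}| \leq rn$. Next, for each $p \in \mathbf{\Lambda_{min}}$, constructing $Child_p$ entails traversing the mutation sequence $(p, p_1), (p, p_2), \ldots$ of length at most $r$ in $\RR$ and, for each entry, aggregating children via $\p$. Because each state contributes at most $n$ children under the non-overlap assumption, the worst-case group-reproduction scenario now costs $\cO(rn)$ per parent rather than $\cO(r 2^n)$. Aggregating over $\mathbf{\Lambda_{min}}$ yields $|\mathbf{\Lambda_{min}}| \cdot \cO(rn) = \cO(r^2 n^2)$ steps for computing the rate-of-reproduction function.

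For the variances ${\sigma}^2_i$ and ${\sigma}^2_{ror}$, I would reuse the one-pass recursive update of sample mean and variance employed in Lemma~\ref{complexity-ns2}; this step is purely algebraic and insensitive to whether entities overlap. It contributes $\cO(|\Upsilon|)$ per datum for the characteristic variances and $\cO(1)$ for the $ror$ variance. Combining the costs analogously to the original lemma gives
\begin{equation*}
|\mathbf{\Lambda_{min}}| \cdot \left[\cO(rn) + \cO(|\Upsilon|) + \cO(1)\right] \;=\; \cO(rn) \cdot \cO(\max\{rn, |\Upsilon|\}) \;=\; \cO\bigl(rn \max\{rn, |\Upsilon|\}\bigr),
\end{equation*}
which is precisely the claimed upper bound.

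I do not anticipate a substantive obstacle here, since the result is a direct specialization of Lemma~\ref{complexity-ns2}. The only point requiring care is to check that the substitution $2^n \mapsto n$ is applied uniformly to every cardinality-bound used in the inherited cost analysis (in particular to $|SR(s_j)|$, $|\mathbf{\Lambda}|$, and the per-state child count), and to confirm that the recursive mean/variance update remains valid in the non-overlap setting --- both verifications are routine.
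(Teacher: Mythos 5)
Your proposal is correct and matches the paper's intent: the corollary is stated as a direct specialization of Lemma~\ref{complexity-ns2}, obtained exactly as you do by replacing the per-state entity bound $2^n$ with $n$ throughout the cost analysis (for $|\mathbf{\Lambda_{min}}|$, the $Child_p$ construction, and the final product), the variance-update step being unaffected. No further argument is needed.
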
 

\begin{cor}\label{complexity-ns3-nooverlap}
Given the sets $\p$ and $\mathbf{\Lambda_{min}}$ from the earlier steps, additional time steps required for establishing the axiom~\ref{axiom-ns3} of Heritable Variation is upper bounded by $\cO(r n^{2}\max\{|\Upsilon|, r^3n^{2}\})$. 
\end{cor}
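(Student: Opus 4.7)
\begin{proofn}[Proof Plan]
The plan is to mirror the proof of Lemma~\ref{complexity-ns3} step-for-step, but replace the worst-case entity-set bound $|E_s| \leq 2^n$ by the much tighter $|E_s| \leq n$, which holds whenever entities have non-overlapping structures (as established in Theorem~\ref{complexity-entity-nooverlap}). Every place where $2^n$ appears as an entity-count overestimate in the overlapping case becomes $n$ in the non-overlapping case, while terms counting states (the factor $r$) and characteristics ($|\Upsilon|$) are untouched.

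First I would redo the construction of $\mathbf{Ch_{mut}}$. Using $|\mathbf{\Lambda_{min}}| \leq |E_{\Omega}| \leq rn$ in place of $r 2^n$, and $\cO(n)$ in place of $\cO(2^n)$ for the per-parent search in $\p$, the cost becomes $|\mathbf{\Lambda_{min}}| \cdot \cO(n) \cdot |\Upsilon| \cdot t_d \leq \cO(t_d\, r|\Upsilon|\, n^{2})$, which is the analogue of equation~(\ref{eqn-ns3-1}) in the overlapping proof.

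Next I would handle the construction of $\mathbf{Var\_Ch_{mut}}$ exactly as before, via the auxiliary graph $Ch_{\mathit{diff}}$ on $\mathbf{Ch_{mut}}$ and an invocation of the polynomial-time MAXCLIQUE approximation of~\cite{maxclique96}. The only change is that the node count is now $a \leq |\mathbf{Ch_{mut}}| \leq |\mathbf{\Lambda_{min}}| \leq rn$ rather than $r 2^n$, so the approximation cost is $\cO(a^{4}) = \cO(r^{4} n^{4})$. The final limit check $\lim_{r \to \infty} |\mathbf{Var\_Ch_{mut}}|/|\mathbf{\Lambda_{min}}| \leadsto 1$ is again $\cO(1)$. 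Adding the two dominant contributions and absorbing $t_d$ into the asymptotic notation (as the corollary statement does, consistent with the other non-overlapping corollaries), I get
\[
\cO(t_d\, r|\Upsilon|\, n^{2}) + \cO(r^{4} n^{4}) \;=\; \cO\!\left(r n^{2}\max\{|\Upsilon|,\, r^{3} n^{2}\}\right),
\]
which is the claimed bound.

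The only non-routine issue is checking that the asymptotic absorption is justified: the claim drops the $t_d$ factor that appears explicitly in Lemma~\ref{complexity-ns3}, and the approximation ratio of the MAXCLIQUE algorithm (which affects the \emph{quality} of $\mathbf{Var\_Ch_{mut}}$, not its \emph{running time}) should be re-examined with the new node count $a \leq rn$. Neither is a genuine obstacle, but both deserve an explicit remark so that the corollary is seen as a clean specialization of Lemma~\ref{complexity-ns3} to the non-overlapping regime rather than a restatement with hidden changes in convention.
\end{proofn}
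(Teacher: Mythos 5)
Your proposal is correct and follows exactly the route the paper intends: the corollary is obtained from Lemma~\ref{complexity-ns3} by substituting the non-overlapping entity bound $|E_s| \leq n$ for $2^n$ throughout, giving $\cO(t_d r|\Upsilon|n^2) + \cO(r^4n^4)$ and hence the stated bound. Your remark about the dropped $t_d$ factor is well taken --- the corollary's statement indeed omits it relative to the lemma (and relative to Corollary~\ref{complexity-entity-ns-nooverlap}, which retains $t_d$), so flagging it as a convention issue rather than a mathematical gap is the right call.
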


\begin{cor}\label{complexity-ns4-nooverlap} 
Given the set $\mathbf{Var\_Ch_{mut}}$, and the function $\mathbf{ror}$ from earlier steps, the axiom~\ref{axiom-ns4} of correlation would require additional time steps upper bounded by $\cO(r|\Upsilon|{n})$. 
\end{cor}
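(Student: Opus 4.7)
The plan is to mirror the argument of Lemma~\ref{complexity-ns4}, substituting the tighter population bound available in the non-overlapping setting. The key observation, already used repeatedly in the corollaries to Theorems~\ref{complexity-entity} and~\ref{complexity-entity-rep}, is that when entities in a state $s$ are required to have non-overlapping structures, each atomic element of $s$ belongs to at most one entity, so $|E_s| \le n$ rather than $|E_s| \le 2^n$. Consequently $|E_\Omega| \le r n$, and the same containment chain $\mathbf{Var\_Ch_{mut}} \subseteq \mathbf{Ch_{mut}} \subseteq \mathbf{\Lambda_{min}} \subseteq E_\Omega$ now yields $|\mathbf{Var\_Ch_{mut}}| = \cO(rn)$ in place of $\cO(r 2^n)$.

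Given this, I would first recall that the axiom of correlation only needs the quantities $\mu_{ror}$, $\mu_i$, and $r_i$ to be computed for each $\mathit{char}_i \in \Upsilon$, and that under the paper's cost assumptions for arithmetic each such quantity can be evaluated by a single pass over $\mathbf{Var\_Ch_{mut}}$. Therefore, for a fixed characteristic $\mathit{char}_i$, the total cost of computing $\mu_i$, $\mu_{ror}$, and then $r_i$ is $\cO(|\mathbf{Var\_Ch_{mut}}|) = \cO(rn)$, exactly as in Lemma~\ref{complexity-ns4} but with the sharper population bound.

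Summing over all characteristics gives
\[
|\Upsilon| \cdot \bigl[\cO(rn) + \cO(rn)\bigr] \;=\; \cO\bigl(r|\Upsilon|n\bigr),
\]
and the check $\exists\, \mathit{char}_i \in \Upsilon.\, r_i > 0$ only scans the $|\Upsilon|$ computed values, contributing an additive $\cO(|\Upsilon|)$ that is absorbed into the bound. This matches the stated $\cO(r|\Upsilon|n)$.

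There is no real obstacle here beyond being careful about what is assumed ``given'' — since $\mathbf{Var\_Ch_{mut}}$ and $\mathbf{ror}$ are provided as inputs (as in Lemma~\ref{complexity-ns4}), the work reduces to the arithmetic tally above, and the non-overlap hypothesis enters purely through the substitution $2^n \mapsto n$ in the size bound for $\mathbf{Var\_Ch_{mut}}$. The one point worth double-checking is that the recursive running updates for mean and variance used in Lemma~\ref{complexity-ns2} are still valid in this regime (they are, since they are independent of overlap structure), so no auxiliary data structures inflate the cost.
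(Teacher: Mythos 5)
Your proposal is correct and follows essentially the same route as the paper: the corollary is obtained from Lemma~\ref{complexity-ns4} simply by replacing the per-state entity bound $|E_s| \le 2^n$ with $|E_s| \le n$ in the non-overlapping case, so that $|\mathbf{Var\_Ch_{mut}}| = \cO(rn)$ and the single-pass computation of $\mu_i$, $\mu_{ror}$, and $r_i$ per characteristic gives $|\Upsilon|\cdot\cO(rn) = \cO(r|\Upsilon|n)$. Your remark about the recursive mean/variance updates from Lemma~\ref{complexity-ns2} is harmless but unnecessary, since Lemma~\ref{complexity-ns4} only requires direct one-pass summations over $\mathbf{Var\_Ch_{mut}}$.
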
 

Finally,

\begin{cor}\label{complexity-entity-ns-nooverlap}
Given the sets of recognized entities in each state and the relations $\RR$ and $\p$, additional time steps required for establishing natural selection in an AES where entities do not have overlapping structures are upper bounded by $$\cO\left(rn^{2}\max\left\{t_=r^2n, t_d|\Upsilon|, r^3n^{2}\right\}\right)$$
\end{cor}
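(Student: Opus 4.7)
The plan is to mirror the argument in Theorem \ref{complexity-entity-ns}, which established the analogous bound for the general (possibly overlapping) case by aggregating the per-axiom costs. Here the four relevant per-axiom bounds have already been specialized to the non-overlapping setting in Corollaries \ref{complexity-ns1-nooverlap} through \ref{complexity-ns4-nooverlap}; what remains is to add them and collapse the resulting expression.

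First, I would observe that in the no-overlap setting $|E_s|\leq n$, and that Corollaries \ref{complexity-ns1-nooverlap}--\ref{complexity-ns4-nooverlap} already make use of this to replace every factor $2^n$ from the proofs of Lemmas \ref{complexity-ns1}--\ref{complexity-ns4} by the corresponding factor $n$. Then, starting from the relations $\RR$ and $\p$ assumed as inputs, establishing natural selection requires validating all four axioms in sequence, so the total cost is the sum of the four corollary bounds:
\begin{equation*}
\cO\bigl(rn\max\{t_{\delta_{mut}},\, t_c n,\, t_\Delta n,\, t_= r^3 n^3\}\bigr)
\,+\, \cO\bigl(rn\max\{rn,|\Upsilon|\}\bigr)
\,+\, \cO\bigl(rn^2\max\{|\Upsilon|,\, r^3 n^2\}\bigr)
\,+\, \cO(r|\Upsilon| n).
\end{equation*}

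Next I would apply the asymptotic identity $\cO(f_1)+\cO(f_2)=\cO(\max\{f_1,f_2\})$ term-by-term and then simplify. After expansion the surviving candidates are $t_= r^4 n^4$ (from the first summand), $r^2 n^2$ (from the second), $rn^2|\Upsilon|$ and $r^4 n^4$ (from the third), and $r|\Upsilon| n$ (from the fourth). Dominated terms such as $rn\cdot t_{\delta_{mut}}$, $t_c rn^2$, $t_\Delta rn^2$, $r^2n^2$, and $r|\Upsilon| n$ are absorbed (under the standing assumption that $t_{\delta_{mut}}, t_c, t_\Delta$ are of the same order as $t_=$, as implicitly used throughout this section). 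Factoring out $rn^2$ from what remains yields the stated bound $\cO\bigl(rn^2 \max\{t_= r^2 n,\, t_d |\Upsilon|,\, r^3 n^2\}\bigr)$.

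The main obstacle will be the bookkeeping of absorbed versus surviving terms: several sub-terms coming from the $\R$, $C$, and $\Delta$ construction costs are inherited into Corollary \ref{complexity-ns1-nooverlap}, and one must verify that each is dominated by either $t_= r^3 n^3$ inside the same max or by $r^4 n^4$ coming from the heritable-variation step (which in turn comes from the polynomial-time MAXCLIQUE approximation used in the proof of Lemma \ref{complexity-ns3}). Once that is done, the factorization with leading coefficient $rn^2$ is the unique way to express the bound so that all three surviving max-arguments have matching shape with the overlapping-case theorem.
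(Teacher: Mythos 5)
There is a genuine gap in the final collapse step. Taking the printed Corollary~\ref{complexity-ns1-nooverlap} at face value, its dominant term is $rn\cdot t_{=}r^3n^3 = t_{=}r^4n^4$ (and it also carries $t_{\delta_{mut}}$, $t_c n$, $t_\Delta n$ terms). You list $t_{=}r^4n^4$ among your ``surviving candidates'' and then claim that factoring out $rn^2$ yields the stated maximum; but $t_{=}r^4n^4 = rn^2\cdot t_{=}r^3n^2$, not $rn^2\cdot t_{=}r^2n$, so it is \emph{not} contained in $\cO\left(rn^{2}\max\left\{t_=r^2n,\, t_d|\Upsilon|,\, r^3n^{2}\right\}\right)$ unless $t_{=}=\cO(1)$, an assumption neither stated nor justified. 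Likewise, the ``standing assumption'' that $t_{\delta_{mut}}, t_c, t_\Delta$ are of the same order as $t_{=}$ is not made anywhere in the paper --- these are carried as independent parameters throughout --- and in any case those terms should not enter here at all: the corollary assumes $\RR$ and $\p$ are already given, so the construction costs of $\R$, $C$, $\Delta$ that Corollary~\ref{complexity-ns1-nooverlap} appears to bundle in are not part of the ``additional'' cost being bounded.

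The intended route is the direct non-overlap specialization of the four per-axiom lemmas feeding Theorem~\ref{complexity-entity-ns}, i.e.\ rerun their proofs with $|E_s|\leq n$ in place of $2^n$ (so $|\p|\leq r^2n^2$, $|\mathbf{\Lambda_{min}}|=\cO(rn)$, etc.). This gives, for constructing $\mathbf{\Lambda_{min}}$ from the given $\RR$ and $\p$, the analogue of Lemma~\ref{complexity-ns1}, namely $\cO(t_{=}r^3n^3)$, and for heritable variation the analogue of Lemma~\ref{complexity-ns3}, namely $\cO\left(rn^2\max\{t_d|\Upsilon|,\, r^3n^2\}\right)$ (note the printed Corollary~\ref{complexity-ns3-nooverlap} drops the $t_d$ factor; you need the $t_d$-carrying form to recover the $t_d|\Upsilon|$ entry of the target). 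Summing
\begin{equation*}
\cO(t_{=}r^3n^3)+\cO\bigl(rn\max\{rn,|\Upsilon|\}\bigr)+\cO\bigl(rn^2\max\{t_d|\Upsilon|,\,r^3n^2\}\bigr)+\cO(r|\Upsilon|n)
\end{equation*}
then collapses cleanly, exactly as in the proof of Theorem~\ref{complexity-entity-ns}, to $\cO\left(rn^{2}\max\left\{t_=r^2n,\, t_d|\Upsilon|,\, r^3n^{2}\right\}\right)$. Your overall plan (sum the per-axiom bounds and absorb) is the right one, but you must feed it the correctly specialized first and third summands rather than the inflated printed corollaries, or the bound you derive will be strictly weaker than the one claimed.
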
 

And,

\begin{cor}\label{complexity-entity-ns-total-nooverlap}
Overall worst case computational complexity of establishing natural selection in an AES where entities do not have overlapping structures is upper bounded by $$\cO\left(r{n}\max\left\{2^n, t_{\delta_{mut}}, t_{c}{n}, t_{\Delta}{n}, t_{=}r^3n^{3}, t_d|\Upsilon|n\right\}\right)$$
\end{cor}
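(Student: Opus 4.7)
The plan is to decompose the overall cost of establishing natural selection into three additive pieces and then apply the asymptotic sum-to-max rule (equation (1)) under the standing assumption $|E_s| \leq n$ that defines the non-overlapping case. First, I would charge the cost of entity recognition across all $r$ states of the observed subsequence $\Omega$: by Theorem \ref{complexity-entity-nooverlap}, this is upper bounded by $\cO(rn2^n)$. Second, since the four axioms \ref{axiom-ns1}--\ref{axiom-ns4} take the relations $\RR$ and $\p$ as prerequisites, I would charge their construction cost; Corollary \ref{complexity:parent-delta-min-nooverlap} gives the bound $\cO\!\left(rn\max\{t_{\delta_{mut}}, t_{c}n, t_{\Delta}n, t_{=}r^3n^3\}\right)$, which simultaneously covers $\RR$ (as a by-product of building $\mathbf{AncestorOf}$) and $\p$. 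Third, I would invoke Corollary \ref{complexity-entity-ns-nooverlap}, which already aggregates the axiom-wise bounds from Corollaries \ref{complexity-ns1-nooverlap}--\ref{complexity-ns4-nooverlap} once $\RR$ and $\p$ are available, yielding the residual cost $\cO\!\left(rn^{2}\max\{t_{=}r^{2}n, t_{d}|\Upsilon|, r^{3}n^{2}\}\right)$.

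Next I would apply $\cO(f) + \cO(g) = \cO(\max\{f,g\})$ to the three summands and factor a common $rn$ out of the max. This leaves, inside the max, the multiset
$\{\,2^n,\ t_{\delta_{mut}},\ t_{c}n,\ t_{\Delta}n,\ t_{=}r^3n^3,\ t_{=}r^{2}n^{2},\ t_{d}|\Upsilon| n,\ r^{3}n^{3}\,\}$.
A quick domination check then removes two entries: for $r,n \geq 1$ the term $t_{=}r^{2}n^{2}$ is dominated by $t_{=}r^{3}n^{3}$, and $r^{3}n^{3}$ is dominated by $t_{=}r^{3}n^{3}$ whenever $t_{=}\geq 1$ (which is always the case for a nontrivial equality test). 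What remains is exactly the claimed bound $\cO\!\left(rn\max\{2^{n}, t_{\delta_{mut}}, t_{c}n, t_{\Delta}n, t_{=}r^{3}n^{3}, t_{d}|\Upsilon| n\}\right)$.

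There is no new algorithmic construction here; the corollary is a direct composition of already-proven ingredients, parallel to how Corollary \ref{complexity-entity-ns-total} composes from Corollary \ref{complexity-entity-ns} plus the entity-recognition and $\p$-construction costs in the overlapping case. The only real care required is bookkeeping: I must make sure (i) that the $2^n$ factor surviving in the first summand comes solely from the entity-recognition stage (Theorem \ref{complexity-entity-nooverlap}), where an intrinsic $2^n$ cost per state is unavoidable even when $|E_s|\leq n$, and (ii) that every subsequent stage genuinely uses the refined bound $|E_s|\leq n$ and so contributes powers of $n$ rather than $2^n$. This second point is exactly what each of the non-overlap corollaries \ref{complexity-ns1-nooverlap}--\ref{complexity-ns4-nooverlap} has already ensured by refining the proofs of Lemmas \ref{complexity-ns1}--\ref{complexity-ns4}.

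The main (minor) obstacle is therefore simply to verify that no hidden $2^n$ factor is reintroduced when combining the axioms, and to present the domination argument that collapses the eight-term max above to the six-term max claimed in the statement. Once these are in hand, the corollary follows immediately from equation (1) and the three stated ingredient bounds.
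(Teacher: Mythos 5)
Your proposal is correct and follows essentially the same route the paper intends for this corollary: compose the non-overlapping entity-recognition bound $\cO(rn2^n)$, the $\mathbf{Parent^{min}_\Delta}$/$\RR$ construction bound of Corollary \ref{complexity:parent-delta-min-nooverlap}, and the residual natural-selection cost of Corollary \ref{complexity-entity-ns-nooverlap}, then apply the sum-to-max rule and discard the dominated terms $t_{=}r^{2}n^{2}$ and $r^{3}n^{3}$ (using $t_{=}\geq 1$). Your bookkeeping, including the observation that the $2^n$ entry survives only because the remaining stages are polynomial in $n$ under $|E_s|\leq n$, matches the paper's derivation.
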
 

\subsection*{Summary of Results}

The following table~\ref{tab:Summary} summarizes various computational complexity bounds established in this section:

\begin{table}[htbp]
	\centering
{%\tiny
\begin{tabular}[b]{||p{3.75cm}|p{5.25cm}|p{5.1cm}||}
\hline
\hline
& \multicolumn{2}{|c|}{\textbf{Upper Bounds on the Worst Case Computational Complexity}} \\ \cline{2-3}
\textbf{Evolutionary Components} & \textbf{Entities with Overlapping Structures} & \textbf{Entities with non-Overlapping Structures} \\ \hline

Entity Recognition & (NP-Hard) $\cO(2^n)$ & $\cO(n2^n)$ \\ \hline

Entity Level Reproduction without Epigenetic Development & $\cO(\eta 2^n \max\{t_c, t_{\Delta}, t_{\delta_{mut}}2^{n}\})^\#$ & $\cO(\eta n\max\{t_c, t_{\Delta}, t_{\delta_{mut}}n\})^\#$ \\ \hline 

Entity Level Reproduction with Epigenetic Development & $\cO(r2^n\max\{t_{\delta_{mut}}, t_{c}2^{n}, t_{\Delta}2^{n}$, $t_{=}r^32^{3n}\})^\#$ & $\cO(rn\max\{t_{\delta_{mut}}, t_{c}n, t_{\Delta}n, t_{=}r^3n^{3}\})^\#$ \\ \hline 

Fecundity without Epigenetic Development & $\cO(L2^{\mathit{2n}}\max\{t_c, t_{\Delta}, t_{\delta_{mut}}, L/{2^{\mathit{2n}}}\})^\#$ & $\cO(L n^{2}\max\{t_c, t_{\Delta}, t_{\delta_{mut}}, L/n^{2}\})^\#$\\ \hline 

Fecundity with Epigenetic Development & $\cO(L\max\{t_{\delta_{mut}}2^n, t_{c}2^{2n}, t_{\Delta}r_{\pi}2^{2n}$, $t_{=}r_{\pi}^42^{4n}, L\})^\#$ & $\cO(L n\max\{t_{\delta_{mut}}, t_{c}n, t_{\Delta}r_{\pi}n$, $t_{=}r_{\pi}^4n^{3}, L\})^\#$ \\ \hline 

Heredity & $\cO(r2^n\max\{t_{\delta_{mut}}, t_{c}2^{n}, t_{\Delta}2^{n}$, $t_{=}r^32^{3n+1}, |\Upsilon|^2t_d2^{n}\})$ & $\cO(r n\max\{t_{\delta_{mut}}, t_{c}{n}, t_{\Delta}{n}$, $t_{=}r^3n^{3}, |\Upsilon|^2t_d{n}\})$ \\ \hline

Natural Selection & $\cO(r2^{n}\max\{t_{\delta_{mut}}, t_{c}2^{n}, t_{\Delta}2^{n}$, $t_{=}r^32^{3n+1}, t_d|\Upsilon|2^n\})$ & $\cO(r{n}\max\{2^n, t_{\delta_{mut}}, t_{c}{n}, t_{\Delta}{n}$, $t_{=}r^3n^{3}, t_d|\Upsilon|n\})$ \\ \hline
\multicolumn{3}{|l|}{{\it $^\#$: Given the sets of recognized entities in the states.}} \\ \hline
\hline
\end{tabular}
}
	\caption{Summary of the worst case computational complexity bounds of observing evolutionary components.}
	\label{tab:Summary}
\end{table}
 
\section{Related Work} \label{chap:related}

Because of the presence of sufficiently many biology-specific criterion and tests to distinguish life from non-life, e.g., specific bio-molecules, PCR, and fluorescent antibody markers, in biological literature there is little formal work on recognizing life {\it per se}. However, in astrobiological studies, defining criterion to detect life in an arbitrary chemical environment is still an area of active research. For example, McKay~\cite{McKay04} proposes the ``Lego Principle", according to which biological processes use only a distinctive subset of possible organic molecules, whereas, abiotic processes have relatively smooth distribution over all organic molecules.  
%Though some work on defining and developing methods to analyze genotype space structure based upon the macroscopic observations on phenotype characteristics (mainly morphological and reactive characteristics)~\cite{Garay98,Gamez03,Lopez04}. 
Also, recently, Melkikh~\cite{Alexey08} has considered the computational analogue of the problem of the origin of species in a genome space under DNA Computing framework~\cite{dna_computing} and has shown that in absence of a priori information about the possible species of organisms, the underlying computational problem is NP hard and therefore cannot be solved in polynomial number of steps.   

To the author's knowledge, there is not much work focusing on the observation process for AES studies reported in literature. Though there exist proposals to define `numerical parameters' or `statistics' \cite{Bedau99} to recognize life in a model. However, it is not clear whether there can be simple numerical definitions capturing the essence of life in arbitrary models and even if so does not seem to be the case with the existing proposals. The difficulty arises out of intricate nature of reproduction and selection inevitably involving non trivial identification of the population of evolving entities. Langton defined in~\cite{Langton91} a quantitative metric, called \emph{lambda}
parameter to detect life in any generic one dimensional cellular automata model based upon the characteristics of its transition rules. This lambda parameter based analysis is based upon the assumption that any self organizing system can be treated as living and does not consider population centric evolutionary behavior as characteristic of life. In \cite{Bedau98} there is a discussion on the classification of long term adaptive evolutionary dynamics in natural and artificially evolving systems. This they achieve by defining activity statistics for the components, which quantifies the adaptive value of components (characteristics in our model). They employ similar mechanism as of ours by associating activity counters (tags) with all the components present in the system during simulation.

%%%%%%%%%%%%%%%%%% START - Added Jun 22, 2007

Self-reproduction, which has a long history of research starting from the late 1950s~\cite{Burks70,Sipper98,Freitas04} has evaded precise formal definition applicable to a wide range of models~\cite{ND98} in the sense of observable characterization of the reproducing entities. Though there is enough work on mathematical analysis of replication dynamics (fecundity) in various natural systems or the systems where environmental constraints governing the rate of reproductions are known (see for overview~\cite[Ch. 5]{Freitas04}.) In some of the discussions related to self-replication in cellular automata models~\cite{Samaya98b,Morita98}, formalizations of reproducing structures are presented, but they do not attempt to provide a general framework for observing reproduction or other components
of evolutionary processes. These attempts at formalizing reproduction in CA models are reminiscent of our definition of
entities (loops) in Section~\ref{sec:observe}. 

In other work~\cite{Misra06}, we proposed a multi-set theoretic framework to formalize self reproduction (with mutations) in dynamical hierarchies in terms of hierarchal multi-sets and corresponding inductively defined meta-reactions. The ``self" in ``self-reproduction" was defined in terms of {\it observed structural equivalences} between entities. We also introduced constraints to distinguish a simple ``collection" of reacting entities from genuine cases of ``emergent" organizational structures consisting of {\it semantically coupled} multi-set of
entities.

%%%%%%%%%%%%%%%%%%%%%%%%%%%%%%%%%%%%%%%%%%%%%%%%%%%%%%%%%%%%%%%%%%%%%%%%%%%%%%
%\include{conclusion}

\section{Conclusion} \label{sec:concluding}

\subsection{General Remarks}

%%%%%%%%%%%%%\subsubsection*{Contributions}

This work is an attempt to bring the implicitly assumed notion of {\sf observations to be carried out independent of the underlying structure of the model} into main focus of AES studies by placing observations into distinct formal platform. The work can also be seen as an attempt to fulfill the need for explicitly separating the design of the AES models from the abstractions used to describe their dynamic progression and the discovery of life-like evolutionary behavior. We consider evolutionary behavior, as one such characteristic property of life-like phenomena and formally capture basic components for observing evolutionary behavior in AES models.  We formally elaborate in algebraic terms necessary and sufficient steps for an observation process, to be employed by an AES researcher upon the time progressive simulations of his model universe. The observation process as specified in this work may be carried out manually or can be alternatively algorithmically programmed and integrated within the model itself.

To define an inference process we specify necessary conditions, as axioms, which must be satisfied by the outcomes of observations made upon the model universe in order to infer the extent to which evolutionary phenomena is present in the model.

Computational complexity theoretic analysis of the expert guided entity recognition as well as establishing evolutionary behavior  reveals that an automated discovery of life-like phenomena could be computationally intensive in practice and techniques from the fields of  pattern recognition and machine learning in general can be of significant 
use for such purposes. 

The framework design and the case study analysis on Langton loops and Algorithmic Chemistry also provide clues for AES research designs so that to be better able to witness evolutionary phenomena in the model during its simulations. This is discussed next:

{\sf Sufficient Reproduction with Variation:} Existence of potentially large set of reproducing entities with significant variation in their characteristics is essential for the presence of evolutionary phenomena in the model. Quite often this hinges upon the choice of reaction rules or the semantics of the model and indeed it is a serious challenge for any model designer to define the reaction semantics which permits potentially large set of reproducers with significant variation. Another interesting aspect is that these reproducers must be relatively closely related to each other under the reaction semantics. This means that sufficiently many variations of reproducers should also be reproducers in themselves otherwise the axiom of preservation of reproduction under mutation will not effectively hold in the model and most of the reproducers would have to appear \emph{de novo} during simulations. We encounter this problem in both of the case studies of Langton loops and AlChemy. In case of Langton loops, any kind of change in the loop structure would cause cessation of replication. The work on designing Evoloops is therefore based upon the redefinition of the reaction semantics or transition rules which permit variation in replicating loops. Similarly in the case of Algorithmic chemistry, almost all of the single replicating $\lambda$ terms arise \emph{de novo} and their variations do not replicate under $\beta$ reaction semantics.

{\sf Measurable Rates of Reproduction:} The model should be designed such that it is possible to impose some valid measure of determining the rates of reactions which in turn can be used to estimate differences in the rates of reproduction of different entities. This measurement of reproductive rates must be independent of the updation algorithm which selects entities for reaction. Therefore it can be argued that the models, where all (reproductive) reactions take place in a single step would be difficult to observe for natural selection, which works only when different entities reproduce at different rates.  For example, it is not possible to infer differences in the rates of reproduction  among different reproducing elementary hypercycles in the Algorithmic Chemistry consisting of the same number of $\lambda$ terms because every reaction between any two $\lambda$ terms occurs in a single step. On the other hand natural selection can be observed in case of Evoloops precisely because different types of loops consisting of different number of cells reproduce at different rates based upon the number of state transitions.
%\end{itemize}

\subsection{Limitations}
\label{sec:limit}

The presented framework does not place direct emphasis on certain concepts widely associated with AES studies. In our current setting the notion of ``strong emergence'' is only implicitly present and ``the element of surprise'' \cite{Bass97} often associated with emergence is not immediate in the framework. Similarly ``the element of autonomy'' of emergent entities with respect to the underlying micro-level dynamics is not addressed in our framework. Indeed, the spirit of the high level of observations and corresponding abstractions upon which the framework rests, may preclude such inferences. Nonetheless the idea of ``weak emergence" \cite{Bedau97}, which lays emphasis on the simulations of the model for the emergence of high level macro-states is fundamental to our framework, where the observation process is by default based upon the simulations of the model and not on analytical derivations. 

{\bf Problem of False Positives:} Terms `false positive' and `false negative' are used in general to highlight the limitations of `observation - inference' based methodologies. \emph{False positive} refers to a situation where observations and consequent inferences on a model result into a claim of the presence of certain property in the model which actually does not exist, while \emph{false negative} is used to refer the situation where specific observations do not yield required support for the presence of certain property, which is actually present in the model. False negatives are usually the result of incomplete observations while false positives indicate arbitrariness in the observation/inference process.  Like any other generic specification framework, current framework also suffers from the weakness of administering false positives. False negatives are also possible, whereby an observation process is defined such that it does not infer evolution, even though there might actually be evolution present in the model. The case of false negatives, however will not concern us since our focus is to establish the presence of evolution in a given AES model and not whether it is absent with respect to certain observations. On the other hand, the problem of false positives stems due to the fact that the framework permits arbitrariness in the definition of entities and their causal relationships. In case of causal relationships, they are defined in the framework as observation dependent and might not be consistent with the underlying micro-level dynamics of the model (Section~\ref{reproformalized}). This arbitrariness might give rise to false claims on the presence of evolutionary components in the model though there might be none actually.

\subsection{Further work}
\label{sec:further-work}

Framework can be further extended in several interesting directions, including the following: 
\begin{itemize}
\item We limit our attention to only those observations having evolutionary significance, though observations can also be made upon the model to establish other macro level  emergent properties including metabolic processes~\cite{ac:BFF92}, complexity~\cite{ac:adami00}, self organization~\cite{ac:Kau93}, autonomy and autopoisis~\cite{Zeleny81}, an impromptu response to the environmental inputs, compartmentalization, adaptability, and selectivity~\cite{Daniel02}.
\item {\it Experimentation} is yet another very important aspect of AES studies and there exists ample scope for the \textit{algorithmic or programmed  experimentation} with the AES models. Hence an interesting direction where current framework can be extended is by considering the {\it experimental-observational processes} with algorithmic experimentation and consequent discovery of life-like evolutionary behavior in AES studies. 

For example, Lohn et al. developed an automatic discovery system for the state-transition rules of CA such that a given structure could replicate itself in the CA space~\cite{Lohn97automaticdiscovery}. With the given initial structure, the state-transition rules would be evolved using the genetic algorithms~\cite{GA97} to make the structure self-replicate.
\item {\it Tighter Bounds.} Associated computational complexity analysis can be further refined and strengthened by considering classes of models for which most of the parameters have precise bounds compared to the generic analysis presented in this paper.    
\item Capturing the essence of \emph{strong emergence} by considering several observational processes at different organizational levels of the model.
\item Studying overlapping evolutionary processes - examples from real life include co-evolution, and sexual selection versus environmental selection.
\item Further constraints to overcome the problem of false positives by limiting as to what could be claimed as observed.
\end{itemize}

\bibliographystyle{plain}
\bibliography{alife}

%\end{thebibliography}

%\bibliography{alife}
%\bibliographystyle{elsart-num}   % number style
%\bibliographystyle{elsart-harv}   % harvard style - author (year)

%%%%%%%%%%%%%%%%%%%%%%%%%%%%%%%%%%%%%%%%%%%%%%%%%%%%%%%%%%%%%%%%%%%%%%%%%%%%%%
\end{document}